\title[Relative Information Gain and Gaussian Process Regression]{Relative Information Gain and Gaussian Process Regression}
\newtheorem{assumption}{Assumption}
\newcommand{\bs}[1]{\boldsymbol{#1}}
\begin{document}

\maketitle

\begin{abstract}%
The sample complexity of estimating or maximising an unknown function in a reproducing kernel Hilbert space is known to be linked to both the effective dimension and the information gain associated with the kernel. While the information gain has an attractive information-theoretic interpretation, the effective dimension typically results in better rates. We introduce a new quantity called the relative information gain, which measures the sensitivity of the information gain with respect to the observation noise. We show that the relative information gain smoothly interpolates between the effective dimension and the information gain, and that the relative information gain has the same growth rate as the effective dimension. In the second half of the paper, we prove a new PAC-Bayesian excess risk bound for Gaussian process regression. The relative information gain arises naturally from the complexity term in this PAC-Bayesian bound. We prove bounds on the relative information gain that depend on the spectral properties of the kernel. When these upper bounds are combined with our excess risk bound, we obtain minimax-optimal rates of convergence.
\end{abstract}

\begin{keywords}%
Gaussian processes, kernel methods, PAC-Bayesian bounds
\end{keywords}

\section{Introduction}

We consider the model
\begin{equation}
y_i = f^{\star}(x_i) + \eps_i\,,\label{eqn:model}
\end{equation}
with inputs $x_1, \dots, x_n \in \gX$ and real-valued responses $y_1, \dots, y_n$. The target function $\fstar: \gX \to \sR$ is an unknown function in a reproducing kernel Hilbert space (RKHS) $\gH$ with reproducing kernel $k: \gX \times \gX \to \sR$. The noise variables are assumed to be independent, centred and $\sigma$-sub-Gaussian.

The model in \eqref{eqn:model} has been extensively studied in the regression setting, in which a sample $(x_i,y_i)_{i=1}^{n}$ is used to estimate either $\fstar$ or the vector of function values $\vfstar_n := [\fstar(x_1), \dots, \fstar(x_n)]^{\top}$ \citep{gyorfi2002distribution, tsybakov2009introduction}. Kernel ridge regression and Gaussian process regression are among the most popular approaches for the regression problem. For a regularisation parameter $\eta > 0$, the kernel ridge regression estimate is defined as
\begin{equation*}
\wh f := \argmin_{f \in \gH}\bigg\{\sum_{i=1}^{n}(f(x_i) - y_i)^2 + \frac{1}{\eta}\|f\|_{\gH}^2\bigg\}\,.
\end{equation*}
It is well-known that there is a closed-form expression for $\wh f$ \citep{scholkopf2002learning}, which is
\begin{equation*}
\wh f(x) = \vk_n^{\top}(x)(\mK_n + \tfrac{1}{\eta}\mI)^{-1}\vy_n\,.
\end{equation*}
Here, $\vk_n(x) := [k(x, x_1), \dots, k(x,x_n)]^{\top}$ is the vector of kernel comparisons between $x$ and the inputs $x_1, \dots, x_n$, $\mK_n := \{k(x_i, x_j)\}_{i,j=1}^{n}$ is the usual $n\times n$ kernel matrix and $\vy_n := [y_1, \dots, y_n]^{\top}$ is the response vector. Let us define
\begin{equation*}
\wh \vf_n := [\wh f(x_1), \dots, \wh f(x_n)]^{\top} = \mK_n(\mK_n + \tfrac{1}{\eta}\mI)^{-1}\vy_n
\end{equation*}
as the vector of fitted values. Since each fitted value is a linear combination of the responses $y_1, \dots, y_n$, kernel ridge regression is an example of a linear smoother. This means that we can write $\wh\vf_n = \mL_n\vy_n$. In this case, $\mL_n = \mK_n(\mK_n + \tfrac{1}{\eta}\mI)^{-1}$. The matrix $\mL_n$ is called the smoothing matrix, and its trace, denoted by $\mathrm{tr}(\mL_n)$, is called the effective degrees of freedom \citep{wasserman2006all}. We will revisit this quantity shortly.

Gaussian process regression is usually presented as a Bayesian method. It is assumed that $\fstar \sim \gG\gP(m(x), k(x, x^{\prime}))$ is a random draw from  a Gaussian process prior with mean function $m: \gX \to \sR$ and kernel (or covariance) function $k:\gX \times \gX \to \sR$. This means that for any fixed sequence of inputs $x_1, \dots, x_n$, the random vector $\vfstar_n$ is Gaussian, and in particular, $\vfstar_n \sim \gN(\vm_n, \mK_{n})$, where $\vm_n := [m(x_1), \dots, m(x_n)]^{\top}$. If the noise variables are Gaussian with variance $1/\eta$, then the Bayesian posterior (say $Q$) is also a Gaussian process \citep{williams2006gaussian}. In particular, if $\fstar$ is drawn from a zero-mean prior, i.e.\, $\fstar \sim \gG\gP(0, k(x, x^{\prime}))$, then
\begin{equation*}
Q = \gG\gP(\vk_n^{\top}(x)(\mK_n + \tfrac{1}{\eta}\mI)^{-1}\vy_n, k(x, x^{\prime}) - \vk_n^{\top}(x)(\mK_n + \tfrac{1}{\eta}\mI)^{-1}\vk_n(x^{\prime}))\,.
\end{equation*}
Note that the mean function of $Q$ is identical to the kernel ridge regression estimate. It turns out that the covariance function of $Q$, or rather, the covariance matrix of the marginal distribution of $Q$ at the points $x_1, \dots, x_n$, is closely related to the effective degrees of freedom $\mathrm{tr}(\mL_n)$ (cf. Lemma \ref{lem:cov_as_deff}). Depending on the context in which it appears, the parameter $\eta$ is given various names. Throughout the rest of the paper, we will refer to $\eta$ as a learning rate.

The model in \eqref{eqn:model} has also been studied in the bandit setting \citep{srinivas2010gaussian, valko2013finite}. In kernelised bandits (also known as Gaussian process bandits), $\fstar$ is called the reward function. The objective is to sequentially maximise the reward function by querying it at a sequence of points $x_1, x_2, \dots$ and receiving the random rewards $y_1, y_2, \dots$, which can be used to inform the selection of future query points. Since the reward function is initially unknown, it must be estimated and maximised simultaneously. For this reason, many kernelised bandit algorithms use kernel ridge regression or Gaussian process regression as a subroutine for estimation.

In both kernel regression and kernelised bandits, there are two widely used notions of complexity, called the effective dimension and the information gain, which are designed to characterise the sample complexity of estimating or maximising the function $\fstar$. For any $n \geq 1$ and any learning rate $\eta > 0$, the effective dimension \citep{zhang2005learning} is
\begin{equation*}
d_n(\eta) := \mathrm{tr}(\mK_n(\mK_n + \tfrac{1}{\eta}\mI)^{-1})\,.
\end{equation*}
Note that $d_n(\eta)$ is identical to the effective degrees of freedom of the kernel ridge regression estimate. For any $n \geq 1$ and any learning rate $\eta \geq 0$, the information gain is defined as
\begin{equation*}
\gamma_n(\eta) := \frac{1}{2}\log\det(\eta\mK_n + \mI)\,.
\end{equation*}
As its name would suggest, the information gain has an information-theoretic interpretation. If in \eqref{eqn:model}, $\fstar \sim \gG\gP(0, k(x, x^{\prime}))$ and $\eps_i \sim \gN(0, 1/\eta)$, then $\gamma_n(\eta)$ is equal to the mutual information $I(\vy_n; \fstar)$ between $\fstar$ and the response vector $\vy_n$. This version of the model in \eqref{eqn:model}, in which both $f^{\star}$ and $\epsilon_1, \dots, \epsilon_n$ are random, is a special case of a Gaussian channel (cf. Chapter 9 in \citealp{cover2006elements}). In the study of Gaussian channels, $\gamma_n(\eta)$ is an object of considerable interest.

It is known that effective dimension and the information gain are within logarithmic factors of each other \citep{calandriello2019gaussian, zenati2022efficient}. In particular, the growth-rate (in $n$) of the effective dimension is never larger than that of the information gain, whereas the growth-rate of the information gain can be larger than that of the effective dimension by a factor of $\log(n)$. Since both quantities can be used to characterise the sample complexity of estimating or maximising $\fstar$, it is natural to ask whether there are any other connections between them. One can also ask if there is a single quantity that has both the growth-rate of the effective dimension and an attractive information-theoretic interpretation like that of the information gain. We investigate these questions.

\subsection{Contributions}

We introduce a new quantity, called the relative information gain, which is the difference between the information gain at two different learning rates. As the smaller of the two learning rates is varied, a scaled version of the relative information gain smoothly interpolates between the effective dimension and the information gain, recovering each of them at the extremes. Moreover, the relative information gain matches the growth-rate of the effective dimension.

We demonstrate that the relative information gain is a reasonably natural notion of complexity for the model in \eqref{eqn:model}, and not just an artificial quantity that is designed to interpolate between the effective dimension and the information gain. In particular, we derive a localised PAC-Bayesian excess risk bound for Gaussian process regression (cf. Theorem \ref{thm:rig_excess_risk}) and we find that the relative information gain arises naturally from the complexity term in this PAC-Bayesian bound.

Finally, we show that if $k$ is a Mercer kernel, then the relative information gain can be upper bounded based on the rate at which its eigenvalues decay. When these upper bounds are combined with our excess risk bound in Theorem \ref{thm:rig_excess_risk}, we obtain minimax-optimal rates of convergence.

\subsection{Outline}

The rest of this paper is organised as follows. Section \ref{sec:related} describes some related work on rates of convergence for kernel ridge regression and Gaussian process regression, and on PAC-Bayesian bounds for Gaussian processes. In Section \ref{sec:deff_ig_rig}, we define the relative information gain and establish some of its properties. In Section 4, we state and sketch the proof of a PAC-Bayesian excess risk bound. In Section 5, we provide upper bounds on the relative information gain and then use them to obtain rates of convergence with explicit dependence on the sample size.

\section{Related Work}
\label{sec:related}

\subsection{Rates of Convergence for Kernel Ridge Regression and Gaussian Process Regression}

For the setting we consider, in which $\fstar$ is a fixed function in the RKHS $\gH$, the optimal rates of convergence for kernel ridge regression are well-understood. The rates of convergence that we obtain are the same as those in \citep{caponnetto2007optimal, steinwart2009optimal, dicker2015kernel, dicker2017kernel}. These works impose similar conditions on the eigenvalues and eigenfunctions of the kernel. The rates of convergence for Gaussian process regression are also well-understood in this setting. The rate at which the (Bayesian or Gibbs) Gaussian process posterior contracts has been the subject of intense study \citep{seeger2008information, castillo2008lower, van2008rates, van2011information, suzuki2012pac, castillo2014bayesian, pati2015optimal, nickl2017nonparametric}. Several of these results have been used to derive rates of convergence for the excess risk \citep{van2011information, suzuki2012pac}. In summary, it is already understood that both kernel ridge regression and Gaussian process regression can achieve the minimax optimal rate of convergence for the setting that we consider. Our work provides a new way to prove optimal excess risk bounds using the relative information gain.

\subsection{PAC-Bayesian Bounds for Gaussian Processes}

PAC-Bayesian bounds originate from work by \citet{shawe1997pac} and \citet{mcallester1998some}. We refer the reader to \citet{alquier2024user} or  \citet{hellstrom2025generalization} for a recent overview. Early work on PAC-Bayesian analysis of Gaussian processes focused on classification \citep{seeger2002pac, seeger2003bayesian}. More recently, PAC-Bayesian bounds were used to fit Gaussian process predictors for either classification or regression \citep{reeb2018learning}. The main focus of the aforementioned works was on obtaining numerically tight risk certificates, as opposed to obtaining the best rates of convergence. \citet{suzuki2012pac} derived PAC-Bayesian excess risk bounds for Gaussian process regression which match the optimal rate of convergence under certain conditions. Despite having a similar rate of convergence, these bounds are quite different to ours. In particular, the rate of convergence is determined by the concentration function used in \citet{van2008rates, van2011information}, as opposed to a notion of effective dimension or information gain. \citet{alquier2020concentration} used PAC-Bayesian bounds to derive rates of convergence for Variational Bayes approximations of Gibbs distributions (such as Gaussian processes). When applied to nonparametric regression over Sobolev ellipsoids, the rate of convergence is optimal up to logarithmic factors. More recent work \citep{khribch2024convergence} has shown that optimal rates of convergence for a range of statistical models (including the Gaussian sequence model) can be obtained via localised PAC-Bayesian bounds. \citet{khribch2024convergence} highlight the connection between localised PAC-Bayesian bounds and mutual information bounds, although the relative information gain does not make an appearance.

\section{Effective Dimension, Information Gain and Relative Information Gain}
\label{sec:deff_ig_rig}

We describe a connection between the effective dimension and the information gain, and we introduce the relative information gain. First, we notice that both the effective dimension and the information gain can be expressed in terms of the eigenvalues $(\lambda_i)_{i=1}^{n}$ of $\mK_n$. In particular, by Lemma \ref{lem:deff_eqdef} and Lemma \ref{lem:ig_eqdef},
\begin{equation}
d_n(\eta) = \sum_{i=1}^{n}\frac{\eta\lambda_i}{1 + \eta\lambda_i}\,, \qquad \gamma_n(\eta) = \frac{1}{2}\sum_{i=1}^{n}\log(1 + \eta\lambda_i)\,.\label{eqn:deff_ig_eig}
\end{equation}
Starting from these expressions, one can easily verify that the effective dimension is related to the derivative of the information gain.
\begin{proposition}
For all $n \geq 1$ and $\eta \geq 0$,
\begin{equation*}
d_n(\eta) = 2\eta\gamma_n^{\prime}(\eta)\,.
\end{equation*}
\label{pro:deff_diff}
\end{proposition}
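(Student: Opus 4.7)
The plan is to simply start from the eigenvalue expressions in \eqref{eqn:deff_ig_eig}, which are provided to us by Lemmas \ref{lem:deff_eqdef} and \ref{lem:ig_eqdef}, and differentiate term by term. Since $\gamma_n(\eta) = \tfrac{1}{2}\sum_{i=1}^{n}\log(1 + \eta\lambda_i)$ is a finite sum of smooth functions of $\eta$, differentiation and summation commute, and the derivative of each summand is $\tfrac{1}{2}\cdot\tfrac{\lambda_i}{1+\eta\lambda_i}$.

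Multiplying by $2\eta$ immediately yields $\sum_{i=1}^{n}\tfrac{\eta\lambda_i}{1+\eta\lambda_i}$, which is exactly the eigenvalue expression for $d_n(\eta)$. So the argument is essentially a one-line calculation, and there is no real obstacle. The only thing to be mindful of is that the eigenvalues $\lambda_i$ of $\mK_n$ do not themselves depend on $\eta$, so they are treated as constants in the differentiation; this is obvious from the definition $\mK_n = \{k(x_i, x_j)\}_{i,j=1}^{n}$, which does not involve $\eta$.

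A complementary viewpoint, which might be worth noting in one sentence, is that the identity can also be seen as a consequence of the matrix identity $\tfrac{d}{d\eta}\log\det(\eta\mK_n + \mI) = \mathrm{tr}((\eta\mK_n + \mI)^{-1}\mK_n)$, so that $2\gamma_n'(\eta) = \mathrm{tr}(\mK_n(\eta\mK_n + \mI)^{-1})$ and multiplying by $\eta$ recovers the definition $d_n(\eta) = \mathrm{tr}(\mK_n(\mK_n + \tfrac{1}{\eta}\mI)^{-1})$. Either route gives the result directly, so I would present only the eigenvalue computation for brevity.
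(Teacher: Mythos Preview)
Your proposal is correct and matches the paper's approach exactly: the paper does not give a formal proof but simply notes that, starting from the eigenvalue expressions in \eqref{eqn:deff_ig_eig}, one can easily verify the identity, which is precisely the term-by-term differentiation you carry out.
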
\vspace{-7mm}
This identity is equivalent to a known result for Gaussian channels, which states that the derivative of the information gain with respect to the signal-to-noise ratio is equal to half the minimum mean squared error \citep{guo2005mutual}. In our formulation, $\eta$ is the signal-to-noise ratio, and it turns out that the minimum mean squared error is equal to $d_n(\eta)/\eta$ (cf.\, Appendix \ref{sec:mmse_deff}). Thus Proposition 1 can be obtained from Theorem 1 in \citet{guo2005mutual}.

The identity in Proposition \ref{pro:deff_diff} gives us an information-theoretic interpretation of the effective dimension. Namely, the effective dimension is a measure of how sensitive the information gain is to the learning rate (or the variance of the noise in the responses). In addition, Proposition \ref{pro:deff_diff} suggests that we can obtain a reasonable approximation of the effective dimension by taking the difference between the information gain at two different learning rates. For a sample size $n \geq 1$, and two learning rates $\eta > \beta \geq 0$, we define the relative information gain as
\begin{equation*}
\gamma_n(\eta, \beta) := \gamma_n(\eta) - \gamma_n(\beta)\,.
\end{equation*}
The relative information gain can be interpreted as the additional information that would be gained about $\fstar$ if the variance of the noise in the responses was reduced from $1/\beta$ to $1/\eta$. The bottom toast inequality (the bottom half of the sandwich inequality) in Proposition 5 of \citet{calandriello2019gaussian} states that $d_n(\eta)$ is bounded by $2\gamma_n(\eta)$. We will now show that a scaled version of the relative information gain smoothly interpolates between these two quantities. Using Proposition \ref{pro:deff_diff}, and the definition of the derivative, we can express the effective dimension as (the limit of) a scaled version of the relative information gain. In particular, for any learning rates $\eta > \beta > 0$,
\begin{equation}
d_n(\eta) = \lim_{\eta_0 \to \eta^-}\frac{2\eta}{\eta-\eta_0}\gamma_n(\eta, \eta_0) \approx \frac{2\eta}{\eta-\beta}\gamma_n(\eta, \beta)\,.\label{eqn:rig_limit}
\end{equation}
If $\beta=0$, then the right-hand side of \eqref{eqn:rig_limit} is equal to twice the information gain, although the approximation of the limit is rather crude in this case. This establishes that the scaled information gain recovers both the effective dimension and the information gain as special cases. The following proposition shows that these are the two extreme cases.
\begin{proposition}
For all $n \geq 1$ and $\eta > \beta \geq 0$,
\begin{equation*}
d_{n}(\eta) \leq \frac{2\eta}{\eta - \beta}\gamma_n(\eta, \beta) \leq 2\gamma_n(\eta)\,.
\end{equation*}
The first inequality is sharp in the limit as $\beta$ tends to $\eta$ from below and the second inequality is sharp when $\beta = 0$.
\label{pro:interp}
\end{proposition}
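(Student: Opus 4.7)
The plan is to diagonalise. Using the eigenvalue expressions in \eqref{eqn:deff_ig_eig}, both inequalities become sums over the eigenvalues $\lambda_1, \dots, \lambda_n \geq 0$ of $\mK_n$:
\begin{equation*}
\sum_{i=1}^{n}\frac{\eta\lambda_i}{1+\eta\lambda_i} \;\leq\; \frac{\eta}{\eta-\beta}\sum_{i=1}^{n}\log\frac{1+\eta\lambda_i}{1+\beta\lambda_i} \;\leq\; \sum_{i=1}^{n}\log(1+\eta\lambda_i)\,.
\end{equation*}
So it suffices to prove both inequalities term by term, i.e.\ to prove the corresponding scalar inequality for an arbitrary $\lambda \geq 0$.

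For the upper bound, I would rearrange the per-term inequality to
$$\tfrac{1}{\eta}\log(1+\eta\lambda) \;\leq\; \tfrac{1}{\beta}\log(1+\beta\lambda)\qquad (\beta>0),$$
which is equivalent to the well-known fact that $x \mapsto \log(1+x)/x$ is non-increasing on $(0,\infty)$ (applied to $x = \eta\lambda$ and $x = \beta\lambda$). The case $\beta = 0$ can be obtained by taking $\beta \to 0^+$, and in this case both sides of the target inequality coincide with $2\gamma_n(\eta)$, giving the claimed sharpness.

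For the lower bound, I would set $a = 1 + \eta\lambda$ and $b = 1 + \beta\lambda$, so $a - b = (\eta-\beta)\lambda$, and rewrite the per-term inequality as
$$1 - \tfrac{b}{a} \;\leq\; \log\tfrac{a}{b}\,,$$
which is the standard inequality $1 - t \leq -\log t$ for $t = b/a \in (0,1]$ (a rearrangement of $\log x \leq x - 1$). For sharpness as $\beta \to \eta^-$, I would invoke Proposition \ref{pro:deff_diff} together with the definition of the derivative, as already noted in \eqref{eqn:rig_limit}:
\begin{equation*}
\lim_{\beta \to \eta^-}\frac{2\eta}{\eta-\beta}\gamma_n(\eta,\beta) \;=\; 2\eta\,\gamma_n^{\prime}(\eta) \;=\; d_n(\eta)\,.
\end{equation*}

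No step is particularly delicate: the main obstacle is really just identifying the right per-eigenvalue reductions. Once the two scalar inequalities are isolated, they reduce to the monotonicity of $\log(1+x)/x$ and to the classical $\log x \leq x - 1$, respectively.
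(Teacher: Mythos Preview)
Your proof is correct. Both you and the paper diagonalise via \eqref{eqn:deff_ig_eig} and reduce to per-eigenvalue scalar inequalities, but the route from there differs. You treat the two bounds separately: the upper bound via monotonicity of $x\mapsto\log(1+x)/x$, and the lower bound via $\log x\leq x-1$ after the substitution $t=b/a$. The paper instead proves a single monotonicity statement, namely that $\beta\mapsto\frac{1}{\eta-\beta}\log\frac{1+\eta\lambda}{1+\beta\lambda}$ is decreasing on $[0,\eta)$ (Lemma~\ref{lem:rig_decr}, whose proof also rests on $\log(1+x)\leq x$), and then reads off both inequalities as the values at the endpoints $\beta=0$ and $\beta\to\eta^-$. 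The paper's argument thus delivers the slightly stronger conclusion that the scaled relative information gain is monotone in $\beta$ throughout $[0,\eta)$, not merely sandwiched; your argument is a touch more direct, since each scalar inequality is a one-line reduction to a textbook fact and no derivative computation is needed. The sharpness claims are handled the same way in both proofs, via \eqref{eqn:rig_limit}.
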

\begin{proof}
The sharpness of each inequality follows from \eqref{eqn:rig_limit}. For any $\lambda \geq 0$, the mapping $\beta \mapsto \frac{1}{\eta - \beta}\log\frac{1 + \eta\lambda}{1 + \beta\lambda}$ is decreasing on $[0, \eta)$ (cf. Lemma \ref{lem:rig_decr}). From this and \eqref{eqn:deff_ig_eig}, it follows that $\beta \mapsto \frac{2\eta}{\eta - \beta}\gamma_n(\eta, \beta)$ is also decreasing on $[0, \eta)$. Thus the scaled information gain is bounded between $d_n(\eta)$ and $2\gamma_n(\eta)$ for all $\beta$ in the interval $[0, \eta)$.
\end{proof}
The full sandwich inequality in Proposition 5 of \citet{calandriello2019gaussian} shows that the effective dimension and the information gain are within logarithmic factors of each other. In particular,
\begin{equation*}
d_{n}(\eta) \leq 2\gamma_n(\eta) \leq (1 + \log(1 + \eta\lambda_{\max}))d_n(\eta)\,,
\end{equation*}
where $\lambda_{\max} = \max_{i}\lambda_i$. Since $\lambda_{\max}$ is at most of order $n$, this implies that $\gamma_n(\eta) = \gO(d_n(\eta)\log(n))$. We prove that the effective dimension and the relative information gain can form a thinner sandwich.
\begin{proposition}
For all $n \geq 1$ and $\eta > \beta > 0$,
\begin{equation*}
d_{n}(\eta) \leq \frac{2\eta}{\eta - \beta}\gamma_n(\eta, \beta) \leq \frac{\eta}{\beta}d_{n}(\eta)
\end{equation*}
\label{pro:sandwich}
\end{proposition}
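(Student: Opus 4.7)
The left inequality is already contained in Proposition \ref{pro:interp}, so the plan is to focus on the right inequality, namely
\[
\frac{2\eta}{\eta - \beta}\gamma_n(\eta, \beta) \leq \frac{\eta}{\beta} d_n(\eta).
\]
Using the spectral expressions in \eqref{eqn:deff_ig_eig}, both sides are sums over the eigenvalues $(\lambda_i)_{i=1}^{n}$ of $\mK_n$, so I would reduce to proving the per-eigenvalue inequality
\[
\frac{\eta}{\eta - \beta}\log\frac{1 + \eta\lambda}{1 + \beta\lambda} \leq \frac{\eta}{\beta}\cdot\frac{\eta\lambda}{1 + \eta\lambda}
\]
for every $\lambda \geq 0$, and then sum over $i$.

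The key step is to dispose of the logarithm with a well-chosen elementary bound. Writing $\log\frac{1+\eta\lambda}{1+\beta\lambda} = \log\bigl(1 + \frac{(\eta - \beta)\lambda}{1 + \beta\lambda}\bigr)$ and applying $\log(1 + x) \leq x$ for $x \geq 0$, I obtain
\[
\log\frac{1 + \eta\lambda}{1 + \beta\lambda} \leq \frac{(\eta - \beta)\lambda}{1 + \beta\lambda}.
\]
Dividing by $\eta - \beta > 0$ reduces the task to verifying
\[
\frac{\lambda}{1 + \beta\lambda} \leq \frac{1}{\beta}\cdot\frac{\eta\lambda}{1 + \eta\lambda},
\]
which, after cross-multiplication (all factors are positive), is equivalent to $\beta(1 + \eta\lambda) \leq \eta(1 + \beta\lambda)$, i.e.\ $\beta \leq \eta$, which holds by hypothesis. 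Multiplying back by $\eta$ and summing over the eigenvalues then yields the claim.

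I do not expect any serious obstacle here: the whole argument is a termwise reduction combined with the elementary bound $\log(1 + x) \leq x$. The only subtle point is picking the right bound on the log, since a tighter but less tractable estimate (such as $\log(1+x) \geq x/(1+x)$, or invoking the monotonicity from Lemma \ref{lem:rig_decr}) would complicate the subsequent reduction. Using $\log(1 + x) \leq x$ is the natural choice because the factor $1/(1 + \beta\lambda)$ it produces is precisely what combines with $1/\beta$ and $\eta\lambda/(1+\eta\lambda)$ to collapse the inequality to $\beta \leq \eta$.
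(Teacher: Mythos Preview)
Your proposal is correct and follows essentially the same approach as the paper: both invoke Proposition \ref{pro:interp} for the left inequality, reduce the right inequality to a per-eigenvalue statement via \eqref{eqn:deff_ig_eig}, apply $\log(1+x)\le x$ to bound $\log\frac{1+\eta\lambda}{1+\beta\lambda}$ by $\frac{(\eta-\beta)\lambda}{1+\beta\lambda}$, and then finish with the elementary inequality $\frac{1+\eta\lambda}{1+\beta\lambda}\le\frac{\eta}{\beta}$ (the paper writes this via the factorisation $\frac{\eta\lambda}{1+\beta\lambda}=\frac{1+\eta\lambda}{1+\beta\lambda}\cdot\frac{\eta\lambda}{1+\eta\lambda}$, you write it as a cross-multiplication reducing to $\beta\le\eta$, which is the same computation).
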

\begin{proof}
The first inequality follows from Proposition \ref{pro:interp}. Using the inequality $\log(1 + x) \leq x$ for $x \geq 0$, we obtain
\begin{equation*}
\frac{\eta}{\eta - \beta}\sum_{i=1}^{n}\log\frac{1 + \eta\lambda_i}{1 + \beta\lambda_i} \leq \sum_{i=1}^{n}\frac{\eta\lambda_i}{1 + \beta\lambda_i} = \sum_{i=1}^{n}\frac{1 + \eta\lambda_i}{1 + \beta\lambda_i}\frac{\eta\lambda_i}{1 + \eta\lambda_i} \leq \frac{\eta}{\beta}\sum_{i=1}\frac{\eta\lambda_i}{1 + \eta\lambda_i}\,.
\end{equation*}
The second inequality now follows from \eqref{eqn:deff_ig_eig}.
\end{proof}
Proposition \ref{pro:sandwich} tells us that the effective dimension and the relative information gain have the same growth-rate whenever $\beta = c\eta$, for some constant $c \in (0,1)$.

\subsection{Illustrative Examples}
\label{sec:examples}

We demonstrate that the information gain and the relative information gain really can have different growth rates. First, we consider a simple example in which we can easily verify that $\gamma_n(\eta, \beta) = \gO(1)$ and $\gamma_n(\eta) = \Omega(\log(n))$. We assume that the inputs $x_1, \dots, x_n \in \gX$ are elements of a finite set $\gX$ with cardinality $|\gX| = k < n$, and that the kernel function satisfies $k(x_i, x_i) = 1$ for all $i \in [n]$. First, we upper bound the effective dimension. Suppose that $\mK_n$ has $m$ non-zero eigenvalues, and let us assume that it is the first $m$ eigenvalues that are non-zero. Then
\begin{equation*}
d_n(\eta) = \sum_{i=1}^{m}\frac{\eta\lambda_i}{1 + \eta\lambda_i} \leq \sum_{i=1}^{m}\frac{\eta\lambda_i}{\eta\lambda_i} = m\,.
\end{equation*}
Since $|\gX| \leq k$, the rank of the kernel matrix $\mK_n$ is at most $k$, which means that $d_n(\eta) \leq k$. By Proposition \ref{pro:sandwich}, this also means that $\gamma_n(\eta, \beta) \leq \frac{\eta-\beta}{2\beta}k$. If for some $c \in (0, 1)$, $\beta = c\eta$, then $\gamma_n(\eta, \beta) = \gO(1)$. Next, we lower bound the information gain. Since every term in the expansion of $\prod_{i=1}^{n}(1 + \eta\lambda_i)$ is non-negative,
\begin{equation*}
\gamma_n(\eta) = \frac{1}{2}\log\prod_{i=1}^{n}(1 + \eta\lambda_i) \geq \frac{1}{2}\log\bigg(1 + \sum_{i=1}^{n}\eta\lambda_i\bigg) = \frac{1}{2}\log(1 + \eta n)\,.
\end{equation*}
Here, we used the fact that since $k(x_i, x_i) = 1$ for all $i \in [n]$, $\mathrm{tr}(\mK_n) = n$. We conclude that whenever $\eta$ does not depend on $n$, $\gamma_n(\eta) = \Omega(\log(n))$.

Next, we consider a somewhat less artificial example, and estimate the growth-rates of the information gain and the relative information gain via simulation. This time, the inputs $x_1, \dots, x_n$ are drawn independently and uniformly at random from the interval $[0, 1]$, the kernel function is the squared exponential kernel $k(x, y) = \exp(-(x-y)^2/2)$ and we take $\eta = 1$ and $\beta = 1/2$. For several values of $n$ between $1$ and $3000$, and 20 random draws of $x_1, \dots, x_n$, we record the values of all three quantities that appear in Proposition \ref{pro:interp}. Namely, we record $d_n(\eta)$, $\frac{2\eta}{\eta-\beta}\gamma_n(\eta, \beta)$ and $2\gamma_n(\eta)$. We plot the average values of these three quantities in Figure \ref{fig:growth}, and obtain the estimates
\begin{align*}
\sE[\gamma_n(1)] \approx 0.267\cdot\log^{8/5}(n) + 0.347\,, ~~~\sE[\gamma_n(1, 1/2)] \approx 0.113\cdot\log(n) + 0.144\,.
\end{align*}
We see that the information gain grows at  a rate that is faster by a factor of approximately $\log^{3/5}(n)$.

\begin{figure}
\centering
\includegraphics[width=0.9\textwidth]{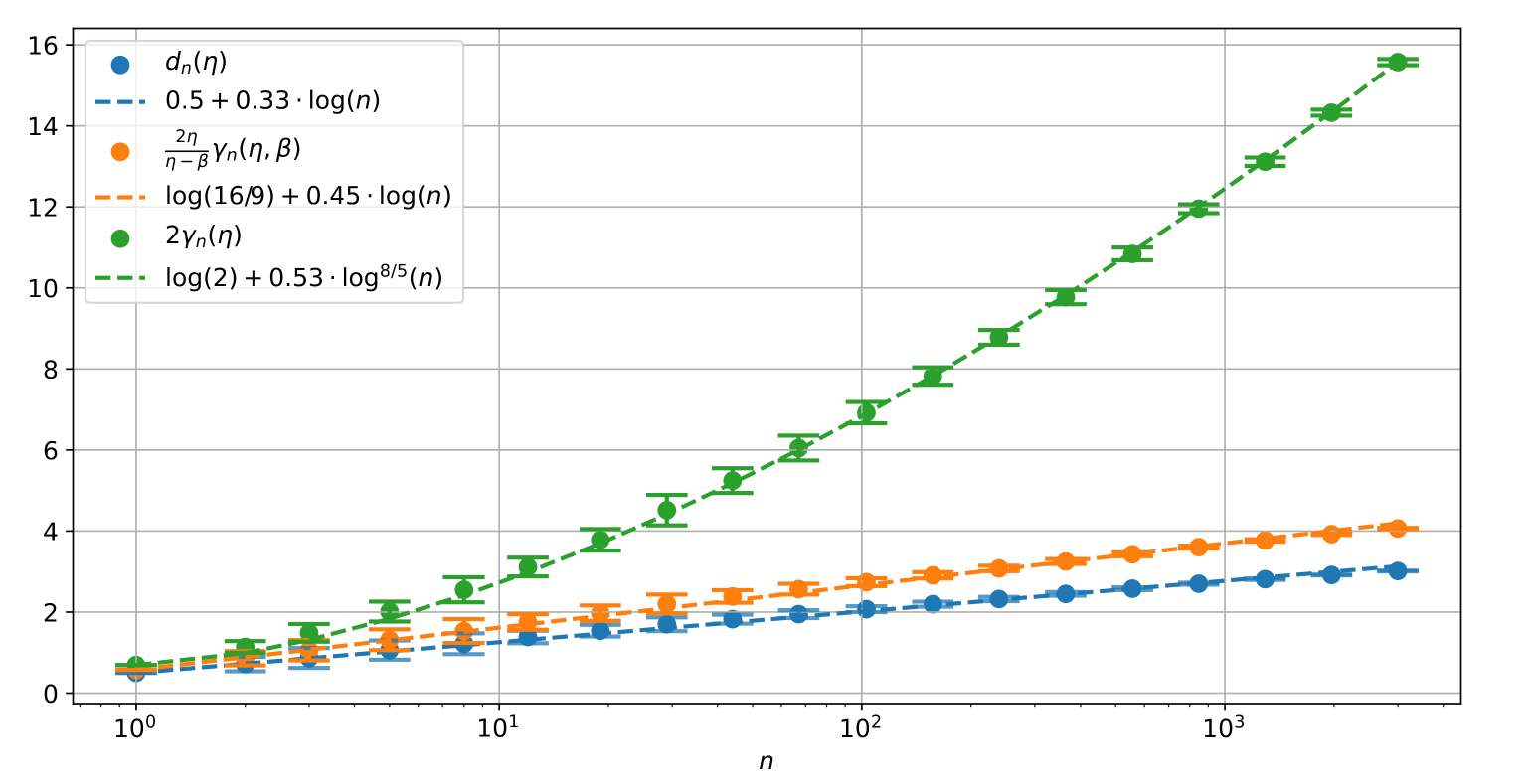}
\caption{Estimates of the effective dimension, the information gain and the relative information gain. The inputs are drawn uniformly at random from $[0,1]$, the kernel function is $k(x,y) = \exp(-(x-y)^2/2)$ and the learning rates are $\eta = 1$ and $\beta = 1/2$. We plot the average value plus/minus three times the standard deviation over 20 repetitions.}
\label{fig:growth}
\end{figure}

\section{Excess Risk Bounds for Gaussian Process Regression}

We demonstrate that the relative information gain arises naturally from a localised PAC-Bayesian bound for Gaussian process regression. We use the zero-mean prior $P_{\alpha} = \gG\gP(0, \alpha k(x, x^{\prime}))$ with a scale parameter $\alpha > 0$. For a learning rate $\eta > 0$, we will consider Gibbs distributions $Q_{n,\eta,\alpha}$ defined by
\begin{equation*}
\dd{}{Q_{n, \eta, \alpha}}{P_{\alpha}}(f) \propto \exp\bigg(-\eta\sum_{i=1}^{n}(f(x_i) - y_i)^2\bigg)\,.
\end{equation*}
It is known that $Q_{n, \eta, \alpha}$ is also a Gaussian process \citep{williams2006gaussian}. In particular,
\begin{equation*}
Q_{n, \eta, \alpha} = \gG\gP(\vk_n^{\top}(x)(\mK_n + \tfrac{1}{2\eta\alpha}\mI)^{-1}\vy_n, \alpha k(x, x^{\prime}) - \alpha \vk_n^{\top}(x)(\mK_n + \tfrac{1}{2\eta\alpha}\mI)^{-1}\vk_n(x^{\prime}))\,.
\end{equation*}
Note that if we set the learning rate to $\eta = 1/(2\sigma^2)$, then $Q_{n, \eta, \alpha}$ coincides with the Bayesian posterior for the model in \eqref{eqn:model}. We consider the problem of regression with fixed design, which means that the inputs $x_1, \dots, x_n$ are deterministic and our objective is to estimate the function values $\fstar(x_1), \dots, \fstar(x_n)$. In this setting, we only need to consider the marginal distributions of $Q_{n, \eta, \alpha}$ and $P_{\alpha}$ on the inputs $x_1, \dots, x_n$, which are $n$-dimensional Gaussians. By a small abuse of notation, we will also use $Q_{n,\eta,\alpha}$ and $P_{\alpha}$ to refer to these marginal distributions. It is easy to check that $P_{\alpha} = \gN(0, \alpha \mK_n)$ and $Q_{n, \eta,\alpha} = \gN(\vm_{n,\eta,\alpha}, \mK_{n,\eta,\alpha})$, where
\begin{align*}
\vm_{n,\eta,\alpha} = \mK_n(\mK_n + \tfrac{1}{2\eta\alpha}\mI)^{-1}\vy_n\,, \quad \mK_{n,\eta,\alpha} = \alpha\mK_n - \alpha\mK_n(\mK_n + \tfrac{1}{2\eta\alpha}\mI)^{-1}\mK_n\,.
\end{align*}
We assume that the noise variables $\eps_1, \dots, \eps_n$ are independent and $\sigma$-sub-Gaussian.
\begin{assumption}
The noise variables $\eps_1, \dots, \eps_n$ are independent. For all $i \in \{1, \dots, n\}$ and all $\eta \in \sR$, $\sE[\exp(\eta\eps_i)] \leq \exp(\sigma^2\eta^2/2)$.
\label{ass:subG}
\end{assumption}
For simplicity, we also assume that the inputs $x_1, \dots, x_n$ and the kernel $k$ are such that the kernel matrix $\mK_n$ is strictly positive-definite. This assumption is quite mild, since it is satisfied by typical kernels whenever $x_1, \dots, x_n$ are distinct. It is used to prove some of the auxiliary lemmas in Appendix \ref{sec:aux_pac_bayes}, but we expect that if one has the desire, this assumption can be dropped.
\begin{assumption}
The kernel matrix $\mK_n$ is strictly positive-definite.
\label{ass:pd}
\end{assumption}
In what follows, let us use $\vg := [g_1, \dots, g_n]^{\top}$ to denote an arbitrary vector (of function values) in $\sR^n$. We define the \emph{excess risk} $R_n: \sR^n \to \sR$ and the \emph{empirical risk} $r_n: \sR^n \to \sR$ as
\begin{equation*}
R_n(\vg) := \frac{1}{n}\|\vg - \vfstar_n\|_2^2\,, \quad r_n(\vg) := \frac{1}{n}\|\vg - \vy_n\|_2^2\,.
\end{equation*}
We want to upper bound the average excess risk $\int R_n(\vg)\mathrm{d}Q_{n,\eta,\alpha}(\vg)$ of $Q_{n,\eta,\alpha}$. Using a PAC-Bayesian bound for sums of independent random variables (cf. Proposition 2.1 in \citealp{catoni2017dimension}), one can derive the following PAC-Bayesian bound. For any $n \geq 1$, any $\delta \in (0, 1]$, any $\eta \in (0, \frac{1}{2\sigma^2})$ and any $\alpha > 0$, with probability at least $1 - \delta$,
\begin{equation}
\int R_n(\vg)\mathrm{d}Q_{n,\eta,\alpha}(\vg) \leq \inf_{Q}\left\{\frac{\int\eta(r_n(\vg) - r_n(\vfstar_n))\mathrm{d}Q(\vg)}{\eta - 2\sigma^2\eta^2} + \frac{\KL(Q||P_{\alpha}) + \log\tfrac{1}{\delta}}{n(\eta - 2\sigma^2\eta^2)}\right\}\,.\label{eqn:rubbish_bound}
\end{equation}
We consider this to be a fairly standard PAC-Bayesian bound for the excess risk. Indeed, one can obtain a bound similar to the one in \eqref{eqn:rubbish_bound} from Theorem 2.1 in \citet{alquier2020concentration}. However, this bound is not completely satisfactory. Lemma \ref{lem:donsker} tells us that the infimum on the right-hand side is achieved when $Q = Q_{n,\eta,\alpha}$. For this choice of $Q$, the KL divergence $\KL(Q_{n,\eta,\alpha}||P_{\alpha})$ contains the log-determinant term $\frac{1}{2}\log(\det(\alpha\mK_n)/\det(\mK_{n,\eta,\alpha}))$. Due to Corollary \ref{cor:inv_bollocks},
\begin{equation*}
\frac{1}{2}\log\frac{\det(\alpha\mK_n)}{\det(\mK_{n,\eta,\alpha})} = \frac{1}{2}\log\det(2\eta\alpha\mK_n + \mI) = \gamma_n(2\eta\alpha)\,.
\end{equation*}
Therefore, one would expect that the bound in \eqref{eqn:rubbish_bound} is at best of the order $\gamma_n(2\eta\alpha)/n$. In fact, one can show that this bound is of the order $[\gamma_n(2\eta\alpha) + \|\fstar\|_{\gH}^2/\alpha]/n$. Sadly, plugging in the bounds on the information gain from \citet{vakili2021information} and then optimising the value of $\alpha$ does not yield minimax-optimal rates of convergence. The extra (up to) logarithmic factor in the growth rate of $\gamma_n(\eta)$ (compared to $d_n(\eta)$ or $\gamma_n(\eta, \beta)$) results in unnecessary log factors in the rate of convergence. To fix this, we use Catoni's localisation technique \citep{catoni2007pac}, which allows one to prove PAC-Bayesian bounds with data-dependent priors. In Proposition \ref{pro:excess_risk}, the prior is the Gibbs distribution $Q_{n,\beta,\alpha}$, with a learning rate $\beta < \eta$.
\begin{proposition}
Suppose that Assumption \ref{ass:subG} is satisfied. For any $n \geq 1$, any $\delta \in (0, 1]$, any $\eta \in (0, \frac{1}{2\sigma^2})$, any $\beta > 0$ such that $\eta - 2\eta^2\sigma^2 - \beta - 2\beta^2\sigma^2 > 0$ and any $\alpha > 0$, w.p. at least $1 - \delta$,
\begin{equation*}
\int R_n(\vg)\mathrm{d}Q_{n,\eta,\alpha}(\vg) \leq \inf_{Q}\left\{\frac{\int(\eta-\beta)(r_n(\vg) - r_n(\vfstar_n))\mathrm{d}Q(\vg)}{\eta - 2\sigma^2\eta^2 - \beta - 2\sigma^2\beta^2} + \frac{\KL(Q||Q_{n,\beta,\alpha}) + 2\log\tfrac{1}{\delta}}{n(\eta - 2\sigma^2\eta^2 - \beta - 2\sigma^2\beta^2)}\right\}\,.
\end{equation*}
\label{pro:excess_risk}
\end{proposition}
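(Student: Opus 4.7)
The proof adapts Catoni's localisation technique to the sub-Gaussian regression setting. Writing $\bs{\eps} := \vy_n - \vfstar_n$ for the noise vector, the first ingredient is the identity $r_n(\vg) - r_n(\vfstar_n) = R_n(\vg) - \frac{2}{n}(\vg - \vfstar_n)^{\top}\bs{\eps}$, which combined with Assumption~\ref{ass:subG} yields two Chernoff-type bounds: for each fixed $\vg \in \sR^n$ and each $\lambda > 0$, both $\sE[\exp(\phi_\lambda(\vg))] \leq 1$ and $\sE[\exp(\psi_\lambda(\vg))] \leq 1$ hold, where $\phi_\lambda(\vg) := \lambda n R_n(\vg)(1 - 2\sigma^2\lambda) - \lambda n(r_n(\vg) - r_n(\vfstar_n))$ is the upper-tail witness at rate $\lambda$ and $\psi_\lambda$ is its lower-tail counterpart, obtained by swapping the signs of the two summands and replacing $(1 - 2\sigma^2\lambda)$ by $(1 + 2\sigma^2\lambda)$.

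The key observation is that the sum $\phi_\eta + \psi_\beta$ simplifies exactly to $nC R_n(\vg) - n(\eta - \beta)(r_n(\vg) - r_n(\vfstar_n))$ with $C := \eta - 2\sigma^2\eta^2 - \beta - 2\sigma^2\beta^2$, and a direct sub-Gaussian computation shows $\sE[\exp((\phi_\eta + \psi_\beta)(\vg))] \leq \exp(-4\eta\beta\sigma^2 n R_n(\vg)) \leq 1$ for every $\vg$. This is precisely where the numerator $(\eta - \beta)$ and the denominator $C$ of the stated bound originate. Applying the Donsker--Varadhan variational formula with the fixed prior $P_\alpha$ to this combined witness, followed by Fubini and Markov's inequality, gives---with probability at least $1 - \delta/2$, uniformly over all $Q$---the inequality
\begin{equation*}
nC\int R_n\,\mathrm{d}Q - n(\eta - \beta)\int(r_n - r_n(\vfstar_n))\,\mathrm{d}Q \leq \KL(Q\|P_\alpha) + \log(2/\delta).
\end{equation*}

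To exchange the fixed prior for the data-dependent Gibbs prior $Q_{n,\beta,\alpha}$, I would invoke the change-of-measure identity $\KL(Q\|P_\alpha) = \KL(Q\|Q_{n,\beta,\alpha}) - \beta n\int r_n\,\mathrm{d}Q - \log Z_{n,\beta,\alpha}$ together with the variational expression $-\log Z_{n,\beta,\alpha} = \beta n \int r_n\,\mathrm{d}Q_{n,\beta,\alpha} + \KL(Q_{n,\beta,\alpha}\|P_\alpha)$ for the Gibbs partition function. The residual term $-\beta n r_n(\vfstar_n) - \log Z_{n,\beta,\alpha}$, which rewrites as $\beta n\int(r_n - r_n(\vfstar_n))\,\mathrm{d}Q_{n,\beta,\alpha} + \KL(Q_{n,\beta,\alpha}\|P_\alpha)$, I would control by a second PAC-Bayesian inequality obtained from the lower-tail witness $\psi_\beta$ alone, again with prior $P_\alpha$ but specialised to the posterior $Q_{n,\beta,\alpha}$ and valid with probability at least $1 - \delta/2$. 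A union bound over these two PAC-Bayes events accounts for the factor $2\log(1/\delta)$ in the stated bound.

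The main obstacle will be the final algebraic bookkeeping: the stray contributions $\int R_n\,\mathrm{d}Q_{n,\beta,\alpha}$ and $\KL(Q_{n,\beta,\alpha}\|P_\alpha)$ produced by the second PAC-Bayes application must cancel with the corresponding terms from the change-of-measure identity so that only $\KL(Q\|Q_{n,\beta,\alpha})$ survives on the right-hand side, with the coefficients of $\int R_n\,\mathrm{d}Q$ and $\int(r_n - r_n(\vfstar_n))\,\mathrm{d}Q$ remaining $C$ and $(\eta - \beta)$ respectively. Once the uniform-in-$Q$ bound is established, the infimum on the right-hand side is attained by the Gibbs minimiser of the functional $(\eta - \beta) n \int r_n\,\mathrm{d}Q + \KL(Q\|Q_{n,\beta,\alpha})$ (cf.\ Lemma~\ref{lem:donsker}); the chain rule for Gibbs measures identifies this minimiser as $Q_{n,\eta,\alpha}$, since its Radon--Nikodym derivative against $P_\alpha$ is proportional to $\exp(-\eta n r_n)$. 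Specialising to $Q = Q_{n,\eta,\alpha}$ on the left-hand side then converts the uniform inequality into the claimed infimum statement.
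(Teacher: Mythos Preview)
Your approach has a genuine gap in the ``algebraic bookkeeping'' that you flag as the main obstacle: the hoped-for cancellations do not happen. When you substitute the change-of-measure identity $\KL(Q\|P_\alpha)=\KL(Q\|Q_{n,\beta,\alpha})-\beta n\!\int r_n\,\mathrm{d}Q-\log Z_{n,\beta,\alpha}$ into your first PAC-Bayes inequality, the term $-\beta n\!\int r_n\,\mathrm{d}Q$ is integrated against $Q$, not against $Q_{n,\beta,\alpha}$. Splitting off $r_n(\vfstar_n)$ and moving this term to the left changes the coefficient of $\int(r_n-r_n(\vfstar_n))\,\mathrm{d}Q$ from $(\eta-\beta)$ to $(\eta-2\beta)$, so the stated coefficients are not preserved. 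Worse, your second PAC-Bayes step (lower-tail witness $\psi_\beta$ with prior $P_\alpha$, evaluated at $Q=Q_{n,\beta,\alpha}$) yields
\[
\beta n\!\int(r_n-r_n(\vfstar_n))\,\mathrm{d}Q_{n,\beta,\alpha}\;\le\;(\beta+2\sigma^2\beta^2)n\!\int R_n\,\mathrm{d}Q_{n,\beta,\alpha}+\KL(Q_{n,\beta,\alpha}\|P_\alpha)+\log\tfrac{2}{\delta},
\]
whereas the residual you need to bound is the left-hand side \emph{plus} $\KL(Q_{n,\beta,\alpha}\|P_\alpha)$. The two $\KL$ terms therefore add rather than cancel, leaving $2\KL(Q_{n,\beta,\alpha}\|P_\alpha)$ on the right; this contains the full information gain $\gamma_n(2\beta\alpha)$ and defeats the whole purpose of localisation.

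The missing idea is the paper's intermediate distribution-dependent prior
\[
\frac{\mathrm{d}\overline{Q}_{n,\beta,\alpha}}{\mathrm{d}P_\alpha}(\vg)\;\propto\;\exp\bigl(-(\beta+2\sigma^2\beta^2)\,nR_n(\vg)\bigr),
\]
which depends on $\vfstar_n$ but not on the random draw $\vy_n$ and is therefore a valid PAC-Bayes prior. Running the first PAC-Bayes step at rate $\eta$ with \emph{this} prior and then passing to $Q_{n,\beta,\alpha}$ via $\KL(Q\|\overline{Q}_{n,\beta,\alpha})-\KL(Q\|Q_{n,\beta,\alpha})=\int\log(\mathrm{d}Q_{n,\beta,\alpha}/\mathrm{d}\overline{Q}_{n,\beta,\alpha})\,\mathrm{d}Q$ produces precisely the $Q$-dependent corrections $(\beta+2\sigma^2\beta^2)n\!\int R_n\,\mathrm{d}Q-\beta n\!\int(r_n-r_n(\vfstar_n))\,\mathrm{d}Q$ that convert $\eta-2\sigma^2\eta^2$ to $C$ and $\eta$ to $\eta-\beta$ simultaneously. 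What remains is a single $Q$-independent log-ratio of partition functions, whose exponential has expectation at most one; this is the second concentration. Your combined-witness trick $\phi_\eta+\psi_\beta$ is a correct observation but it does not substitute for the localised prior: it bakes in the right coefficients too early, before the change of measure, and cannot compensate for the extra $Q$-dependent term that the passage $P_\alpha\to Q_{n,\beta,\alpha}$ introduces.
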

Using Lemma \ref{lem:donsker} again, it can be shown that the infimum on the right-hand side is still achieved when $Q = Q_{n,\eta,\alpha}$ (cf. Appendix \ref{sec:pb_excess_risk}). This time, the KL divergence $\KL(Q_{n,\eta,\alpha}||Q_{n,\beta,\alpha})$ contains the log-determinant term $\frac{1}{2}\log(\det(\mK_{n,\beta,\alpha}))/\det(\mK_{n,\eta,\alpha}))$. Using Corollary \ref{cor:inv_bollocks} again,
\begin{equation*}
\frac{1}{2}\log\frac{\det(\mK_{n,\beta,\alpha}))}{\det(\mK_{n,\eta,\alpha})} = \frac{1}{2}\log\det\frac{2\eta\alpha\mK_n + \mI}{2\beta\alpha\mK_n + \mI} = \gamma_n(2\eta\alpha, 2\beta\alpha)\,.
\end{equation*}
One might expect that the bound in Proposition \ref{pro:excess_risk} is of the order $[\gamma_n(2\eta\alpha, 2\beta\alpha) + \|\fstar\|_{\gH}^2/\alpha]/n$. We will see shortly that this is indeed the case. We sketch the main ideas of the proof of Proposition \ref{pro:excess_risk} here, and refer the interested reader to Appendix \ref{sec:pac_bayes}. A key idea is to use a prior that assigns higher probability to functions (or vectors of function values) for which the excess risk is small. We define the distribution-dependent prior $\overline{Q}_{n,\beta,\alpha}$ by
\begin{equation*}
\dd{}{\overline{Q}_{n,\beta,\alpha}}{P_{\alpha}}(f) \propto \exp\bigg(-(\beta + 2\sigma^2\beta^2)\sum_{i=1}^{n}(f(x_i) - \fstar(x_i))^2\bigg)\,.
\end{equation*}
Note that $\overline{Q}_{n,\beta,\alpha}$ depends on the distribution of $y_1, \dots, y_n$ through $x_1, \dots, x_n$ and $\fstar$, but it does not depend on the random draw of $y_1, \dots, y_n$, and is therefore a valid prior. It is not important for the proof, but one can also notice that $\overline{Q}_{n,\beta,\alpha}$ is another Gaussian process and that its marginal distribution at the inputs $x_1, \dots, x_n$ is
\begin{equation*}
\overline{Q}_{n,\beta,\alpha} = \gN(\mK_n(\mK_n + \tfrac{1}{2\alpha(\beta + 2\sigma^2\beta^2)}\mI)^{-1}\vfstar_n, \alpha\mK_n - \alpha\mK_n(\mK_n + \tfrac{1}{2\alpha(\beta + 2\sigma^2\beta^2)}\mI)^{-1}\mK_n)\,.
\end{equation*}
In the bound we want to prove, the prior is $Q_{n,\beta,\alpha}$. It turns out that for all $Q \in \Delta(\sR^n)$, the difference between $\KL(Q||\overline{Q}_{n,\beta,\alpha})$ and $\KL(Q||Q_{n,\beta,\alpha})$ can be upper bounded with high probability. This is made possible by the following concentration inequality, which is proved using Assumption \ref{ass:subG} (cf. Proposition \ref{pro:pb_indep}). For any $\eta \in \sR$ and any $\delta \in (0, 1]$, with probability at least $1 - \delta$,
\begin{equation*}
\sup_{Q}\bigg\{\int\eta(r_n(\vfstar_n) - r_n(\vg)) + (\eta - 2\sigma^2\eta^2)R_n(\vg)\mathrm{d}Q(\vg) - \KL(Q||\overline{Q}_{n,\beta,\alpha})\bigg\} \leq \log\tfrac{1}{\delta}\,.
\end{equation*}
One can then show that, with probability at least $1 - \delta$, for all $Q \in \Delta(\sR^n)$,
\begin{equation*}
-\KL(Q||Q_{n,\beta,\alpha}) - \int\beta(r_n(\vfstar_n) - r_n(\vg)) + (\beta + 2\sigma^2\beta^2)R_n(\vg)\mathrm{d}Q(\vg) \leq -\KL(Q||\overline{Q}_{n,\beta,\alpha}) + \log\tfrac{1}{\delta}\,.
\end{equation*}
By the union bound, these inequalities can be used in succession to deduce that, with probability at least $1 - \delta$,
\begin{align*}
\sup_{Q}&\bigg\{\int(\eta-\beta)(r_n(\vfstar_n) - r_n(\vg)) + (\eta - 2\sigma^2\eta^2 - \beta - 2\sigma^2\beta^2)R_n(\vg)\mathrm{d}Q(\vg) - \KL(Q||Q_{n,\beta,\alpha})\bigg\}\\
&\leq \sup_{Q}\bigg\{\int\eta(r_n(\vfstar_n) - r_n(\vg)) + (\eta - 2\sigma^2\eta^2)R_n(\vg)\mathrm{d}Q(\vg) - \KL(Q||\overline{Q}_{n,\beta,\alpha})\bigg\} + \log\tfrac{2}{\delta}\\
&\leq 2\log\tfrac{2}{\delta}\,.
\end{align*}
This inequality can be rearranged into the one in Proposition \ref{pro:excess_risk}. The $2\log(2/\delta)$ term can be improved to $2\log(1/\delta)$ by replacing the union bound with Cauchy-Schwarz (cf. Appendix \ref{sec:pb_indep}). Using Proposition \ref{pro:excess_risk}, we obtain the main result of this section.
\begin{theorem}
Suppose that Assumption \ref{ass:subG} and Assumption \ref{ass:pd} are satisfied. For any $n \geq 1$, any $\delta \in (0, 1]$, any $\eta \in (0, \frac{1}{2\sigma^2})$, any $\beta > 0$ such that $\eta - 2\eta^2\sigma^2 - \beta - 2\beta^2\sigma^2 > 0$ and any $\alpha > 0$, with probability at least $1 - \delta$,
\begin{equation*}
\int R_n(\vg)\mathrm{d}Q_{n, \eta, \alpha}(\vg) \leq \frac{2\gamma_n(2\eta\alpha, 2\beta\alpha) + \frac{1}{2\alpha}\|\fstar\|_{\gH}^2 + 2\log\tfrac{1}{\delta}}{n(\eta - 2\sigma^2\eta^2 - \beta - 2\sigma^2\beta^2)}\,.
\end{equation*}
\label{thm:rig_excess_risk}
\end{theorem}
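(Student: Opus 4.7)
The plan is to invoke Proposition~\ref{pro:excess_risk} with $Q = Q_{n,\eta,\alpha}$, which (as noted after the statement, via Lemma~\ref{lem:donsker}) minimises the right-hand side. Write $\tau := \eta - 2\sigma^2\eta^2 - \beta - 2\sigma^2\beta^2$ and let $Z_c := \int \exp(-c\|\vg - \vy_n\|_2^2)\,\mathrm{d}P_\alpha(\vg)$ denote the Gibbs partition function at inverse temperature $c$. Applying the Donsker--Varadhan duality of Lemma~\ref{lem:donsker} with base measure $Q_{n,\beta,\alpha}$ and integrand $-(\eta-\beta)n r_n$ collapses the two data-dependent terms on the right-hand side of Proposition~\ref{pro:excess_risk} into a log-partition ratio,
\begin{equation*}
n(\eta-\beta)\int r_n\,\mathrm{d}Q_{n,\eta,\alpha} \;+\; \KL(Q_{n,\eta,\alpha}\|Q_{n,\beta,\alpha}) \;=\; \log\frac{Z_\beta}{Z_\eta}\,.
\end{equation*}
Since $n r_n(\vfstar_n) = \|\bs{\eps}\|_2^2$, it then suffices to upper bound $\log(Z_\beta/Z_\eta) - (\eta-\beta)\|\bs{\eps}\|_2^2$ by the desired expression in $\gamma_n(2\eta\alpha,2\beta\alpha)$ and $\|\fstar\|_\gH^2/(2\alpha)$.

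Next, I would compute $Z_\eta$ and $Z_\beta$ as explicit $n$-dimensional Gaussian integrals. Completing the square in the exponent and applying Corollary~\ref{cor:inv_bollocks} to the resulting log-determinant yields $-\log Z_c = \gamma_n(2c\alpha) + c\,\vy_n^\top(\mI + 2c\alpha\mK_n)^{-1}\vy_n$, so
\begin{equation*}
\log\frac{Z_\beta}{Z_\eta} \;=\; \gamma_n(2\eta\alpha, 2\beta\alpha) \;+\; \vy_n^\top \mathbf{M}\, \vy_n\,, \qquad \mathbf{M} \;:=\; \eta(\mI+2\eta\alpha\mK_n)^{-1} - \beta(\mI+2\beta\alpha\mK_n)^{-1}\,.
\end{equation*}
Under Assumption~\ref{ass:pd} the eigenvalues $\lambda_1,\dots,\lambda_n$ of $\mK_n$ are strictly positive, so in their eigenbasis $\mathbf{M}$ is diagonal with entries $\mu_i = (\eta-\beta)/[(1+2\eta\alpha\lambda_i)(1+2\beta\alpha\lambda_i)] \in (0,\eta-\beta)$; in particular $\mathbf{A} := (\eta-\beta)\mI - \mathbf{M}$ is strictly positive definite.

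The main obstacle of the proof is controlling the noise. Writing $\vy_n = \vfstar_n + \bs{\eps}$ decomposes $\vy_n^\top\mathbf{M}\vy_n - (\eta-\beta)\|\bs{\eps}\|_2^2$ into $\vfstar_n^\top\mathbf{M}\vfstar_n + 2\vfstar_n^\top\mathbf{M}\bs{\eps} + \bs{\eps}^\top[\mathbf{M} - (\eta-\beta)\mI]\bs{\eps}$, whose cross term is random with no obvious sign. The trick is Young's inequality $2\boldsymbol{u}^\top\boldsymbol{v} \leq \boldsymbol{u}^\top\mathbf{A}^{-1}\boldsymbol{u} + \boldsymbol{v}^\top\mathbf{A}\boldsymbol{v}$ applied with $\boldsymbol{u} = \mathbf{M}\vfstar_n$, $\boldsymbol{v} = \bs{\eps}$ and precisely the weighting $\mathbf{A}$ above: the resulting $\bs{\eps}^\top\mathbf{A}\bs{\eps}$ term cancels $\bs{\eps}^\top[\mathbf{M}-(\eta-\beta)\mI]\bs{\eps}$ exactly, leaving the deterministic bound
\begin{equation*}
\vy_n^\top\mathbf{M}\vy_n - (\eta-\beta)\|\bs{\eps}\|_2^2 \;\leq\; \vfstar_n^\top\mathbf{M}\vfstar_n + \vfstar_n^\top\mathbf{M}\mathbf{A}^{-1}\mathbf{M}\,\vfstar_n\,.
\end{equation*}
The cancellation is the whole point of choosing $\mathbf{A}$ this way: a direct sub-Gaussian tail bound on $2\vfstar_n^\top\mathbf{M}\bs{\eps}$ would introduce a spurious $\sqrt{\log(1/\delta)}$ factor and spoil the rate, whereas the precisely chosen $\mathbf{A}$ removes the noise without any concentration.

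Finally, a diagonal calculation gives the eigenvalues of $\mathbf{M} + \mathbf{M}\mathbf{A}^{-1}\mathbf{M}$ as $(\eta-\beta)/(u_i-1)$ with $u_i := (1+2\eta\alpha\lambda_i)(1+2\beta\alpha\lambda_i)$, and the elementary estimate $u_i - 1 \geq 2\eta\alpha\lambda_i$ shows $(\eta-\beta)/(u_i-1) \leq 1/(2\alpha\lambda_i)$. Combined with the minimum-norm interpolant inequality $\vfstar_n^\top\mK_n^{-1}\vfstar_n \leq \|\fstar\|_\gH^2$, a standard consequence of the reproducing property valid because $\mK_n$ is invertible under Assumption~\ref{ass:pd}, this yields $\vfstar_n^\top(\mathbf{M} + \mathbf{M}\mathbf{A}^{-1}\mathbf{M})\vfstar_n \leq \|\fstar\|_\gH^2/(2\alpha)$. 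Plugging back into Proposition~\ref{pro:excess_risk} and dividing by $n\tau$ then yields the theorem.
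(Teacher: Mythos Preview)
Your argument is correct and in fact yields the sharper numerator $\gamma_n(2\eta\alpha,2\beta\alpha)+\tfrac{1}{2\alpha}\|\fstar\|_\gH^2+2\log\tfrac{1}{\delta}$, from which the stated bound with $2\gamma_n$ follows trivially. The route, however, is genuinely different from the paper's. The paper does \emph{not} evaluate Proposition~\ref{pro:excess_risk} at the minimiser $Q_{n,\eta,\alpha}$; instead it plugs in a handcrafted Gaussian $\wh Q=\gN(\cdot,\mK_{n,\eta,\alpha})$ whose mean is either $\vm_{n,\beta,\alpha}$ or the constrained least-squares fit $\wh\vf_n$, depending on a case split governed by the implicit regulariser $\zeta$. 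It then bounds the empirical-risk integral and $\KL(\wh Q\|Q_{n,\beta,\alpha})$ separately, each contributing one $\gamma_n$ via Corollary~\ref{cor:trace_bollocks} and Corollary~\ref{cor:logdet}, and controls the quadratic KL term by $\tfrac{1}{\alpha}\|\fstar\|_\gH^2$ through a somewhat laborious expansion. Your approach bypasses all of this: the Donsker--Varadhan collapse $\log(Z_\beta/Z_\eta)$ extracts the $\gamma_n$ term \emph{once} and exactly, and the Young-with-$\mathbf A=(\eta-\beta)\mI-\mathbf M$ trick eliminates the noise deterministically without a case split. What the paper's argument buys is that it never computes a partition function and stays closer to the standard PAC-Bayes template; what yours buys is a cleaner proof, no case analysis, and a better constant.
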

We prove Theorem \ref{thm:rig_excess_risk} by evaluating the bound in Proposition \ref{pro:excess_risk} at a particular choice of $Q$, and then upper bounding each term by either $\gamma_n(2\eta\alpha, 2\beta\alpha)$ or $\|\fstar\|_{\gH}^2$. The gory details can be found in Appendix \ref{sec:rig_excess_risk}. By Jensen's inequality, $R_n(\vm_{n, \eta, \alpha}) \leq \int R_n(\vg)\mathrm{d}Q_{n, \eta, \alpha}(\vg)$. Since $\vm_{n, \eta,\alpha}$ is identical to the vector of fitted values for kernel ridge regression (with learning rate $2\eta\alpha$), the excess risk bound in Theorem \ref{thm:rig_excess_risk} also applies to kernel ridge regression.

\section{Rates of Convergence for Mercer Kernels}
\label{sec:rates}

We provide worst-case bounds on the relative information gain, which, when combined with Theorem \ref{thm:rig_excess_risk}, can be used to provide explicit rates of convergence for the excess risk of $Q_{n, \eta, \alpha}$.

\subsection{Bounds on the Relative Information Gain}

Aside from one or two tricks, the method with which we bound the relative information gain is identical to the method that \citet{vakili2021information} used to bound the information gain. As a result, we require the same regularity assumptions. A positive-definite kernel $k$ on a set $\mathcal{X}$ is called a Mercer kernel if $\mathcal{X}$ is a compact metric space and the kernel function $k: \mathcal{X} \times \mathcal{X} \to \mathbb{R}$ is continuous. Due to Mercer's theorem (cf. Appendix \ref{sec:mercer}), any Mercer kernel can be expressed as an infinite sum of the form
\begin{equation*}
k(x, x^{\prime}) = \sum_{i=1}^{\infty}\xi\phi_i(x)\phi_i(x^{\prime})\,,
\end{equation*}
where $(\xi_i)_{i=1}^{\infty}$ and $(\phi_i)_{i=1}^{\infty}$ are the (non-negative) eigenvalues and eigenfunctions of the kernel.
\begin{assumption}
Assume that $k$ is a Mercer kernel, and that $\sup_{i \in \sN}\|\phi_i\|_{\infty} \leq \psi$.
\label{ass:mercer}
\end{assumption}
The rate at which the eigenvalues $\xi_1, \xi_2, \dots$ of the kernel decay to zero determines the complexity of the corresponding RKHS. The two eigenvalue decay conditions studied by \citet{vakili2021information} (and many others) are defined as follows.

\begin{assumption}
The polynomial eigenvalue decay condition is satisfied if there exist $C_p > 0$ and $\beta_p > 1$ such that
\begin{equation*}
\xi_i \leq C_pi^{-\beta_p}.
\end{equation*}
The exponential eigenvalue decay condition is satisfied if there exist $C_{e_1}, C_{e_2} > 0$ and $\beta_{e} \in (0, 1]$ such that
\begin{equation*}
\xi_i \leq C_{e_1} \exp(-C_{e_2}i^{\beta_e}).
\end{equation*}
\label{ass:eig}
\end{assumption}
If a kernel is known to satisfy one of these eigenvalue decay conditions, then upper bounds on $\gamma_n(\eta)$ with explicit dependence on $n$ can be given (cf. Corollary 1 in \citealp{vakili2021information}). In particular, under the polynomial eigenvalue decay condition, $\gamma_n(\eta) = \gO((n\eta)^{1/\beta_p}\log^{1-1/\beta_p}(n\eta))$. Under the exponential eigenvalue decay condition, $\gamma_n(\eta) = \gO(\log^{1 + 1/\beta_e}(n\eta))$. A central idea in the method used by \citet{vakili2021information} is to separate the kernel function into $k_{\parallel}(x,x^{\prime}) := \sum_{i=1}^{D}\xi_i\phi_i(x)\phi_i(x^{\prime})$ and  $k_{\perp}(x, x^{\prime}) := \sum_{D+1}^{\infty}\xi_i\phi_i(x)\phi_i(x^{\prime})$. It can be seen that $k_{\parallel}$ is the reproducing kernel of the subspace of $\gH$ spanned by $\phi_1, \dots, \phi_D$ and that  $k_{\parallel}(x,x^{\prime})$ is the reproducing kernel of the subspace of $\gH$ which is orthogonal to $\phi_1, \dots, \phi_D$. Let us write $\mK_n^{\parallel}$ and $\mK_n^{\perp}$ for the corresponding kernel matrices. We define
\begin{equation*}
\delta_D := \sum_{i=D+1}^{\infty}\xi_i\psi^2\,.
\end{equation*}
The first step is to re-write the information gain as a term depending on the rank $D$ kernel matrix $\mK_n^{\parallel}$ and another term depending on both $\mK_n^{\parallel}$ and $\mK_n^{\perp}$. \citet{vakili2021information} show that
\begin{equation}
\gamma_n(\eta) = \frac{1}{2}\log\det(\eta\mK_n^{\parallel} + \mI) + \frac{1}{2}\log\det(\mI + \eta(\mI + \eta\mK_n^{\parallel})^{-1}\mK_n^{\perp})\,.\label{eqn:ig_split}
\end{equation}
From here, \citet{vakili2021information} show that the first term is bounded is by $\frac{1}{2}D\log(1 + \frac{\overline{k}n\eta}{D})$, where $\overline{k} = \sup_{x}|k(x, x)|$, and that the second term is bounded by $\frac{1}{2}n\eta\delta_D$. One then has the upper bound
\begin{equation*}
\gamma_n(\eta) \leq \frac{1}{2}D\log\bigg(1 + \frac{\overline{k}n\eta}{D}\bigg) + \frac{1}{2}n\eta\delta_D\,.
\end{equation*}
The relative information gain satisfies a similar, but slightly tighter bound.
\begin{proposition}
For all $\eta > \beta > 0$ and any integer $D \geq 1$,
\begin{equation*}
\gamma_n(\eta, \beta) \leq \frac{1}{2}D\log\frac{\eta}{\beta} + \frac{1}{2}n\eta\delta_D\,.
\end{equation*}
\label{pro:rig_bound_trick}
\end{proposition}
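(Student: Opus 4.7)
My plan is to mimic the split used by \citet{vakili2021information} (equation \eqref{eqn:ig_split}), applied to both $\gamma_n(\eta)$ and $\gamma_n(\beta)$, and then bound each resulting difference separately. Writing $T_1 := \frac{1}{2}\log\det(\eta\mK_n^{\parallel} + \mI) - \frac{1}{2}\log\det(\beta\mK_n^{\parallel} + \mI)$ and
\begin{equation*}
T_2 := \frac{1}{2}\log\det(\mI + \eta(\mI + \eta\mK_n^{\parallel})^{-1}\mK_n^{\perp}) - \frac{1}{2}\log\det(\mI + \beta(\mI + \beta\mK_n^{\parallel})^{-1}\mK_n^{\perp})\,,
\end{equation*}
we have $\gamma_n(\eta,\beta) = T_1 + T_2$, so it suffices to show $T_1 \leq \frac{1}{2}D\log(\eta/\beta)$ and $T_2 \leq \frac{1}{2}n\eta\delta_D$.

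For $T_1$, I would use the fact that $\mK_n^{\parallel}$ is the Gram matrix of a rank-$D$ kernel, and hence has at most $D$ nonzero eigenvalues. If $\mu_1, \dots, \mu_n$ are those eigenvalues, then $T_1 = \frac{1}{2}\sum_{i=1}^{n}\log\frac{1+\eta\mu_i}{1+\beta\mu_i}$. The elementary inequality $\frac{1+\eta\mu}{1+\beta\mu} \leq \eta/\beta$ for $\mu \geq 0$ and $\eta > \beta > 0$ (which follows from cross-multiplication) gives a contribution of at most $\log(\eta/\beta)$ per nonzero eigenvalue and $0$ per zero eigenvalue, yielding $T_1 \leq \frac{1}{2}D\log(\eta/\beta)$. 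This is the step that gives the ``slightly tighter'' improvement over the Vakili et al. bound, since $\log(\eta/\beta)$ is independent of $n$ whereas $\log(1 + \overline{k}n\eta/D)$ is not.

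For $T_2$, the subtracted term is nonnegative (a log-determinant of $\mI$ plus a PSD matrix), so I can simply discard it and work with $T_2 \leq \frac{1}{2}\log\det(\mI + \eta(\mI + \eta\mK_n^{\parallel})^{-1}\mK_n^{\perp})$. The inequality $\log\det(\mI + A) \leq \tr(A)$ for $A \succeq 0$ then gives $T_2 \leq \frac{\eta}{2}\tr((\mI + \eta\mK_n^{\parallel})^{-1}\mK_n^{\perp})$. Since $(\mI + \eta\mK_n^{\parallel})^{-1} \preceq \mI$, the cyclic trace identity $\tr(AB) = \tr(B^{1/2}AB^{1/2})$ combined with operator monotonicity yields $\tr((\mI + \eta\mK_n^{\parallel})^{-1}\mK_n^{\perp}) \leq \tr(\mK_n^{\perp})$. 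Finally, using Assumption \ref{ass:mercer} on the eigenfunctions, $\tr(\mK_n^{\perp}) = \sum_{i=1}^{n}k_{\perp}(x_i,x_i) \leq n\sum_{j=D+1}^{\infty}\xi_j\psi^2 = n\delta_D$, giving $T_2 \leq \frac{1}{2}n\eta\delta_D$.

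I do not expect any real obstacle: the two inequalities $\frac{1+\eta\mu}{1+\beta\mu} \leq \eta/\beta$ and $\log\det(\mI+A) \leq \tr(A)$ do essentially all the work, and the rank bound on $\mK_n^{\parallel}$ together with discarding the positive subtracted log-determinant in $T_2$ is what lets the relative information gain avoid the stray $\log(1 + \overline{k}n\eta/D)$ factor present in Vakili et al.'s absolute bound.
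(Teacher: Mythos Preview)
Your proof is correct and follows essentially the same route as the paper: split $\gamma_n(\eta,\beta)$ via \eqref{eqn:ig_split}, drop the nonnegative subtracted log-determinant in $T_2$ (the paper's Lemma \ref{lem:non_neg_log_det}), bound the surviving $T_2$ piece by $\tfrac{1}{2}n\eta\delta_D$ (the paper cites Lemma \ref{lem:vakili2}, which is exactly your $\log\det\le\tr$ argument), and bound $T_1$ by $\tfrac{1}{2}D\log(\eta/\beta)$. The only cosmetic difference is that for $T_1$ the paper first passes to the $D\times D$ Gram matrix $\mG_n$ via the Weinstein--Aronszajn identity and then applies Lemma \ref{lem:gram_det}, whereas you argue directly from the at-most-$D$ nonzero eigenvalues of the rank-$D$ matrix $\mK_n^{\parallel}$; the two arguments are equivalent (and yours is arguably more direct). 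One tiny wording caveat: the matrix $\beta(\mI+\beta\mK_n^{\parallel})^{-1}\mK_n^{\perp}$ is not symmetric and hence not literally PSD, but it is similar to a PSD matrix, so its eigenvalues are nonnegative and your nonnegativity and $\log\det\le\tr$ steps go through unchanged.
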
\vspace{-3mm}
The full proof of Proposition \ref{pro:rig_bound_trick} can be found in Appendix \ref{sec:rig_bound_trick}. From \eqref{eqn:ig_split}, it follows that the relative information gain can be re-written as
\begin{equation*}
\gamma_n(\eta, \beta) = \frac{1}{2}\log\frac{\det(\eta\mK_n^{\parallel} + \mI)}{\det(\beta\mK_n^{\parallel} + \mI)} + \frac{1}{2}\log\frac{\det(\mI + \eta(\mI + \eta\mK_n^{\parallel})^{-1}\mK_n^{\perp})}{\det(\mI + \beta(\mI + \beta\mK_n^{\parallel})^{-1}\mK_n^{\perp})}\,.
\end{equation*}
Since $\log\det(\mI + \beta(\mI + \beta\mK_n^{\parallel})^{-1}\mK_n^{\perp}) \geq 0$ (cf. Lemma \ref{lem:non_neg_log_det}), the second term is still upper bounded by $\frac{1}{2}n\eta\delta_D$. In Lemma \ref{lem:gram_det}, we show that the first term can be bounded by $\frac{1}{2}D\log\frac{\eta}{\beta}$, saving a factor of $\log(n)$. Using Proposition \ref{pro:rig_bound_trick}, one can upper bound the relative information gain based on the spectral decay of the kernel via $\delta_D$.
\begin{proposition}
If the polynomial eigenvalue decay condition is satisfied, then
\begin{equation*}
\gamma_n(\eta, \beta) \leq (n\eta C_p\psi^2)^{1/\beta_p}\log^{1-1/\beta_p}(\tfrac{\eta}{\beta}) + \log\tfrac{\eta}{\beta}\,.
\end{equation*}
If the exponential eigenvalue decay condition is satisfied, then
\begin{equation*}
\gamma_n(\eta,\beta) \leq \frac{1}{2}\left(\frac{2}{C_{e,2}}\log(n\eta C_{\beta_e})\right)^{1/\beta_e}\log\frac{\eta}{\beta} + \frac{1}{2}\log\frac{e\eta}{\beta}\,,
\end{equation*}
where $C_{\beta_e} = \frac{C_{e,1}\psi^2}{C_{e,2}}$ if $\beta_e = 1$, and $C_{\beta_e} = \frac{2C_{e,1}\psi^2}{C_{e,2}\beta_e}(\frac{2 - 2\beta_e}{C_{e,2}\beta_e})^{1/\beta_e-1}\exp(\frac{1-\beta_e}{\beta_e})$ if $\beta_e \in (0, 1)$.
\label{pro:rel_info_gain}
\end{proposition}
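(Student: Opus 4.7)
The plan is to apply Proposition \ref{pro:rig_bound_trick}, which gives, for any integer $D \geq 1$,
$$\gamma_n(\eta, \beta) \leq \tfrac{1}{2}D\log(\eta/\beta) + \tfrac{1}{2}n\eta\delta_D,$$
and then to control $\delta_D$ under each eigenvalue decay condition and to optimise the choice of $D$ so that the two terms are of comparable magnitude. The approach is structurally the same as the one used in Corollary~1 of \citet{vakili2021information} to bound $\gamma_n(\eta)$; the only difference is that the $D$-dependent term here carries the factor $\log(\eta/\beta)$ rather than $\log(1 + \overline{k}n\eta/D)$, which is what ultimately saves a factor of $\log n$.

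For the polynomial case, I would substitute $\xi_i \leq C_p i^{-\beta_p}$ into the definition of $\delta_D$ and dominate the resulting tail sum by an integral, yielding a bound of the form $\delta_D \leq c_p C_p\psi^2 D^{1-\beta_p}$ for a constant $c_p$ depending only on $\beta_p$. The two terms in Proposition \ref{pro:rig_bound_trick} balance when $D^{\beta_p}$ is of order $n\eta C_p\psi^2/\log(\eta/\beta)$, which motivates the choice $D = \lceil (n\eta C_p\psi^2/\log(\eta/\beta))^{1/\beta_p}\rceil$. Substituting this integer value back produces the leading term $(n\eta C_p\psi^2)^{1/\beta_p}\log^{1-1/\beta_p}(\eta/\beta)$, and the unit gap introduced by the ceiling contributes the additive $\log(\eta/\beta)$.

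For the exponential case, the template is the same but the tail estimate is more delicate. After using $\xi_i \leq C_{e,1}\exp(-C_{e,2} i^{\beta_e})$, I would dominate the tail by $\int_D^{\infty} C_{e,1}\exp(-C_{e,2} x^{\beta_e})\,dx$. When $\beta_e = 1$ this integral is elementary and produces the first branch of $C_{\beta_e}$. When $\beta_e \in (0,1)$, the substitution $u = C_{e,2} x^{\beta_e}$ converts it into an upper incomplete gamma integral $\int_{C_{e,2} D^{\beta_e}}^{\infty} u^{1/\beta_e - 1}e^{-u}\,du$, which I would bound using the classical estimate $\int_t^{\infty} u^{a-1}e^{-u}\,du \leq 2 t^{a-1}e^{-t}$ valid once $t \geq 2(a-1)$. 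The prefactors produced by this computation are exactly what is collected into the second branch of $C_{\beta_e}$ in the statement. Choosing $D = \lceil (2\log(n\eta C_{\beta_e})/C_{e,2})^{1/\beta_e}\rceil$ then forces the factor $\exp(-C_{e,2} D^{\beta_e})$ to be of order $(n\eta C_{\beta_e})^{-2}$, which dominates the polynomial prefactors and leaves $n\eta\delta_D$ bounded by a constant that can be absorbed into the $\tfrac{1}{2}\log(e\eta/\beta)$ term, while $\tfrac{1}{2}D\log(\eta/\beta)$ produces the leading term in the stated bound.

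The main obstacle is the exponential case: one must verify that the chosen $D$ actually lies in the regime $t \geq 2(a-1)$ required for the incomplete gamma bound, and track the constants carefully through the substitution so they match the piecewise definition of $C_{\beta_e}$ and its particular dependence on $\beta_e$. The polynomial case, by contrast, reduces to a straightforward one-dimensional optimisation once the tail integral has been computed, and the remaining work there is simply bookkeeping arising from the ceiling.
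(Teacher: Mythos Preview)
Your proposal is correct and follows essentially the same route as the paper: start from Proposition~\ref{pro:rig_bound_trick}, bound $\delta_D$ under each decay assumption, and choose $D$ to balance the two terms (with the ceiling producing the additive $\log(\eta/\beta)$ term). The only cosmetic difference is that the paper packages the tail estimates on $\delta_D$ into a separate lemma quoted from \citet{vakili2021information} (Lemma~\ref{lem:delta_d}) rather than re-deriving them via the integral and incomplete-gamma arguments you sketch; in particular, that lemma already records the exponent $\exp(-C_{e,2}D^{\beta_e}/2)$ for $\beta_e\in(0,1)$, so the regime check you flag as the main obstacle is absorbed into that citation.
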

If $\beta \propto \eta$, then the bound for polynomial decay can be simplified to $\gamma_n(\eta, \beta) = \gO((n\eta)^{1/\beta_e})$. In this case, the bound for exponential decay can be simplified to $\gamma_n(\eta, \beta) = \gO(\log^{1/\beta_e}(n\eta))$. The proof is very similar to that of Corollary 1 in \citet{vakili2021information}, and can be found in Appendix \ref{sec:rig_rates}.

\subsection{Rates of Convergence}

For the case of polynomial eigenvalue decay, by plugging the bound in Proposition \ref{pro:rel_info_gain} into Theorem \ref{thm:rig_excess_risk} and then choosing suitable values of $\eta$ and $\alpha$, we obtain the following rate of convergence.
\begin{corollary}
Suppose that the polynomial eigenvalue decay condition is satisfied, $\eta = \frac{1}{4\sigma^2}$ and $\alpha = n^{-\frac{1}{1 + \beta_p}}$. For any $\delta \in (0, 1]$, with probability at least $1 - \delta$,
\begin{equation*}
\int R_n(\vg)\mathrm{d}Q_{n, \eta, \alpha}(\vg) = \gO\bigg(n^{-\frac{\beta_p}{1 + \beta_p}}\bigg)\,.
\end{equation*}
\label{cor:rate_poly}
\end{corollary}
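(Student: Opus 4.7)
The plan is to substitute the polynomial-decay bound from Proposition \ref{pro:rel_info_gain} into the excess risk bound of Theorem \ref{thm:rig_excess_risk}, choose the auxiliary learning rate $\beta$ as a constant fraction of $\eta$ so that the denominator of the bound is linear in $n$, and then balance the resulting complexity and regularisation terms by tuning $\alpha$. Concretely, I would set $\beta = c\eta$ for a fixed $c \in (0, 1)$ small enough that $\eta - 2\sigma^2\eta^2 - \beta - 2\sigma^2\beta^2 > 0$. With $\eta = 1/(4\sigma^2)$, one has $\eta - 2\sigma^2\eta^2 = \eta/2$; any $c$ with $c + c^2/8 < 1/2$ (e.g.\ $c = 1/4$) then makes $\eta - 2\sigma^2\eta^2 - \beta - 2\sigma^2\beta^2$ a strictly positive constant, so the denominator $n(\eta - 2\sigma^2\eta^2 - \beta - 2\sigma^2\beta^2)$ in Theorem \ref{thm:rig_excess_risk} is $\Theta(n)$.

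Because $\beta/\eta = c$ is a constant, $\log(\eta/\beta) = \log(1/c) = \gO(1)$, and Proposition \ref{pro:rel_info_gain} applied to the learning rates $2\eta\alpha$ and $2\beta\alpha$ yields
\begin{equation*}
\gamma_n(2\eta\alpha, 2\beta\alpha) = \gO\bigl((n\eta\alpha C_p\psi^2)^{1/\beta_p}\bigr) = \gO\bigl((n\alpha)^{1/\beta_p}\bigr),
\end{equation*}
absorbing $\eta$, $C_p$, $\psi$ and $c$ into the big-$\gO$. Plugging this into Theorem \ref{thm:rig_excess_risk} gives, with probability at least $1-\delta$,
\begin{equation*}
\int R_n(\vg)\mathrm{d}Q_{n,\eta,\alpha}(\vg) = \gO\!\left(\frac{(n\alpha)^{1/\beta_p}}{n} + \frac{1}{n\alpha}\right).
\end{equation*}

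The final step is to balance the two terms: solving $(n\alpha)^{1/\beta_p}/n \asymp 1/(n\alpha)$ reduces to $\alpha^{(1+\beta_p)/\beta_p} \asymp n^{-1/\beta_p}$, hence $\alpha \asymp n^{-1/(1+\beta_p)}$, which is exactly the choice prescribed in the statement. Substituting back, both terms become $n^{-\beta_p/(1+\beta_p)}$, giving the claimed rate. I do not expect any serious obstacle here; the only point requiring care is to keep $\beta$ a fixed fraction of $\eta$ so that both the denominator constants and the logarithmic factor $\log(\eta/\beta)$ remain bounded, after which the result is a short algebraic computation.
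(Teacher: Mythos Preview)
Your proposal is correct and follows essentially the same approach as the paper: the paper also plugs the polynomial-decay bound from Proposition~\ref{pro:rel_info_gain} into Theorem~\ref{thm:rig_excess_risk}, takes $\beta$ to be a fixed fraction of $\eta$ (specifically $\beta=\eta/8$, where you suggest $\beta=\eta/4$), and then sets $\alpha=n^{-1/(1+\beta_p)}$ to balance the $(n\alpha)^{1/\beta_p}/n$ and $1/(n\alpha)$ terms. The only slip is that the condition on $c$ should read $c+c^2/2<1/2$ rather than $c+c^2/8<1/2$, but your example $c=1/4$ satisfies the correct condition anyway.
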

The proof can be found in Appendix \ref{sec:app_rate_poly}. The rate of convergence in Corollary \ref{cor:rate_poly} is minimax-optimal \citep{dicker2017kernel}. For instance, for kernels satisfying the polynomial decay condition with $\beta_p = 2q$, the unit ball of the corresponding RKHS $\gH$ is a Sobolev space of $q-1$ times absolutely continuous and differentiable functions. In this case, we recover the standard rate of $n^{-\frac{2q}{1 + 2q}}$ for nonparametric regression \citep{tsybakov2009introduction}. By tuning $\alpha$ according to $\sigma$, we can also match the dependence on $\sigma$ in the minimax lower bound for generalised Sobolev ellipsoids in Example 15.23 of \citet{wainwright2019high}. Note, however, that this example considers regression with random design rather than fixed design. For exponential decay, we obtain the following rate of convergence.
\begin{corollary}
Suppose that the exponential eigenvalue decay condition is satisfied, $\eta = \frac{1}{4\sigma^2}$ and $\alpha = 1$. For any $\delta \in (0, 1]$, with probability at least $1 - \delta$,
\begin{equation*}
\int R_n(\vg)\mathrm{d}Q_{n, \eta, \alpha}(\vg) = \gO\bigg(\frac{\log^{1/\beta_e}(n)}{n}\bigg)\,.
\end{equation*}
\label{cor:rate_exp}
\end{corollary}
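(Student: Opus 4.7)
The plan is to apply Theorem~\ref{thm:rig_excess_risk} with the prescribed choices $\eta = 1/(4\sigma^2)$ and $\alpha = 1$, pick $\beta$ proportional to $\eta$, and then plug in the exponential-decay bound from Proposition~\ref{pro:rel_info_gain}.

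First, I would fix a concrete choice of $\beta$, for instance $\beta = \eta/4 = 1/(16\sigma^2)$, and verify the admissibility constraint $\eta - 2\sigma^2\eta^2 - \beta - 2\sigma^2\beta^2 > 0$. Direct computation gives $2\sigma^2\eta^2 = 1/(8\sigma^2) = \eta/2$ and $2\sigma^2\beta^2 = 1/(128\sigma^2) = \beta/8$, so
\begin{equation*}
\eta - 2\sigma^2\eta^2 - \beta - 2\sigma^2\beta^2 = \tfrac{1}{8\sigma^2} - \tfrac{9}{128\sigma^2} = \tfrac{7}{128\sigma^2}\,,
\end{equation*}
which is a strictly positive constant (independent of $n$). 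Consequently, the denominator in Theorem~\ref{thm:rig_excess_risk} is $\Theta(n)$.

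Next, I would bound the numerator. The kernel-norm term $\tfrac{1}{2\alpha}\|\fstar\|_{\gH}^2 = \tfrac{1}{2}\|\fstar\|_{\gH}^2$ and the confidence term $2\log(1/\delta)$ are both constants (with respect to $n$). For the relative information gain, note that with the chosen values $2\eta\alpha = 1/(2\sigma^2)$ and $2\beta\alpha = 1/(8\sigma^2)$, so the ratio $2\eta\alpha / 2\beta\alpha = 4$ is a constant. Applying Proposition~\ref{pro:rel_info_gain} (exponential decay case) with these constants,
\begin{equation*}
\gamma_n(2\eta\alpha, 2\beta\alpha) \leq \tfrac{1}{2}\bigl(\tfrac{2}{C_{e,2}}\log(n \cdot \tfrac{1}{2\sigma^2}\cdot C_{\beta_e})\bigr)^{1/\beta_e}\log 4 + \tfrac{1}{2}\log(4e) = \gO\bigl(\log^{1/\beta_e}(n)\bigr)\,,
\end{equation*}
as remarked immediately after Proposition~\ref{pro:rel_info_gain}.

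Combining these estimates in Theorem~\ref{thm:rig_excess_risk} gives
\begin{equation*}
\int R_n(\vg)\,\mathrm{d}Q_{n,\eta,\alpha}(\vg) \leq \frac{\gO(\log^{1/\beta_e}(n)) + \gO(1)}{\Theta(n)} = \gO\left(\frac{\log^{1/\beta_e}(n)}{n}\right)\,,
\end{equation*}
which is the claimed bound. I expect no real obstacle here: the proof is essentially a verification that the constraint on $\beta$ can be met by an $n$-independent choice (so the effective ``sample rate'' in the denominator is $\Theta(n)$) and that the dominant numerator term is the relative information gain, which by Proposition~\ref{pro:rel_info_gain} is already only polylogarithmic in $n$. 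The only place where care is needed is choosing $\beta$ small enough relative to $\eta$ that the constraint holds strictly; the polynomial-decay case is trickier precisely because $\alpha$ must shrink with $n$, but here $\alpha = 1$ suffices since $\|\fstar\|_{\gH}^2$ contributes only a constant.
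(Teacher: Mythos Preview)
Your proposal is correct and follows essentially the same approach as the paper: apply Theorem~\ref{thm:rig_excess_risk} with $\eta=1/(4\sigma^2)$, $\alpha=1$, and a fixed $\beta\propto\eta$ (the paper uses $\beta=\eta/8$ rather than your $\beta=\eta/4$, but this is immaterial), then invoke the exponential-decay bound from Proposition~\ref{pro:rel_info_gain}. Your verification of the admissibility constraint and the resulting $\Theta(n)$ denominator is exactly the point, and the rest is bookkeeping just as you describe.
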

The proof can be found in Appendix \ref{sec:app_rate_exp}. This rate of convergence for exponential eigenvalue decay is also minimax-optimal \citep{dicker2017kernel}.

\section{Discussion}

We have introduced a new quantity called the relative information gain, which measures the sensitivity of the information gain with respect to the variance of the noise in the responses. We demonstrated that the relative information gain arises naturally from the complexity term of a PAC-Bayesian excess risk bound for Gaussian process regression. Finally, we proved bounds on the relative information gain. When we combined these bounds with the excess risk bound in Theorem \ref{thm:rig_excess_risk}, we recovered minimax-optimal rates of convergence.

In future work, the extent to which the PAC-Bayesian bounds in this paper can be generalised to other regression models could be invesitgated. For instance, one could consider regression with random design, misspecified regression or regression with sparse or other approximate Gaussian process predictors.

% Acknowledgments---Will not appear in anonymized version
\acks{We would like to thank David Reeb for numerous helpful discussions about and comments on an early version of this work. Hamish was funded by the European Research Council (ERC), under the European Union’s Horizon 2020 research and innovation programme (grant agreement 950180).}

\bibliography{main}

\appendix

\section{Auxiliary Lemmas for Section \ref{sec:deff_ig_rig}}

First, we find an expression for the effective dimension in terms of the eigenvalues of the kernel matrix. This expression is well-known (see e.g.\,, Chapter 2.6 in \citealp{williams2006gaussian} or Chapter 5.4 in \citealp{hastie2009elements}).
\begin{lemma}
Let $(\lambda_i)_{i=1}^{n}$ be the eigenvalues of $\mK_n$. For all $n \geq 1$ and $\eta > 0$,
\begin{equation*}
d_n(\eta) = \sum_{i=1}^{n}\frac{\eta\lambda_i}{1 + \eta\lambda_i}\,.
\end{equation*}
\label{lem:deff_eqdef}
\end{lemma}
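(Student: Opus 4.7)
The plan is to use the spectral decomposition of $\mK_n$ to reduce the trace to a sum over eigenvalues. Since $\mK_n$ is symmetric positive semi-definite, write $\mK_n = \mU \mLambda \mU^{\top}$, where $\mU$ is orthogonal and $\mLambda = \mathrm{diag}(\lambda_1, \dots, \lambda_n)$. Since $\mU$ is orthogonal, $\mK_n + \tfrac{1}{\eta}\mI = \mU(\mLambda + \tfrac{1}{\eta}\mI)\mU^{\top}$, and the inverse is $\mU(\mLambda + \tfrac{1}{\eta}\mI)^{-1}\mU^{\top}$ (note that $\mLambda + \tfrac{1}{\eta}\mI$ is strictly positive for $\eta > 0$, so the inverse exists even if some $\lambda_i$ vanish).

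Substituting into the definition of $d_n(\eta)$ and using the cyclic property of the trace,
\begin{equation*}
d_n(\eta) = \mathrm{tr}\bigl(\mU \mLambda (\mLambda + \tfrac{1}{\eta}\mI)^{-1} \mU^{\top}\bigr) = \mathrm{tr}\bigl(\mLambda (\mLambda + \tfrac{1}{\eta}\mI)^{-1}\bigr) = \sum_{i=1}^{n}\frac{\lambda_i}{\lambda_i + 1/\eta} = \sum_{i=1}^{n}\frac{\eta\lambda_i}{1 + \eta\lambda_i}\,,
\end{equation*}
which is the desired identity. There is no real obstacle here; this is a one-line computation once the spectral decomposition is invoked, and the only point worth mentioning is that $\mLambda + \tfrac{1}{\eta}\mI$ is always invertible for $\eta > 0$ so Assumption \ref{ass:pd} is not required.
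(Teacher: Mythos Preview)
Your proof is correct and takes essentially the same approach as the paper: both use the spectral decomposition of $\mK_n$ to identify the eigenvalues of $\mK_n(\mK_n + \tfrac{1}{\eta}\mI)^{-1}$ as $\eta\lambda_i/(1+\eta\lambda_i)$ and then sum them to compute the trace. The only cosmetic difference is that the paper works eigenvector-by-eigenvector while you write the full diagonalization $\mK_n = \mU\mLambda\mU^{\top}$ at once.
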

\begin{proof}
Let $(\vv_i)_{i=1}^{n}$ be the eigenvectors of $\mK_n$. First, we notice that for all $i \in [n]$,
\begin{equation*}
(\mK_n + \tfrac{1}{\eta}\mI)\vv_i = (\lambda_i + 1/\eta)\vv_i\,.
\end{equation*}
It follows that
\begin{equation}
\mK_n(\mK_n + \tfrac{1}{\eta}\mI)^{-1}\vv_i = \tfrac{1}{\lambda_i + 1/\eta}\mK_n\vv_i = \tfrac{\eta\lambda_i}{1 + \eta\lambda_i}\vv_i\,.\label{eqn:eigs}
\end{equation}
Since the trace of a matrix is equal to the sum of its eigenvalues, this concludes the proof.
\end{proof}

\begin{lemma}
Let $(\lambda_i)_{i=1}^{n}$ be the eigenvalues of $\mK_n$. For all $n \geq 1$ and $\eta \geq 0$,
\begin{equation*}
\gamma_n(\eta) = \frac{1}{2}\sum_{i=1}^{n}\log(1 + \eta \lambda_i)\,.
\end{equation*}
\label{lem:ig_eqdef}
\end{lemma}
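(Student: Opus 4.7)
The plan is to reduce the log-determinant to a sum of logs via the spectral decomposition of $\mK_n$, paralleling the argument used in Lemma \ref{lem:deff_eqdef}. The statement is essentially a one-line calculation once we identify the eigenvalues of $\eta\mK_n + \mI$, so the bulk of the proof is bookkeeping rather than an obstacle.

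First I would write $\mK_n = \mU\mathrm{diag}(\lambda_1,\dots,\lambda_n)\mU^{\top}$ for an orthogonal matrix $\mU$ whose columns are the eigenvectors $(\vv_i)_{i=1}^{n}$. Then $\eta\mK_n + \mI = \mU\mathrm{diag}(\eta\lambda_1 + 1,\dots,\eta\lambda_n + 1)\mU^{\top}$, so the eigenvalues of $\eta\mK_n + \mI$ are exactly $(1 + \eta\lambda_i)_{i=1}^{n}$. Alternatively, one can observe directly from the computation $\eta\mK_n\vv_i + \vv_i = (\eta\lambda_i + 1)\vv_i$ that each $\vv_i$ is an eigenvector of $\eta\mK_n + \mI$ with eigenvalue $1 + \eta\lambda_i$.

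Next I would apply the standard identity $\det(\mA) = \prod_i \mu_i$, where $(\mu_i)$ are the eigenvalues of $\mA$ (valid here since $\eta\mK_n + \mI$ is symmetric, hence diagonalizable). This gives $\det(\eta\mK_n + \mI) = \prod_{i=1}^{n}(1 + \eta\lambda_i)$, and taking $\tfrac{1}{2}\log$ of both sides turns the product into the claimed sum. The only minor point to check is that $1 + \eta\lambda_i > 0$ so the logarithms are well-defined: this holds because $\mK_n$ is positive semi-definite (so $\lambda_i \geq 0$) and $\eta \geq 0$, whence $1 + \eta\lambda_i \geq 1 > 0$.

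I do not anticipate any real obstacle: the result is a direct consequence of the spectral theorem for symmetric matrices combined with the multiplicativity of the determinant on eigenvalues. The only care needed is to accommodate the boundary case $\eta = 0$, where both sides evaluate to $0$ trivially.
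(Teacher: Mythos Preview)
Your proposal is correct and follows essentially the same route as the paper: identify the eigenvalues of $\eta\mK_n + \mI$ as $(1+\eta\lambda_i)_{i=1}^{n}$ via the eigenvector relation, then use that the determinant is the product of eigenvalues and take the logarithm. Your added remarks on positivity and the $\eta=0$ boundary case are harmless extras.
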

\begin{proof}
Let $(\vv_i)_{i=1}^{n}$ be the eigenvectors of $\mK_n$. We notice that for all $i \in [n]$,
\begin{equation*}
(\eta\mK_n + \mI)\vv_i = (1 + \eta\lambda_i)\vv_i\,.
\end{equation*}
Since the determinant of a matrix is equal to the product of its eigenvalues,
\begin{equation*}
\gamma_n(\eta) = \frac{1}{2}\log\det(\eta\mK_n + \mI) = \frac{1}{2}\log\prod_{i=1}^{n}(1 + \eta\lambda_i) = \frac{1}{2}\sum_{i=1}^{n}\log(1 + \eta\lambda_i)\,.
\end{equation*}
This concludes the proof.
\end{proof}
An obvious consequence of Lemma \ref{lem:ig_eqdef} is that for all $n \geq 1$ and $\eta > \beta \geq 0$,
\begin{equation*}
\gamma_n(\eta, \beta) = \frac{1}{2}\sum_{i=1}^{n}\log\frac{1 + \eta \lambda_i}{1 + \beta\lambda_i}\,.
\end{equation*}

\begin{lemma}
For any $\lambda \geq 0$, the function $f(\beta) := \frac{1}{\eta - \beta}\log\frac{1 + \eta\lambda}{1 + \beta\lambda}$ is decreasing on $[0, \eta)$.
\label{lem:rig_decr}
\end{lemma}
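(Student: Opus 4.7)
The case $\lambda = 0$ is vacuous since $f$ is identically zero, so suppose $\lambda > 0$. My plan is to compute $f'(\beta)$ by direct differentiation and reduce the resulting sign question to the elementary inequality $\log(1 + u) \leq u$ for $u \geq 0$.

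The first step is to apply the quotient rule to $f(\beta) = [\log(1 + \eta\lambda) - \log(1 + \beta\lambda)]/(\eta - \beta)$. A short calculation gives
\begin{equation*}
(\eta - \beta)^2 f'(\beta) = \log\frac{1 + \eta\lambda}{1 + \beta\lambda} - \frac{(\eta - \beta)\lambda}{1 + \beta\lambda}.
\end{equation*}
Since $(\eta - \beta)^2 > 0$ on $[0,\eta)$, establishing $f'(\beta) \leq 0$ amounts to showing that the right-hand side above is $\leq 0$. I would then introduce $u := (\eta - \beta)\lambda/(1 + \beta\lambda)$, which is non-negative whenever $\beta \in [0,\eta)$ and $\lambda \geq 0$, and verify by a line of algebra that $1 + u = (1 + \eta\lambda)/(1 + \beta\lambda)$. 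The displayed expression then rearranges to exactly $\log(1 + u) - u$, which is $\leq 0$ by the standard inequality, with equality only at $u = 0$. This yields $f'(\beta) \leq 0$ on $[0,\eta)$ and hence $f$ is (strictly, for $\lambda > 0$) decreasing.

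There is no real obstacle; the entire argument collapses to $\log(1+u) \leq u$. A cleaner conceptual route worth mentioning is that $f(\beta)$ is precisely the secant slope of $g(t) := \log(1 + t\lambda)$ between the points $\beta$ and $\eta$. Because $g$ is concave on $[0,\infty)$ (indeed $g''(t) = -\lambda^2/(1 + t\lambda)^2 \leq 0$), the secant slope to a fixed right endpoint is non-increasing in its left endpoint $\beta$ — this is the classical monotonicity property of divided differences of concave functions — giving the desired monotonicity with no derivative computation at all.
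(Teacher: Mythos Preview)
Your proof is correct and follows essentially the same approach as the paper: compute $f'(\beta)$ and reduce the sign question to the inequality $\log(1+u)\le u$ with $u=(\eta-\beta)\lambda/(1+\beta\lambda)$. The secant-slope/concavity remark you add is a nice alternative viewpoint not present in the paper, but the main argument is the same.
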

\begin{proof}
One can verify that the derivative of $f$ is
\begin{equation*}
f^{\prime}(\beta) = \frac{1}{(\eta - \beta)^2}\log\frac{1 + \eta\lambda}{1 + \beta\lambda} - \frac{\lambda}{(\eta - \beta)(1 + \beta\lambda)}\,.
\end{equation*}
Using the inequality $\log(1 + x) \leq x$ for $x \geq 0$, we obtain
\begin{equation*}
\frac{1}{(\eta - \beta)^2}\log\frac{1 + \eta\lambda}{1 + \beta\lambda} = \frac{1}{(\eta - \beta)^2}\log\left(1 + \frac{(\eta - \beta)\lambda}{1 + \beta\lambda}\right) \leq \frac{\lambda}{(\eta - \beta)(1 + \beta\lambda)}\,.
\end{equation*}
It follows that $f^{\prime}(\beta) \leq 0$, and so $f$ must be decreasing on $[0, \eta)$.
\end{proof}

\section{Effective Dimension and Minimum Mean Squared Error}
\label{sec:mmse_deff}

We explain how Proposition \ref{pro:deff_diff} can be derived from Theorem 1 in \citet{guo2005mutual}. We write the model in \eqref{eqn:model} as
\begin{equation*}
\vy_n = \vfstar_n + \veps_n\,,
\end{equation*}
where $\vfstar_n \sim \gN(0, \mK_n)$ and $\veps_n \sim \gN(0, (1/\eta)\mI)$. For an estimate $f(\vy_n)$ of $\vfstar_n$, the mean squared error is $\sE[\|f(\vy_n) - \vfstar_n\|_2^2]$. The minimum achievable mean squared error (w.r.t. the choice of $f$) depends on $\eta$. Let us therefore define the function
\begin{equation*}
\mathrm{MMSE}(\eta) := \min_{f}\sE[\|f(\vy_n) - \vfstar_n\|_2^2]\,.
\end{equation*}
Theorem 1 in \citet{guo2005mutual} states that
\begin{equation*}
\frac{1}{2}\mathrm{MMSE}(\eta) = \gamma_n^{\prime}(\eta)\,.
\end{equation*}
If we can show that $\mathrm{MMSE}(\eta) = d_n(\eta)/\eta$, then this matches the identity in Proposition \ref{pro:deff_diff}. By the least-squares property of the conditional expectation (see e.g.\,, Chapter 9.4 in \citealp{williams1991probability}), the minimum mean squared error w.r.t. $f$ is achieved when
\begin{equation*}
f(\vy_n) = \sE[\vfstar_n|\vy_n] = \mK_n(\mK_n + \tfrac{1}{\eta}\mI)^{-1}\vy_n\,.
\end{equation*}
Thus we can express the minimum mean squared error as
\begin{equation*}
\mathrm{MMSE}(\eta) = \sE[\|\sE[\vfstar_n|\vy_n] - \vfstar_n\|_2^2] = \sE[\|\vfstar_n\|_2^2 - \|\mK_n(\mK_n + \tfrac{1}{\eta}\mI)^{-1}\vy_n\|_2^2]\,.
\end{equation*}
Since $\mK_n(\mK_n + \tfrac{1}{\eta}\mI)^{-1}\vy_n \sim \gN(0, \mK_n(\mK_n + \tfrac{1}{\eta}\mI)^{-1}\mK_n)$, we obtain
\begin{align*}
\sE[\|\vfstar_n\|_2^2 - \|\mK_n(\mK_n + \tfrac{1}{\eta})^{-1}\vy_n\|_2^2] &= \mathrm{tr}(\mK_n) - \mathrm{tr}(\mK_n(\mK_n + \tfrac{1}{\eta})^{-1}\mK_n)\\
&= \mathrm{tr}(\mK_n(\mI - (\mK_n + \tfrac{1}{\eta})^{-1}\mK_n))\\
&= \tfrac{1}{\eta}\mathrm{tr}(\mK_n(\mK_n + \tfrac{1}{\eta}\mI)^{-1})\,.
\end{align*}
We conclude that $\mathrm{MMSE}(\eta) = d_n(\eta)/\eta$.

\section{PAC-Bayesian Bounds}
\label{sec:pac_bayes}

\subsection{Auxiliary Lemmas}
\label{sec:aux_pac_bayes}

We will use the following variational representation of the KL divergence, which was proved by \citet{donsker1976asymptotic}.
\begin{lemma}
For any measurable function $h:\sR^n \to \sR$ and any probability measure $P \in \Delta(\sR^n)$ such that $\int\exp(h(\vg))\mathrm{d}P(\vg) < \infty$,
\begin{equation*}
\sup_{Q \in \Delta(\sR^n)}\left\{\int h(\vg)\mathrm{d}Q(\vg) - \KL(Q||P)\right\} = \log\int\exp(h(\vg))\mathrm{d}P(\vg)\,.
\end{equation*}
If $h$ is bounded above, then the supremum is achieved when
\begin{equation*}
\dd{}{Q}{P}(\vg) \propto \exp(h(\vg))\,.
\end{equation*}
\label{lem:donsker}
\end{lemma}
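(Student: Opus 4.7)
The plan is to reduce the variational problem to a non-negativity statement about the KL divergence. Define the putative maximiser $Q^{\star}$ by
\begin{equation*}
\dd{}{Q^{\star}}{P}(\vg) = \frac{\exp(h(\vg))}{Z}\,, \qquad Z := \int\exp(h(\vg))\,\mathrm{d}P(\vg)\,,
\end{equation*}
which is well-defined because $Z$ is finite by hypothesis, and is strictly positive whenever $h > -\infty$ on a set of positive $P$-measure (the lemma being trivial otherwise). Thus $Q^{\star}$ is a valid probability measure on $\sR^n$ with $Q^{\star} \ll P$.

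The key algebraic identity to establish is that, for every $Q \in \Delta(\sR^n)$ with $Q \ll P$,
\begin{equation*}
\int h(\vg)\,\mathrm{d}Q(\vg) - \KL(Q\|P) = \log Z - \KL(Q\|Q^{\star})\,.
\end{equation*}
I would prove this by substituting $h = \log Z + \log(\dd{}{Q^{\star}}{P})$ into the left-hand side, combining the two logarithms and invoking the chain rule for Radon--Nikodym derivatives to recognise the resulting expectation $\int \log(\dd{}{Q^{\star}}{Q})\,\mathrm{d}Q$ as $-\KL(Q\|Q^{\star})$. Because $\KL(Q\|Q^{\star}) \geq 0$ with equality iff $Q = Q^{\star}$, the identity immediately yields the upper bound $\log Z$ and identifies $Q^{\star}$ as the unique maximiser among measures absolutely continuous with respect to $P$.

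For any $Q \in \Delta(\sR^n)$ that is not absolutely continuous with respect to $P$, $\KL(Q\|P) = +\infty$, so the inequality $\int h\,\mathrm{d}Q - \KL(Q\|P) \leq \log Z$ is trivially satisfied (under the standard convention that the left-hand side is $-\infty$ in the ambiguous case). Taking the supremum over all $Q$ therefore yields the claimed identity. The additional hypothesis that $h$ is bounded above ensures that $\int h\,\mathrm{d}Q^{\star}$ cannot equal $+\infty$, so that $\KL(Q^{\star}\|P) = \int h\,\mathrm{d}Q^{\star} - \log Z$ is well-defined and finite, and in particular $Q^{\star}$ actually attains the supremum rather than being a merely limiting value.

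The only real subtlety will be measure-theoretic bookkeeping: justifying the logarithmic manipulations for every $Q$ with finite $\KL(Q\|P)$ and dealing cleanly with the edge case where $h$ may take the value $-\infty$ on parts of the space. Since this lemma is the classical Donsker--Varadhan variational formula, I would appeal to a standard reference rather than rehearse these details in full.
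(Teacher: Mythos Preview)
Your proposal is correct and follows the standard route to the Donsker--Varadhan formula: introduce the Gibbs measure $Q^{\star}$, rewrite the objective as $\log Z - \KL(Q\|Q^{\star})$, and invoke non-negativity of the KL divergence. The treatment of the boundedness hypothesis and the edge cases is sound.

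As for comparison with the paper: the paper does not actually prove this lemma. It simply states the result and attributes it to \citet{donsker1976asymptotic}, so there is nothing to compare against. Your sketch is exactly the argument one would expect in a self-contained proof, and your closing remark that one could just cite a standard reference is precisely what the paper does.
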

By rearranging the statement involving the supremum, we see that
\begin{equation*}
\inf_{Q \in \Delta(\sR^n)}\left\{\int h(\vg)\mathrm{d}Q(\vg) + \KL(Q||P)\right\} = -\log\int\exp(-h(\vg))\mathrm{d}P(\vg)\,.
\end{equation*}
If $h$ is bounded below, then the infimum is achieved when
\begin{equation*}
\dd{}{Q}{P}(\vg) \propto \exp(-h(\vg))\,.
\end{equation*}
\begin{lemma}
The function $f(\zeta) := \|(\mK_n(\mK_n + \zeta\mI)^{-1} - \mI)\vy_n\|_2^2$ is increasing on $[0, \infty)$.
\end{lemma}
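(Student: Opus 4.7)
The plan is to reduce the claim to the spectral decomposition of $\mK_n$. First I would observe the algebraic identity
\[
\mK_n(\mK_n + \zeta\mI)^{-1} - \mI = -\zeta(\mK_n + \zeta\mI)^{-1},
\]
so that $f(\zeta) = \zeta^2\vy_n^{\top}(\mK_n + \zeta\mI)^{-2}\vy_n$. This form is much easier to analyse than the original difference, because now the dependence on $\zeta$ is isolated in a simple scalar factor and in a resolvent whose spectrum we understand exactly.

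Next I would diagonalise the kernel matrix. Write $\mK_n = \mV\mathrm{diag}(\lambda_1,\dots,\lambda_n)\mV^{\top}$ with $\lambda_i \geq 0$ and orthogonal $\mV$, and set $\tilde{\vy}_n := \mV^{\top}\vy_n$. Substituting into the expression above yields
\[
f(\zeta) = \sum_{i=1}^{n}\frac{\zeta^2}{(\lambda_i + \zeta)^2}\,\tilde{y}_i^{\,2}.
\]
It then suffices to check that each map $g_i(\zeta) := \zeta^2/(\lambda_i + \zeta)^2$ is non-decreasing on $[0, \infty)$, since $f$ is a non-negative linear combination of the $g_i$.

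For the per-eigenvalue monotonicity, the quickest route is simply to differentiate: one obtains $g_i^{\prime}(\zeta) = 2\zeta\lambda_i/(\lambda_i + \zeta)^3 \geq 0$ for $\zeta \geq 0$. Alternatively, write $g_i(\zeta) = (1 - \lambda_i/(\lambda_i + \zeta))^2$ and note that $\lambda_i/(\lambda_i + \zeta)$ is a non-increasing, $[0,1]$-valued function of $\zeta$, so its complement is a non-decreasing, non-negative function, whose square is therefore non-decreasing. Either argument shows $g_i$ is non-decreasing for every $i$, so $f$ is non-decreasing on $[0, \infty)$, which is what the lemma asserts.

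There is no real obstacle here; the only thing to be slightly careful about is whether the lemma is meant in the strict or weak sense. The above argument gives monotone non-decreasing in general, and strict increase on any interval where at least one component $\tilde{y}_i$ with $\lambda_i > 0$ is non-zero, which under Assumption \ref{ass:pd} reduces to the harmless condition $\vy_n \neq 0$.
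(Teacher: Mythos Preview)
Your proof is correct and follows essentially the same route as the paper: both use the identity $\mK_n(\mK_n + \zeta\mI)^{-1} - \mI = -\zeta(\mK_n + \zeta\mI)^{-1}$, reduce to the per-eigenvalue functions $g_i(\zeta) = \zeta^2/(\lambda_i + \zeta)^2$, and verify that each $g_i$ is non-decreasing. The only cosmetic differences are that the paper checks $g_i$'s monotonicity via the algebraic inequality $\zeta_1^2(\zeta_2 + \lambda_i)^2 \leq \zeta_2^2(\zeta_1 + \lambda_i)^2$ and phrases the conclusion as a Loewner ordering on $(\mK_n(\mK_n + \zeta\mI)^{-1} - \mI)^2$, whereas you differentiate and sum the scalar terms directly.
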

\label{lem:inc_risk}
\begin{proof}
We notice that
\begin{align*}
\mK_n(\mK_n + \zeta\mI)^{-1} - \mI &= (\mK_n + \zeta\mI)(\mK_n + \zeta\mI)^{-1} - \mI - \zeta(\mK_n + \zeta\mI)^{-1}\\
&= - \zeta(\mK_n + \zeta\mI)^{-1}\,.
\end{align*}
Thus from \eqref{eqn:eigs}, we see that the eigenvalues of $(\mK_n(\mK_n + \zeta\mI)^{-1} - \mI)^2$ are $(\zeta^2/(\zeta + \lambda_i)^2)_{i=1}^{n}$. Fix $\zeta_2 \geq \zeta_1 \geq 0$. For each $i \in [n]$,
\begin{equation*}
\zeta_1^2(\zeta_2 + \lambda_i)^2 = \zeta_1^2\zeta_2^2 + 2\zeta_1^2\zeta_2\lambda_i + \zeta_1^2\lambda_i^2 \leq \zeta_1^2\zeta_2^2 + 2\zeta_1\zeta_2^2\lambda_i + \zeta_2^2\lambda_i^2 = \zeta_2^2(\zeta_1 + \lambda_i)^2\,.
\end{equation*}
It follows that each of the functions $g_i(\zeta) := \zeta^2/(\zeta + \lambda_i)^2$ is increasing on $[0, \infty)$. Hence,
\begin{equation*}
(\mK_n(\mK_n + \zeta_1\mI)^{-1} - \mI)^2 \preccurlyeq (\mK_n(\mK_n + \zeta_2\mI)^{-1} - \mI)^2\,.
\end{equation*}
This concludes the proof.
\end{proof}

\begin{lemma}
For all $n \geq 1$, $\eta > 0$ and $\alpha \geq 0$,
\begin{equation*}
\mK_{n,\eta,\alpha} = \alpha\mK_n(2\eta\alpha\mK_n + \mI)^{-1}\,.
\end{equation*}
\label{lem:cov_as_deff}
\end{lemma}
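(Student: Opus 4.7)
The plan is to proceed by pure matrix algebra starting from the explicit expression for $\mK_{n,\eta,\alpha}$ given earlier in the paper, namely
\begin{equation*}
\mK_{n,\eta,\alpha} = \alpha\mK_n - \alpha\mK_n(\mK_n + \tfrac{1}{2\eta\alpha}\mI)^{-1}\mK_n\,.
\end{equation*}
No probabilistic or spectral input is needed; I only use that $\mK_n + \tfrac{1}{2\eta\alpha}\mI$ is invertible, which is automatic since $\mK_n$ is positive semi-definite and $2\eta\alpha > 0$.

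First, I would factor $\alpha\mK_n$ out on the left, rewriting the expression as $\alpha\mK_n\bigl[\mI - (\mK_n + \tfrac{1}{2\eta\alpha}\mI)^{-1}\mK_n\bigr]$. Next, I apply the elementary identity $\mI - \mA^{-1}\mB = \mA^{-1}(\mA - \mB)$ with $\mA = \mK_n + \tfrac{1}{2\eta\alpha}\mI$ and $\mB = \mK_n$, which collapses the bracket to $(\mK_n + \tfrac{1}{2\eta\alpha}\mI)^{-1}\cdot \tfrac{1}{2\eta\alpha}\mI$, giving
\begin{equation*}
\mK_{n,\eta,\alpha} = \tfrac{1}{2\eta}\mK_n(\mK_n + \tfrac{1}{2\eta\alpha}\mI)^{-1}\,.
\end{equation*}

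Finally, I pull the scalar $\tfrac{1}{2\eta\alpha}$ out of the inverse using $(\mK_n + \tfrac{1}{2\eta\alpha}\mI)^{-1} = 2\eta\alpha(2\eta\alpha\mK_n + \mI)^{-1}$, and the scalar factors $\tfrac{1}{2\eta}\cdot 2\eta\alpha$ cancel down to $\alpha$, yielding the claimed identity $\mK_{n,\eta,\alpha} = \alpha\mK_n(2\eta\alpha\mK_n + \mI)^{-1}$. There is no real obstacle here; each step is a one-line manipulation, and the only thing to be careful about is keeping the $2\eta\alpha$ factors correctly placed. As a sanity check, one can multiply the final expression on the right by $(2\eta\alpha\mK_n + \mI)$ and verify it equals $\alpha\mK_n$, or equivalently multiply the original expression on the right by the same matrix and observe that the cross terms cancel.
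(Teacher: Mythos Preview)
Your proof is correct and is essentially the same as the paper's: both reduce the defining expression to $\tfrac{1}{2\eta}\mK_n(\mK_n + \tfrac{1}{2\eta\alpha}\mI)^{-1}$ by the same add-and-subtract trick (you phrase it as $\mI - \mA^{-1}\mB = \mA^{-1}(\mA-\mB)$, the paper writes $\mK_n = (\mK_n + \tfrac{1}{2\eta\alpha}\mI) - \tfrac{1}{2\eta\alpha}\mI$), and then rescale the inverse to pull out the factor $2\eta\alpha$.
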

\begin{proof}
By adding zero to $\mK_{n,\eta,\alpha}$, we obtain
\begin{align*}
\mK_{n,\eta,\alpha} &= \alpha\mK_n - \alpha\mK_n(\mK_n + \tfrac{1}{2\eta\alpha}\mI)^{-1}(\mK_n + \tfrac{1}{2\eta\alpha}\mI - \tfrac{1}{2\eta\alpha}\mI)\\
&= \tfrac{1}{2\eta}\mK_n(\mK_n + \frac{1}{2\eta\alpha}\mI)^{-1}\\
&= \alpha\mK_n(2\eta\alpha \mK_n + \mI)^{-1}\,.
\end{align*}
This concludes the proof.
\end{proof}
As a result of Lemma \ref{lem:cov_as_deff}, $\mathrm{tr}(\mK_{n,\eta,\alpha}) = \frac{1}{2\eta}d_n(2\eta\alpha)$.

\begin{corollary}
For all $n \geq 1$, $\eta > 0$ and $\alpha \geq 0$,
\begin{equation}
\alpha\mK_{n,\eta,\alpha}^{-1}\mK_n = 2\eta\alpha\mK_n + \mI\,.
\end{equation}
\label{cor:inv_bollocks}
\end{corollary}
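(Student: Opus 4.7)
The plan is to obtain this corollary as a one-line algebraic consequence of Lemma \ref{lem:cov_as_deff}, which already gives a clean factored form for $\mK_{n,\eta,\alpha}$. Concretely, Lemma \ref{lem:cov_as_deff} asserts $\mK_{n,\eta,\alpha} = \alpha\mK_n(2\eta\alpha\mK_n + \mI)^{-1}$. Under Assumption \ref{ass:pd}, $\mK_n$ is strictly positive-definite, and for $\eta,\alpha > 0$ the matrix $2\eta\alpha\mK_n + \mI$ is likewise positive-definite, so both factors on the right are invertible. Therefore $\mK_{n,\eta,\alpha}$ is invertible and one may safely take the inverse.

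First I would invert both sides to get $\mK_{n,\eta,\alpha}^{-1} = \tfrac{1}{\alpha}(2\eta\alpha\mK_n + \mI)\mK_n^{-1}$, using the standard identity $(AB)^{-1} = B^{-1}A^{-1}$. Then I would multiply on the right by $\alpha\mK_n$, at which point the trailing $\mK_n^{-1}\mK_n$ collapses to the identity, leaving precisely $\alpha\mK_{n,\eta,\alpha}^{-1}\mK_n = 2\eta\alpha\mK_n + \mI$, which is the claimed identity.

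There is no real obstacle here beyond bookkeeping: the only non-trivial ingredient is the invertibility of $\mK_n$, which is exactly Assumption \ref{ass:pd}, and the invertibility of $2\eta\alpha\mK_n + \mI$, which is automatic because it is the sum of a positive semi-definite matrix and the identity. If one wanted to avoid assuming $\mK_n$ is strictly positive-definite, the same identity could be derived by instead rearranging the definition $\mK_{n,\eta,\alpha} = \alpha\mK_n - \alpha\mK_n(\mK_n + \tfrac{1}{2\eta\alpha}\mI)^{-1}\mK_n$ directly, using the push-through identity $\mK_n(\mK_n + c\mI)^{-1} = (\mK_n + c\mI)^{-1}\mK_n$, but given Assumption \ref{ass:pd} the route through Lemma \ref{lem:cov_as_deff} is the cleanest.
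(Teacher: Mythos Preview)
Your proposal is correct and follows essentially the same route as the paper: both start from Lemma \ref{lem:cov_as_deff} and obtain the identity by a one-step algebraic rearrangement. The paper left-multiplies $\mK_{n,\eta,\alpha} = \alpha\mK_n(2\eta\alpha\mK_n + \mI)^{-1}$ by $\mK_{n,\eta,\alpha}^{-1}$ and then right-multiplies by $2\eta\alpha\mK_n + \mI$, whereas you invert the whole product first and then right-multiply by $\alpha\mK_n$; these are the same manipulation in a different order, and both rely on the invertibility guaranteed by Assumption \ref{ass:pd}.
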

\begin{proof}
From Lemma \ref{lem:cov_as_deff}, we have the identity
\begin{equation*}
\alpha\mK_{n,\eta,\alpha}^{-1}\mK_n(2\eta\alpha\mK_n + \mI)^{-1} = \mI\,.
\end{equation*}
Post-multiplying both sides by $2\eta\alpha\mK_n + \mI$ gives the desired result.
\end{proof}

\begin{corollary}
For all $n \geq 1$, $\eta > \beta > 0$ and $\alpha \geq 0$,
\begin{equation*}
\frac{1}{2}\log\frac{\det(\mK_{n,\beta,\alpha})}{\det(\mK_{n,\eta,\alpha})} = \gamma_n(2\eta\alpha, 2\beta\alpha)\,.
\end{equation*}
\label{cor:logdet}
\end{corollary}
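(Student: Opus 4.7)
The plan is to directly compute the log-determinant ratio by expressing $\mK_{n,\eta,\alpha}$ and $\mK_{n,\beta,\alpha}$ in the compact form provided by the preceding auxiliary results. Specifically, I would start from Corollary \ref{cor:inv_bollocks}, which gives
\[
\alpha \mK_{n,\eta,\alpha}^{-1}\mK_n = 2\eta\alpha\mK_n + \mI \qquad \text{and} \qquad \alpha \mK_{n,\beta,\alpha}^{-1}\mK_n = 2\beta\alpha\mK_n + \mI.
\]
Taking determinants of both identities (using that $\mK_n$ is strictly positive-definite by Assumption \ref{ass:pd}, so $\det(\mK_n) > 0$, and treating the $\alpha = 0$ case as a trivial degeneracy or restricting to $\alpha > 0$) yields
\[
\alpha^n\det(\mK_n)/\det(\mK_{n,\eta,\alpha}) = \det(2\eta\alpha\mK_n + \mI)
\]
and an analogous identity with $\beta$ in place of $\eta$.

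Forming the ratio of the two identities makes the common factor $\alpha^n\det(\mK_n)$ cancel, leaving
\[
\frac{\det(\mK_{n,\beta,\alpha})}{\det(\mK_{n,\eta,\alpha})} = \frac{\det(2\eta\alpha\mK_n + \mI)}{\det(2\beta\alpha\mK_n + \mI)}.
\]
Taking $\tfrac{1}{2}\log$ of both sides, splitting the logarithm of the quotient, and applying the definition $\gamma_n(\zeta) = \tfrac{1}{2}\log\det(\zeta\mK_n + \mI)$ from \eqref{eqn:deff_ig_eig} gives $\gamma_n(2\eta\alpha) - \gamma_n(2\beta\alpha)$, which by definition equals $\gamma_n(2\eta\alpha, 2\beta\alpha)$. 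There is no real obstacle here: the entire content of the statement is packaged inside Corollary \ref{cor:inv_bollocks}, and the only care needed is the bookkeeping of the $\alpha^n\det(\mK_n)$ factor, which cancels cleanly in the ratio. (Alternatively, one could invoke Lemma \ref{lem:cov_as_deff} and the multiplicativity of the determinant to arrive at the same identity in one line.)
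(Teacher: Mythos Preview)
Your proposal is correct and essentially identical to the paper's proof: the paper invokes Lemma~\ref{lem:cov_as_deff} directly (your parenthetical alternative) to write $\det(\mK_{n,\eta,\alpha}) = \det(\alpha\mK_n)\det((2\eta\alpha\mK_n+\mI)^{-1})$ and lets the common $\det(\alpha\mK_n)$ factor cancel in the ratio, while your primary route via Corollary~\ref{cor:inv_bollocks} is the same computation after one matrix inversion. Either way the content is the cancellation you identified, and both approaches need $\alpha>0$ (and Assumption~\ref{ass:pd}) for the determinants to be well-behaved, as you noted.
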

\begin{proof}
Using Lemma \ref{lem:cov_as_deff} and standard properties of determinants,
\begin{equation*}
\frac{1}{2}\log\frac{\det(\mK_{n,\beta,\alpha})}{\det(\mK_{n,\eta,\alpha})} = \frac{1}{2}\log\frac{\det((2\beta\alpha\mK_n + \mI)^{-1})}{\det((2\eta\alpha\mK_n + \mI)^{-1})} = \frac{1}{2}\log\frac{\det(2\eta\alpha\mK_n + \mI)}{\det(2\beta\alpha\mK_n + \mI)}\,.
\end{equation*}
This concludes the proof.
\end{proof}

\begin{corollary}
For all $n \geq 1$, $\eta > \beta > 0$ and $\alpha > 0$,
\begin{equation*}
\mathrm{tr}(\mK_{n,\eta,\alpha}) \leq \frac{1}{\eta - \beta}\gamma_n(2\eta\alpha, 2\beta\alpha)\,.
\end{equation*}
\label{cor:trace_bollocks}
\end{corollary}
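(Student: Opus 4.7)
The plan is to combine the identity for $\mathrm{tr}(\mK_{n,\eta,\alpha})$ that follows from Lemma \ref{lem:cov_as_deff} with the bottom half of the sandwich in Proposition \ref{pro:interp} (equivalently, the first inequality in Proposition \ref{pro:sandwich}). There should be essentially no obstacle here: this is a one-line consequence of results already established.

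First I would invoke Lemma \ref{lem:cov_as_deff}, which gives $\mK_{n,\eta,\alpha} = \alpha\mK_n(2\eta\alpha\mK_n + \mI)^{-1}$. Taking the trace and recognising that the eigenvalues of $\alpha\mK_n(2\eta\alpha\mK_n + \mI)^{-1}$ are $\alpha\lambda_i/(1 + 2\eta\alpha\lambda_i) = \frac{1}{2\eta}\cdot \frac{2\eta\alpha\lambda_i}{1 + 2\eta\alpha\lambda_i}$, one obtains the identity (noted in the remark after Lemma \ref{lem:cov_as_deff})
\begin{equation*}
\mathrm{tr}(\mK_{n,\eta,\alpha}) = \frac{1}{2\eta}d_n(2\eta\alpha)\,.
\end{equation*}

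Next, I would apply Proposition \ref{pro:interp} with learning rates $2\eta\alpha > 2\beta\alpha > 0$, which yields
\begin{equation*}
d_n(2\eta\alpha) \leq \frac{2(2\eta\alpha)}{2\eta\alpha - 2\beta\alpha}\gamma_n(2\eta\alpha, 2\beta\alpha) = \frac{2\eta}{\eta - \beta}\gamma_n(2\eta\alpha, 2\beta\alpha)\,.
\end{equation*}

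Combining the two displays and cancelling the factor of $2\eta$ gives
\begin{equation*}
\mathrm{tr}(\mK_{n,\eta,\alpha}) = \frac{1}{2\eta}d_n(2\eta\alpha) \leq \frac{1}{\eta - \beta}\gamma_n(2\eta\alpha, 2\beta\alpha)\,,
\end{equation*}
which is the claimed inequality. The hypothesis $\eta > \beta > 0$ ensures both that the scaling $2\beta\alpha$ is strictly positive (so that $\gamma_n(2\eta\alpha, 2\beta\alpha)$ is well-defined as a relative information gain with both arguments positive) and that the denominator $\eta - \beta$ is non-zero, so no additional care is required.
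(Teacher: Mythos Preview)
Your proposal is correct and follows essentially the same approach as the paper: both combine the identity $\mathrm{tr}(\mK_{n,\eta,\alpha}) = \tfrac{1}{2\eta}d_n(2\eta\alpha)$ from Lemma~\ref{lem:cov_as_deff} with the lower inequality in Proposition~\ref{pro:interp} applied at the pair $(2\eta\alpha, 2\beta\alpha)$, and then cancel the factor of $2\eta$.
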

\begin{proof}
From Lemma \ref{lem:cov_as_deff} and Proposition \ref{pro:interp}, we obtain
\begin{equation*}
\mathrm{tr}(\mK_{n,\eta,\alpha}) = \frac{1}{2\eta}d_n(2\eta\alpha) \leq \frac{1}{2\eta}\frac{4\eta\alpha}{2\eta\alpha - 2\beta\alpha}\gamma_n(2\eta\alpha, 2\beta\alpha) = \frac{1}{\eta-\beta}\gamma_n(2\eta\alpha, 2\beta\alpha)\,.
\end{equation*}
This concludes the proof.
\end{proof}

\begin{lemma}
For all $n \geq 1$, $\eta > \beta > 0$ and $\alpha > 0$,
\begin{equation*}
\mathrm{tr}(\mK_{n,\beta,\alpha}^{-1}\mK_{n,\eta,\alpha}) \leq n\,.
\end{equation*}
\label{lem:trace_bollocks}
\end{lemma}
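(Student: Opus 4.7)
The plan is to use Lemma \ref{lem:cov_as_deff} to express both $\mK_{n,\eta,\alpha}$ and $\mK_{n,\beta,\alpha}$ as rational functions of $\mK_n$, diagonalise them simultaneously, and then bound the resulting eigenvalues one by one.

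More precisely, by Lemma \ref{lem:cov_as_deff},
\begin{equation*}
\mK_{n,\eta,\alpha} = \alpha\mK_n(2\eta\alpha\mK_n + \mI)^{-1}\,, \qquad \mK_{n,\beta,\alpha} = \alpha\mK_n(2\beta\alpha\mK_n + \mI)^{-1}\,.
\end{equation*}
Assumption \ref{ass:pd} guarantees that $\mK_n$ (and hence $\mK_{n,\beta,\alpha}$) is invertible, so
\begin{equation*}
\mK_{n,\beta,\alpha}^{-1}\mK_{n,\eta,\alpha} = (2\beta\alpha\mK_n + \mI)(2\eta\alpha\mK_n + \mI)^{-1}\,.
\end{equation*}
Since both factors on the right-hand side are polynomials or rational functions of $\mK_n$, they share the eigenvectors of $\mK_n$ and therefore commute, so the product is simultaneously diagonalisable.

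Writing $(\lambda_i)_{i=1}^{n}$ for the eigenvalues of $\mK_n$, the eigenvalues of $\mK_{n,\beta,\alpha}^{-1}\mK_{n,\eta,\alpha}$ are therefore $\frac{1 + 2\beta\alpha\lambda_i}{1 + 2\eta\alpha\lambda_i}$ for $i=1,\dots,n$. Because $\lambda_i \geq 0$ and $\eta > \beta > 0$, each such eigenvalue is at most $1$, from which
\begin{equation*}
\mathrm{tr}(\mK_{n,\beta,\alpha}^{-1}\mK_{n,\eta,\alpha}) = \sum_{i=1}^{n}\frac{1 + 2\beta\alpha\lambda_i}{1 + 2\eta\alpha\lambda_i} \leq n\,.
\end{equation*}
There is no real obstacle here beyond being a little careful to invoke Assumption \ref{ass:pd} when inverting $\mK_{n,\beta,\alpha}$; the rest is just bookkeeping with the eigenvalues of $\mK_n$.
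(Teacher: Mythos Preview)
Your proof is correct and follows essentially the same approach as the paper: both use Lemma \ref{lem:cov_as_deff} to express the covariance matrices as rational functions of $\mK_n$, invoke positive-definiteness of $\mK_n$ to invert, and then read off the eigenvalues $\frac{1+2\beta\alpha\lambda_i}{1+2\eta\alpha\lambda_i}$ and bound each by $1$. The only cosmetic difference is that you first simplify the matrix product to $(2\beta\alpha\mK_n+\mI)(2\eta\alpha\mK_n+\mI)^{-1}$ before diagonalising, whereas the paper works eigenvector by eigenvector from the start.
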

\begin{proof}
Let $(\lambda_i)_{i=1}^{n}$ and $(\vv_i)_{i=1}^{n}$ be the eigenvalues and eigenvectors of $\mK_n$. From Lemma \ref{lem:cov_as_deff} and \eqref{eqn:eigs}, it follows that, for all $i \in [n]$,
\begin{equation*}
\mK_{n,\eta,\alpha}\vv_i = \frac{\alpha\lambda_i}{2\eta\alpha\lambda_i + 1}\vv_i\,.
\end{equation*}
As long as $\mK_n$ is positive-definite, this implies that, for all $i \in [n]$,
\begin{equation*}
\mK_{n,\beta,\alpha}^{-1}\vv_i = \frac{2\beta\alpha\lambda_i + 1}{\alpha\lambda_i}\vv_i\,.
\end{equation*}
In particular, the eigenvalues of $\mK_{n,\beta,\alpha}^{-1}\mK_{n,\eta,\alpha}$ are $(\frac{2\beta\alpha\lambda_i + 1}{2\eta\alpha\lambda_i + 1})_{i=1}^{n}$. Therefore,
\begin{equation*}
\mathrm{tr}(\mK_{n,\beta,\alpha}^{-1}\mK_{n,\eta,\alpha}) = \sum_{i=1}^{n}\frac{2\beta\alpha\lambda_i + 1}{2\eta\alpha\lambda_i + 1} \leq \sum_{i=1}^{n}\frac{2\eta\alpha\lambda_i + 1}{2\eta\alpha\lambda_i + 1} = n\,.
\end{equation*}
This concludes the proof.
\end{proof}

\subsection{A PAC-Bayesian Bound for Sums of Independent Random Variables}
\label{sec:pb_indep}

Proposition \ref{pro:excess_risk} is a special case of a localised PAC-Bayesian bound for collections of sums of independent random variables. We consider a collection of random variables $(Z_i(\vg))_{i \in [n], \vg \in \sR^n}$, such that for each $\vg \in \sR^n$, $(Z_i(\vg))_{i=1}^{n}$ is a sequence of independent random variables. We let $\mu_i(\vg) := \sE[Z_i(\vg)]$ and $\psi_i(\vg, \eta) := \log\sE[\exp(\eta (Z_i(\vg)-\mu_i(\vg))]$ denote the mean and the cumulant generating function of $Z_i(\vg)$. We assume that for all $\eta \in \sR$, $i \in [n]$ and $\vg \in \sR^n$, $\sE[\exp(\eta (Z_i(\vg) - \mu_i(\vg))] < \infty$. For an arbitrary distribution $P \in \Delta(\sR^n)$, we define the localised prior $Q_{\mu,\beta}$ by
\begin{equation*}
\dd{}{Q_{\mu,\beta}}{P}(\vg) \propto \exp\left(\sum_{i=1}^{n}\beta\mu_i(\vg) - \psi_i(\vf_n, -\beta)\right)\,.
\end{equation*}
Note that while $Q_{\mu,\beta}$ depends the unobserved quantities $\mu_i(\vg)$ and $\psi_i(\vg, -\beta)$, it does not depend on the random draw of $(Z_i(\vg))_{i \in [n], \vg \in \sR^n}$. We define the empirical approximation $Q_{Z,\beta}$ of the localised prior by
\begin{equation*}
\dd{}{Q_{Z,\beta}}{P}(\vg) \propto \exp\left(\sum_{i=1}^{n}\beta Z_i(\vg)\right)
\end{equation*}
In contrast to $Q_{\mu,\beta}$, $Q_{Z,\beta}$ does depend on the random draw of $(Z_i(\vg))_{i \in [n], \vg \in \sR^n}$, but does not depend on any unobservable quantities. The following proposition combines a well-known PAC-Bayesian bound for sums of independent random variables (cf. Proposition 2.1 in \citealp{catoni2017dimension}) with Catoni's localisation technique (cf. Section 1.3.4 in \citealp{catoni2007pac}).

\begin{proposition}
For any $n \geq 1$, any $\delta \in (0, 1]$, any $\eta > 0$, any $\beta \in [0, \eta)$ and any $P \in \Delta(\sR^n)$, with probability at least $1 - \delta$, $\forall Q \in \Delta(\sR^n)$,
\begin{equation*}
\int\left[\sum_{i=1}^{n}(\eta - \beta)(Z_i(\vg) - \mu_i(\vg)) - \psi_i(\vg, \eta) - \psi_i(\vg, -\beta)\right]\mathrm{d}Q(\vg) - \KL(Q||Q_{\beta Z_n}) \leq 2\log\tfrac{1}{\delta}\,.
\end{equation*}
\label{pro:pb_indep}
\end{proposition}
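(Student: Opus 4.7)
The plan is to run Catoni's localisation, combining two exponential-moment identities via Cauchy--Schwarz so the final probability cost is $2\log(1/\delta)$ rather than the $2\log(2/\delta)$ that a naive union bound would produce. Write $A_1(\vg) := \sum_{i=1}^{n}[\eta(Z_i(\vg) - \mu_i(\vg)) - \psi_i(\vg,\eta)]$ and $A_2(\vg) := \sum_{i=1}^{n}[-\beta(Z_i(\vg) - \mu_i(\vg)) - \psi_i(\vg,-\beta)]$, so the integrand in the statement equals $A_1 + A_2$. Independence of $(Z_i(\vg))_{i=1}^{n}$ for each $\vg$ gives $\sE[\exp(A_j(\vg))] = 1$ for $j = 1, 2$, and since the oracle localised prior $Q_{\mu,\beta}$ is data-free, Fubini's theorem yields $\sE[Y_j] = 1$ for $Y_j := \int \exp(A_j)\,dQ_{\mu,\beta}$. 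Cauchy--Schwarz gives $\sE[\sqrt{Y_1 Y_2}] \leq 1$, and Markov's inequality then produces, with probability at least $1 - \delta$, $\log Y_1 + \log Y_2 \leq 2\log(1/\delta)$. This is the only stochastic step.

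Next, I would establish a deterministic change-of-prior identity. Computing $dQ_{\beta Z_n}/dQ_{\mu,\beta}$ from the two explicit Radon--Nikodym derivatives against $P$ yields $\log(dQ_{\beta Z_n}/dQ_{\mu,\beta})(\vg) = -A_2(\vg) + \log(Z_{\mu,\beta}/Z_{\beta Z_n})$, where $Z_{\mu,\beta}$ and $Z_{\beta Z_n}$ are the respective normalising constants. Integrating against an arbitrary $Q$ and rearranging gives
\begin{equation*}
\int (A_1 + A_2)\,dQ - \KL(Q||Q_{\beta Z_n}) = \left[\int A_1\,dQ - \KL(Q||Q_{\mu,\beta})\right] + \log(Z_{\mu,\beta}/Z_{\beta Z_n}).
\end{equation*}

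To finish, I would bound the two pieces on the right by applying Lemma~\ref{lem:donsker} twice. The first piece satisfies $\int A_1\,dQ - \KL(Q||Q_{\mu,\beta}) \leq \log Y_1$ by the Donsker--Varadhan inequality with prior $Q_{\mu,\beta}$. For the second, applying Donsker--Varadhan to $\log Z_{\beta Z_n} = \log\int\exp(\sum_i \beta Z_i)\,dP$ and instantiating the supremum at $Q_{\mu,\beta}$ gives $\log Z_{\beta Z_n} \geq \log Z_{\mu,\beta} - \int A_2\,dQ_{\mu,\beta}$, i.e.\ $\log(Z_{\mu,\beta}/Z_{\beta Z_n}) \leq \int A_2\,dQ_{\mu,\beta}$; Jensen then upgrades the right-hand side to $\log Y_2$. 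Summing the two bounds and invoking the event from the first paragraph closes the argument.

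The main subtlety is the constant on the right-hand side: a naive union bound over two separate PAC-Bayesian inequalities (one for $A_1$ with prior $P$, one for $A_2$ with prior $Q_{\mu,\beta}$) would land at $2\log(2/\delta)$. The identity in the second paragraph lets both exponential moments be attached to the same data-free prior $Q_{\mu,\beta}$, and Cauchy--Schwarz applied to $\sqrt{Y_1 Y_2}$ then pays the $\log(1/\delta)$ cost only once per moment, removing the stray $2\log 2$.
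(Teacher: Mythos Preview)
Your proposal is correct and follows essentially the same route as the paper: the change-of-prior identity, the exponential-moment bound $\sE[Y_1]=1$ via Tonelli and independence, the Donsker--Varadhan evaluation of $\log(Z_{\mu,\beta}/Z_{\beta Z_n})$ at $Q_{\mu,\beta}$ together with Jensen, and then Cauchy--Schwarz plus Markov. The only cosmetic difference is that you apply Cauchy--Schwarz to $\sqrt{Y_1 Y_2}$ and then invoke the deterministic inequality $Z_{\mu,\beta}/Z_{\beta Z_n}\le Y_2$, whereas the paper applies Cauchy--Schwarz directly to $\sqrt{Y_1\cdot Z_{\mu,\beta}/Z_{\beta Z_n}}$ after showing $\sE[Z_{\mu,\beta}/Z_{\beta Z_n}]\le 1$; both orderings use the same ingredients and yield the same constant.
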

The proof is a fairly straightforward combination of the proofs of Proposition 2.1 from \citet{catoni2017dimension} and some of the derivations in Section 1.3.4 in \citet{catoni2007pac}.
\begin{proof}
Fix an arbitrary $Q \in \Delta(\sR^n)$. We begin by finding a relationship between $\KL(Q||Q_{\mu,\beta})$ and $\KL(Q||Q_{Z,\beta})$. From the definitions of $Q_{\mu,\beta}$ and $Q_{Z,\beta}$, we obtain
\begin{align}
\KL(Q||Q_{\mu,\beta}) &= \int\log\dd{}{Q}{Q_{Z,\beta}}(\vg)\dd{}{Q_{Z,\beta}}{Q_{\mu,\beta}}(\vg)\mathrm{d}Q(\vg)\label{eqn:local_kl}\\
&= \KL(Q||Q_{Z,\beta})\nonumber\\
&+ \int\log\frac{\exp(\sum_{i=1}^{n}\beta Z_i(\vg))\int\exp(\sum_{i=1}^{n}\beta\mu_i(\vg) - \psi_i(\vg,-\beta))\mathrm{d}P(\vg)}{\exp(\sum_{i=1}^{n}\beta\mu_i(\vg) - \psi_i(\vg,-\beta))\int\exp(\sum_{i=1}^{n}\beta Z_i(\vg))\mathrm{d}P(\vg)}\mathrm{d}Q(\vg)\nonumber\\
&= \KL(Q||Q_{Z,\beta}) + \int\left[\sum_{i=1}^{n}\beta Z_i(\vg) - \beta \mu_i(\vg) + \psi_i(\vg,-\beta)\right]\mathrm{d}Q(\vg)\nonumber\\
&+ \log\frac{\int\exp\left(\sum_{i=1}^{n}\beta\mu_i(\vg) - \psi_i(\vg,-\beta)\right)\mathrm{d}P(\vg)}{\int\exp\left(\sum_{i=1}^{n}\beta Z_i(\vg)\right)\mathrm{d}P(\vg)}\,.\nonumber
\end{align}
Using Lemma \ref{lem:donsker}, Tonelli's theorem and independence, for any $\eta > 0$, we obtain
\begin{align}
\sE&\left[\exp\left(\sup_{Q \in \Delta(\sR^n)}\left\{\int\sum_{i=1}^{n}\big[\eta(Z_i(\vg) - \mu_i(\vg)) - \psi_i(\vg,\eta)\big]\mathrm{d}Q(\vg) - \KL(Q||Q_{\mu,\beta})\right\}\right)\right]\label{eqn:pbi_conc}\\
&= \sE\left[\int\exp\left(\sum_{i=1}^{n}\eta(Z_i(\vg) - \mu_i(\vg)) - \psi_i(\vg,\eta)\right)\mathrm{d}Q_{\mu,\beta}(\vg)\right]\nonumber\\
&= \int\sE\left[\exp\left(\sum_{i=1}^{n}\eta(Z_i(\vg) - \mu_i(\vg)) - \psi_i(\vg,\eta)\right)\right]\mathrm{d}Q_{\mu,\beta}(\vg)\nonumber\\
&= \int\prod_{i=1}^{n}\sE\left[\exp\left(\eta(Z_i(\vg) - \mu_i(\vg)) - \psi_i(\vg,\eta)\right)\right]\mathrm{d}Q_{\mu,\beta}(\vg)\nonumber\\
&= \int\prod_{i=1}^{n}\frac{\exp(\psi_i(\vg,\eta))}{\exp(\psi_i(\vg,\eta))}\mathrm{d}Q_{\mu,\beta}(\vg) = 1\,.\nonumber
\end{align}
Next, using Lemma \ref{lem:donsker} and Jensen's inequality, we obtain
\begin{align}
\sE&\left[\frac{\int\exp\left(\sum_{i=1}^{n}\beta\mu_i(\vg) - \psi_i(\vg,-\beta)\right)\mathrm{d}P(\vg)}{\int\exp\left(\sum_{i=1}^{n}\beta Z_i(\vg)\right)\mathrm{d}P(\vg)}\right]\label{eqn:pbz_conc}\\
&= \sE\left[\frac{\exp(\sup_{Q \in \Delta(\sR^d)}\{\int\sum_{i=1}^{n}\beta\mu_i(\vg) - \psi_i(\vg,-\beta)\mathrm{d}Q(\vg) - \KL(Q||P)\})}{\exp(\sup_{Q \in \Delta(\sR^d)}\{\int\sum_{i=1}^{n}\beta Z_i(\vg)\mathrm{d}Q(\vg) - \KL(Q||P)\})}\right]\nonumber\\
&\leq \sE\left[\exp\left(\int\sum_{i=1}^{n}-\beta(Z_i(\vg) - \mu_i(\vg)) - \psi_i(\vg, -\beta)\mathrm{d}Q_{\mu,\beta}(\vg)\right)\right]\nonumber\\
&\leq \sE\left[\int\exp\left(\sum_{i=1}^{n}-\beta(Z_i(\vg) - \mu_i(\vg)) - \psi_i(\vg, -\beta)\right)\mathrm{d}Q_{\mu,\beta}(\vg)\right] = 1\,.\nonumber
\end{align}
The last step follows from \eqref{eqn:pbi_conc} with $\eta = -\beta$. Using \eqref{eqn:local_kl}, then the Cathy-Schwarz inequality, and then \eqref{eqn:pbi_conc} and \eqref{eqn:pbz_conc}, we obtain
\begin{align*}
\sE&\bigg[\exp\bigg(\frac{1}{2}\sup_{Q \in \Delta(\sR^d)}\bigg\{\int\sum_{i=1}^{n}(\eta-\beta)(Z_i(\vg) - \mu_i(\vg)) - \psi_i(\vg, \eta) - \psi_i(\vg, -\beta)\mathrm{d}Q(\vg) - \KL(Q||Q_{Z,\beta})\bigg\}\bigg)\bigg]\\
&= \sE\bigg[\exp\bigg(\frac{1}{2}\sup_{Q \in \Delta(\sR^d)}\bigg\{\int\sum_{i=1}^{n}\eta(Z_i(\vg) - \mu_i(\vg)) - \psi_i(\vg, \eta)\mathrm{d}Q(\vg) - \KL(Q||Q_{\mu,\beta})\bigg\}\bigg)\\
&\qquad~\times\exp\bigg(\frac{1}{2}\log\frac{\int\exp\left(\sum_{i=1}^{n}\beta\mu_i(\vg) - \psi_i(\vg,-\beta)\right)\mathrm{d}P(\vg)}{\int\exp\left(\sum_{i=1}^{n}\beta Z_i(\vg)\right)\mathrm{d}P(\vg)}\bigg)\bigg]\\
&\leq \sE\bigg[\exp\bigg(\sup_{Q \in \Delta(\sR^d)}\bigg\{\int\sum_{i=1}^{n}\eta(Z_i(\vg) - \mu_i(\vg)) - \psi_i(\vg, \eta)\mathrm{d}Q(\vg) - \KL(Q||Q_{\mu,\beta})\bigg\}\bigg)\bigg]^{1/2}\\
&\times \sE\bigg[\frac{\int\exp\left(\sum_{i=1}^{n}\beta\mu_i(\vg) - \psi_i(\vg,-\beta)\right)\mathrm{d}P(\vg)}{\int\exp\left(\sum_{i=1}^{n}\beta Z_i(\vg)\right)\mathrm{d}P(\vg)}\bigg]^{1/2}\\
&\leq 1\,.
\end{align*}
The statement now follows from Markov's inequality.
\end{proof}
\noindent Notice that if every occurence of $\psi_i(\vg, \eta)$ (and $\psi_i(\vg, -\beta)$) is replaced with an upper bound on $\psi_i(\vg, \eta)$ (or $\psi_i(\vg, -\beta)$), then the proof still goes through. This includes the occurences of $\psi_i(\vg, -\beta)$ in the localised prior $Q_{\mu,\beta}$.

\subsection{Proof of Proposition \ref{pro:excess_risk}}
\label{sec:pb_excess_risk}
\begin{proofof}{Proposition \ref{pro:excess_risk}}
We set
\begin{equation*}
Z_i(\vg) = (\fstar(x_i)-y_i)^2 - (g_i - y_i)^2 = -(g_i - \fstar(x_i))^2 + 2(g_i - \fstar(x_i))\eps_i\,.
\end{equation*}
With this choice of $Z_i(\vg)$, $\mu_i(\vg) = -(g_i - \fstar(x_i))^2$, and for all $\eta \in \sR$,
\begin{equation}
\psi_i(\vg, \eta) = \log\sE[\exp(2\eta(g_i - \fstar(x_i))\eps_i)] \leq 2\sigma^2\eta^2(g_i - \fstar(x_i))^2\,.\label{eqn:cgf_bound}
\end{equation}
In addition, we choose $P = P_{\alpha}$. From the definition of $Z_i(\vg)$, $Q_{Z,\beta}$ is given by
\begin{equation*}
\dd{}{Q_{Z,\beta}}{P_{\alpha}}(\vg) \propto \exp(n\beta(r_n(\vfstar_n) - r_n(\vg))) \propto \exp(-n\beta r_n(\vg))\,.
\end{equation*}
Namely, $Q_{Z,\beta} = Q_{n, \beta, \alpha}$. Even though we will not need it, we can also determine the expression for the localised prior $Q_{\mu,\beta}$ (or rather, for $\mathrm{d}Q_{\mu,\beta}/\mathrm{d}P$). From the bound in \eqref{eqn:cgf_bound}, it follows that $Q_{\mu,\beta}$ is given by
\begin{equation*}
\dd{}{Q_{\mu,\beta}}{P_{\alpha}}(\vg) \propto \exp(-n(\beta + 2\sigma^2\beta^2)R_n(\vg))\,.
\end{equation*}
One can notice that $Q_{\mu,\beta}$ assigns higher density to vectors $\vg$ for which the excess risk is small, whereas $Q_{Z,\beta}$ assigns higher density to vectors $\vg$ for which the empirical risk is small. Next, we substitute the definitions of/bounds on $Z_i(\vg)$, $\mu_i(\vg)$ and $\psi_i(\vg, \eta)$ into Proposition \ref{pro:pb_indep}. In particular, for any $n \geq 1$, any $\delta \in (0,1]$, any $\eta > \beta \geq 0$ and any $P \in \Delta(\sR^n)$, with probability at least $1 - \delta$, for all $Q \in \Delta(\sR^n)$,
\begin{equation*}
\int n(\eta - \beta)(r_n(\vfstar_n) - r_n(\vg)) + n(\eta - 2\sigma^2\eta^2 - \beta - 2\sigma^2\beta^2)R_n(\vg)\mathrm{d}Q(\vg) - \KL(Q||Q_{n,\beta,\alpha}) \leq 2\log\tfrac{1}{\delta}\,.
\end{equation*}
If $\eta$ and $\beta$ are chosen such that $\eta - 2\sigma^2\eta^2 - \beta - 2\sigma^2\beta^2 > 0$, then we can arrange this inequality to obtain
\begin{equation}
\int R_n(\vg)\mathrm{d}Q(\vg) \leq \frac{\int(\eta-\beta)(r_n(\vg) - r_n(\vfstar_n))\mathrm{d}Q(\vg)}{\eta - 2\sigma^2\eta^2 - \beta - 2\sigma^2\beta^2} + \frac{\KL(Q||Q_{n,\beta,\alpha}) + 2\log\tfrac{1}{\delta}}{n(\eta - 2\sigma^2\eta^2 - \beta - 2\sigma^2\beta^2)}\,.\label{eqn:local_pb}
\end{equation}
All that remains is to show that $Q_{n,\eta,\alpha}$ minimises the right-hand side of this inequality w.r.t. $Q$. Since $r_n$ is bounded below (by 0), Lemma \ref{lem:donsker} tells us that the infimum of the right-hand side is achieved when
\begin{equation*}
\dd{}{Q}{Q_{n,\beta,\alpha}}(\vg) \propto \exp(-n(\eta-\beta)r_n(\vg))\,.
\end{equation*}
For $Q = Q_{n,\eta, \alpha}$,
\begin{align*}
\dd{}{Q_{n,\eta,\alpha}}{Q_{n,\beta,\alpha}}(\vg) &= \dd{}{Q_{n,\eta,\alpha}}{P_{\alpha}}(\vg)\dd{}{P_{\alpha}}{Q_{n,\beta,\alpha}}(\vg)\\
&\propto \exp(-n\eta r_n(\vg))\exp(n\beta r_n(\vg))\\
&\propto \exp(-n(\eta - \beta) r_n(\vg))\,.
\end{align*}
Therefore $Q_{n, \eta, \alpha}$ is indeed a minimiser of the right-hand side of \eqref{eqn:local_pb}. Since \eqref{eqn:local_pb} holds simultaneously for all $Q \in \Delta(\sR^n)$, under the same conditions as before, with probability at least $1 - \delta$,
\begin{equation*}
\int R_n(\vg)\mathrm{d}Q_{n,\eta,\alpha}(\vg) \leq \inf_{Q}\left\{\frac{\int(\eta-\beta)(r_n(\vg) - r_n(\vfstar_n))\mathrm{d}Q(\vg)}{\eta - 2\sigma^2\eta^2 - \beta - 2\sigma^2\beta^2} + \frac{\KL(Q||Q_{n,\beta,\alpha}) + 2\log\tfrac{1}{\delta}}{n(\eta - 2\sigma^2\eta^2 - \beta - 2\sigma^2\beta^2)}\right\}\,.
\end{equation*}
This concludes the proof.
\end{proofof}

\subsection{Proof of Theorem \ref{thm:rig_excess_risk}}
\label{sec:rig_excess_risk}

\begin{proofof}{Theorem \ref{thm:rig_excess_risk}}
We define the constrained least squares estimate
\begin{equation*}
\wh f := \argmin_{f \in \gH, \|f\|_{\gH} \leq \|\fstar\|_{\gH}}\bigg\{\sum_{i=1}^{n}(f(x_i) - y_i)^2\bigg\}\,.
\end{equation*}
For some $\zeta \geq 0$, one can express $\wh f$ via
\begin{equation*}
\wh f(x) = \vk_n^{\top}(x)(\mK_n + \zeta\mI)^{-1}\vy_n\,.
\end{equation*}
In addition, the vector of fitted values $\wh\vf_n := [\wh f(x_1), \dots, \wh f(x_n)]^{\top}$ is
\begin{equation*}
\wh\vf_n = \mK_n(\mK_n + \zeta\mI)^{-1}\vy_n\,.
\end{equation*}
From the definition of $\wh f$, we have $r_n(\wh\vf_n) \leq r_n(\vfstar_n)$ and
\begin{equation}
\vy_n^{\top}(\mK_n + \zeta\mI)^{-1}\mK_n(\mK_n + \zeta\mI)^{-1}\vy_n = \|\wh f\|_{\gH}^2 \leq \|\fstar\|_{\gH}^2\,.\label{eqn:fstar_norm}
\end{equation}
We want to find an upper bound for
\begin{equation*}
\inf_{Q \in \Delta(\sR^n)}\bigg\{\frac{\int(\eta-\beta)(r_n(\vg) - r_n(\vfstar_n))\mathrm{d}Q(\vg)}{\eta - 2\sigma^2\eta^2 - \beta - 2\sigma^2\beta^2} + \frac{\KL(Q||Q_{n,\beta,\alpha}) + 2\log\tfrac{1}{\delta}}{n(\eta - 2\sigma^2\eta^2 - \beta - 2\sigma^2\beta^2)}\bigg\}\,.
\end{equation*}
We consider two cases. First, suppose that $\frac{1}{2\beta\alpha} \leq \zeta$. We set $\wh Q = \gN(\vm_{n,\beta,\alpha}, \mK_{n, \eta, \alpha})$. Since the empirical risk $r_n(\vf_n^{(\zeta)})$ of the $\vf_n^{(\zeta)} := \mK_n(\mK_n + \zeta\mI)^{-1}\vy_n$ is increasing in $\zeta$ (cf. Lemma \ref{lem:inc_risk}), and since $\vm_{n,\beta,\alpha} = \vf_n^{(1/(2\beta\alpha))}$, it follows that
\begin{equation*}
r_n(\vm_{n,\beta,\alpha}) \leq r_n(\wh\vf_n) \leq r_n(\vfstar_n)\,.
\end{equation*}
Using this inequality, along with Corollary \ref{cor:trace_bollocks}, we obtain
\begin{align*}
\int(\eta-\beta)(r_n(\vg) - r_n(\vfstar_n))\mathrm{d}\wh Q(\vg) &= (\eta-\beta)(r_n(\vm_{n,\beta,\alpha}) - r_n(\vfstar_n)) + \frac{\eta-\beta}{n}\mathrm{tr}(\mK_{n,\eta,\alpha})\\
&\leq \frac{1}{n}\gamma_n(2\eta\alpha, 2\beta\alpha)\,.
\end{align*}
Using Lemma \ref{lem:trace_bollocks}, Corollary \ref{cor:logdet} and the expression for the KL divergence between two Gaussians (with the same mean), we see that
\begin{equation*}
\KL(\wh Q||Q_{n,\beta,\alpha}) = \frac{1}{2}\bigg(\mathrm{tr}(\mK_{n,\beta,\alpha}^{-1}\mK_{n,\beta,\alpha}) - n + \log\frac{\det(\mK_{n,\beta,\alpha})}{\det(\mK_{n,\eta,\alpha})}\bigg) \leq \gamma_n(2\eta\alpha, 2\beta\alpha)\,.
\end{equation*}
We have shown that, for the case where $\frac{1}{2\beta\alpha} \leq \zeta$,
\begin{align*}
\inf_{Q \in \Delta(\sR^n)}&\bigg\{\frac{\int(\eta-\beta)(r_n(\vg) - r_n(\vfstar_n))\mathrm{d}Q(\vg)}{\eta - 2\sigma^2\eta^2 - \beta - 2\sigma^2\beta^2} + \frac{\KL(Q||Q_{n,\beta,\alpha}) + 2\log\tfrac{1}{\delta}}{n(\eta - 2\sigma^2\eta^2 - \beta - 2\sigma^2\beta^2)}\bigg\}\\
&\leq \frac{\int(\eta-\beta)(r_n(\vg) - r_n(\vfstar_n))\mathrm{d}\wh Q(\vg)}{\eta - 2\sigma^2\eta^2 - \beta - 2\sigma^2\beta^2} + \frac{\KL(\wh Q||Q_{n,\beta,\alpha}) + 2\log\tfrac{1}{\delta}}{n(\eta - 2\sigma^2\eta^2 - \beta - 2\sigma^2\beta^2)}\\
&\leq \frac{2\gamma_n(2\eta\alpha, 2\beta\alpha) + 2\log\tfrac{1}{\delta}}{n(\eta - 2\sigma^2\eta^2 - \beta - 2\sigma^2\beta^2)}\,.
\end{align*}
Next, we consider the second case. Suppose that $\frac{1}{2\alpha\beta} > \zeta$. We set $\wh Q = \gN(\wh\vf_n, \mK_{n,\eta,\alpha})$. Since $r_n(\wh\vf_n) \leq r_n(\vfstar_n)$, Corollary \ref{cor:trace_bollocks} tells us that
\begin{align*}
\int(\eta-\beta)(r_n(\vg) - r_n(\vfstar_n))\mathrm{d}\wh Q(\vg) &= (\eta-\beta)(r_n(\wh\vf_n) - r_n(\vfstar_n)) + \frac{\eta-\beta}{n}\mathrm{tr}(\mK_{n,\eta,\alpha})\\
&\leq \frac{1}{n}\gamma_n(2\eta\alpha, 2\beta\alpha)\,.
\end{align*}
With this choice of $\wh Q$, the KL divergence $\KL(\wh Q||Q_{n,\beta,\alpha})$ is equal to
\begin{equation*}
\frac{1}{2}\bigg(\mathrm{tr}(\mK_{n,\beta,\alpha}^{-1}\mK_{n,\beta,\alpha}) - n + (\wh\vf_n - \vm_{n,\beta,\alpha})^{\top}\mK_{n,\beta,\alpha}^{-1}(\wh\vf_n - \vm_{n,\beta,\alpha}) + \log\frac{\det(\mK_{n,\beta,\alpha})}{\det(\mK_{n,\eta,\alpha})}\bigg)\,.
\end{equation*}
As previously, the trace (minus $n$) and log-determinant terms can be bounded using Lemma \ref{lem:trace_bollocks} and Corollary \ref{cor:logdet}. After some slightly unpleasant calculation, one can show that the remaining quadratic term is upper bounded by the squared RKHS norm of $\fstar$. First, we expand the quadratic term, and obtain
\begin{equation*}
(\wh\vf_n - \vm_{n,\beta,\alpha})^{\top}\mK_{n,\beta,\alpha}^{-1}(\wh\vf_n - \vm_{n,\beta,\alpha}) = \wh\vf_n^{\top}\mK_{n,\beta,\alpha}^{-1}\wh\vf_n - 2\wh\vf_n^{\top}\mK_{n,\beta,\alpha}^{-1}\vm_{n,\beta,\alpha} + \vm_{n,\beta,\alpha}^{\top}\mK_{n,\beta,\alpha}^{-1}\vm_{n,\beta,\alpha}\,.
\end{equation*}
Using Corollary \ref{cor:inv_bollocks} and \eqref{eqn:fstar_norm}, we see that
\begin{align*}
\wh\vf_n^{\top}\mK_{n,\beta,\alpha}^{-1}\wh\vf_n &= \vy_n^{\top}(\mK_n + \zeta\mI)^{-1}\mK_n\mK_{n,\beta,\alpha}^{-1}\mK_n(\mK_n + \zeta\mI)^{-1}\vy_n\\
&= 2\beta\vy_n^{\top}(\mK_n + \zeta\mI)^{-1}\mK_n(\mK_n + \tfrac{1}{2\beta\alpha}\mI)(\mK_n + \zeta\mI)^{-1}\vy_n\\
&= 2\beta\vy_n^{\top}(\mK_n + \zeta\mI)^{-1}\mK_n(\mK_n + \zeta\mI + (\tfrac{1}{2\beta\alpha} - \zeta)\mI)(\mK_n + \zeta\mI)^{-1}\vy_n\\
&= 2\beta\vy_n^{\top}(\mK_n + \zeta\mI)^{-1}\mK_n\vy_n + 2\beta(\tfrac{1}{2\beta\alpha} - \zeta)\vy_n^{\top}(\mK_n + \zeta\mI)^{-1}\mK_n(\mK_n + \zeta\mI)^{-1}\vy_n\\
&\leq 2\beta\vy_n^{\top}(\mK_n + \zeta\mI)^{-1}\mK_n\vy_n + \tfrac{1}{\alpha}\|\fstar\|_{\gH}^2\,.
\end{align*}
The second term in the quadratic expansion can be re-written using Corollary \ref{cor:inv_bollocks} again. In particular,
\begin{align*}
2\wh\vf_n^{\top}\mK_{n,\beta,\alpha}^{-1}\vm_{n,\beta,\alpha} &= 2\vy_n^{\top}(\mK_n + \zeta\mI)^{-1}\mK_n\mK_{n,\beta,\alpha}^{-1}\mK_n(\mK_n + \tfrac{1}{2\beta\alpha}\mI)^{-1}\vy_n\\
&= 4\beta\vy_n^{\top}(\mK_n + \zeta\mI)^{-1}\mK_n(\mK_n + \tfrac{1}{2\beta\alpha}\mI)(\mK_n + \tfrac{1}{2\beta\alpha}\mI)^{-1}\vy_n\\
&= 4\beta\vy_n^{\top}(\mK_n + \zeta\mI)^{-1}\mK_n\vy_n\,.
\end{align*}
The third term in the quadratic expansion can be re-written in the same way. In particular,
\begin{align*}
\vm_{n,\beta,\alpha}^{\top}\mK_{n,\beta,\alpha}^{-1}\vm_{n,\beta,\alpha} &= \vy_n^{\top}(\mK_n + \tfrac{1}{2\beta\alpha}\mI)^{-1}\mK_n\mK_{n,\beta,\alpha}^{-1}\mK_n(\mK_n + \tfrac{1}{2\beta\alpha}\mI)^{-1}\vy_n\\
&= 2\beta\vy_n^{\top}(\mK_n + \tfrac{1}{2\beta\alpha}\mI)^{-1}\mK_n(\mK_n + \tfrac{1}{2\beta\alpha}\mI)(\mK_n + \tfrac{1}{2\beta\alpha}\mI)^{-1}\vy_n\\
&= 2\beta\vy_n^{\top}(\mK_n + \tfrac{1}{2\beta\alpha}\mI)^{-1}\mK_n\vy_n\,.
\end{align*}
We have now shown that
\begin{align*}
(\wh\vf_n - \vm_{n,\beta,\alpha})^{\top}\mK_{n,\beta,\alpha}^{-1}(\wh\vf_n - \vm_{n,\beta,\alpha}) &\leq \tfrac{1}{\alpha}\|\fstar\|_{\gH}^2 + 2\beta\vy_n^{\top}(\mK_n + \tfrac{1}{2\beta\alpha}\mI)^{-1}\mK_n\vy_n\\
&- 2\beta\vy_n^{\top}(\mK_n + \zeta\mI)^{-1}\mK_n\vy_n\,.
\end{align*}
Since $\frac{1}{2\beta\alpha} > \zeta$ and the eigenvalues of $(\mK_n + \zeta\mI)^{-1}\mK_n$ are decreasing in $\zeta$, it follows that $(\mK_n + \frac{1}{2\beta\alpha}\mI)^{-1}\mK_n \preccurlyeq (\mK_n + \zeta\mI)^{-1}\mK_n$. Therefore,
\begin{equation*}
(\wh\vf_n - \vm_{n,\beta,\alpha})^{\top}\mK_{n,\beta,\alpha}^{-1}(\wh\vf_n - \vm_{n,\beta,\alpha}) \leq \tfrac{1}{\alpha}\|\fstar\|_{\gH}^2\,.
\end{equation*}
For the second case where $\frac{1}{2\beta\alpha} > \zeta$, we have
\begin{align*}
\inf_{Q \in \Delta(\sR^n)}&\bigg\{\frac{\int(\eta-\beta)(r_n(\vg) - r_n(\vfstar_n))\mathrm{d}Q(\vg)}{\eta - 2\sigma^2\eta^2 - \beta - 2\sigma^2\beta^2} + \frac{\KL(Q||Q_{n,\beta,\alpha}) + 2\log\tfrac{1}{\delta}}{n(\eta - 2\sigma^2\eta^2 - \beta - 2\sigma^2\beta^2)}\bigg\}\\
&\leq \frac{\int(\eta-\beta)(r_n(\vg) - r_n(\vfstar_n))\mathrm{d}\wh Q(\vg)}{\eta - 2\sigma^2\eta^2 - \beta - 2\sigma^2\beta^2} + \frac{\KL(\wh Q||Q_{n,\beta,\alpha}) + 2\log\tfrac{1}{\delta}}{n(\eta - 2\sigma^2\eta^2 - \beta - 2\sigma^2\beta^2)}\\
&\leq \frac{2\gamma_n(2\eta\alpha, 2\beta\alpha) + \frac{1}{2\alpha}\|\fstar\|_{\gH}^2 + 2\log\tfrac{1}{\delta}}{n(\eta - 2\sigma^2\eta^2 - \beta - 2\sigma^2\beta^2)}\,.
\end{align*}
This concludes the proof.
\end{proofof}

\section{Bounds on the Relative Information Gain}

\subsection{Mercer Kernels}
\label{sec:mercer}

A positive-definite kernel $k$ on a set $\mathcal{X}$ is called a Mercer kernel if $\mathcal{X}$ is a compact metric space and the kernel function $k: \mathcal{X} \times \mathcal{X} \to \mathbb{R}$ is continuous. Mercer's theorem provides a useful representation for Mercer kernels. Let $\rho$ be a non-degenerate Borel measure on $\mathcal{X}$ and let $L^2(\mathcal{X}, \rho)$ denote the set of square integrable functions on $\mathcal{X}$. Namely,
\begin{equation*}
L^2(\mathcal{X}, \rho) := \left\{f:\mathcal{X} \to \mathbb{R} : \int_{\mathcal{X}}(f(x))^2\mathrm{d}\rho(x) < \infty\right\}\,.
\end{equation*}
Define the linear operator $L_k : L^2(\mathcal{X}, \rho) \to L^2(\mathcal{X}, \rho)$ as
\begin{equation*}
L_k(f)(x) := \int_{\mathcal{X}}k(x, y)f(y)\mathrm{d}\rho(y)\,.
\end{equation*}
\begin{theorem}[Mercer's Theorem]
If $k: \mathcal{X} \times \mathcal{X} \to \mathbb{R}$ is a Mercer kernel, then there exist non-negative eigenvalues $\xi_1 \geq \xi_2 \geq \cdots \geq 0$ and corresponding eigenfunctions $\phi_1, \phi_2, \dots$, such that
\begin{equation}
L_k(\phi_m) = \xi_m\phi_m, \quad \text{for all } m = 1, 2, \dots\,.
\end{equation}
In addition, the kernel function has the eigendecomposition
\begin{equation}
k(x, x^{\prime}) = \sum_{m=1}^{\infty}\xi_m\phi_m(x)\phi_m(x^{\prime})\,,
\end{equation}
where the convergence of the infinite series is absolute for each $x, x^{\prime} \in \mathcal{X}$ and uniform on $\mathcal{X} \times \mathcal{X}$.
\end{theorem}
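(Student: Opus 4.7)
The plan is to follow the classical functional-analytic proof of Mercer's theorem, which proceeds in three stages: diagonalise the integral operator $L_k$ via the spectral theorem, establish the kernel expansion in $L^2(\mathcal{X}\times\mathcal{X}, \rho\otimes\rho)$, and then upgrade to absolute pointwise and uniform convergence using the positivity of $k$ together with Dini's theorem.

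For the spectral stage, I would first verify that $L_k$ is a compact, self-adjoint, positive operator on $L^2(\mathcal{X},\rho)$. Compactness follows because continuity of $k$ on the compact set $\mathcal{X}\times\mathcal{X}$ puts $k \in L^2(\mathcal{X}\times\mathcal{X},\rho\otimes\rho)$, so that $L_k$ is Hilbert--Schmidt; self-adjointness is from the symmetry of $k$; and positivity from its positive-definiteness. The spectral theorem for compact self-adjoint operators then produces a countable orthonormal sequence of eigenfunctions $\phi_1, \phi_2, \dots$ in $L^2(\mathcal{X},\rho)$ with non-negative eigenvalues $\xi_1 \geq \xi_2 \geq \cdots \geq 0$ accumulating only at zero. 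Each eigenfunction with $\xi_m > 0$ can be taken continuous because $\phi_m = \xi_m^{-1} L_k(\phi_m)$ and $L_k$ maps $L^2(\mathcal{X},\rho)$ into $C(\mathcal{X})$ by continuity of $k$ and dominated convergence on the compact set $\mathcal{X}$. The $L^2$-expansion $k(x,x') = \sum_m \xi_m \phi_m(x)\phi_m(x')$, as an element of $L^2(\mathcal{X}\times\mathcal{X},\rho\otimes\rho)$, then follows from Parseval applied to the tensor-product orthonormal basis $\{\phi_m\otimes\phi_\ell\}$, since the coefficient of $\phi_m\otimes\phi_\ell$ in $k$ is $\xi_m$ when $m=\ell$ and zero otherwise (a direct calculation from $L_k\phi_m = \xi_m\phi_m$).

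The main obstacle, and the step that requires real care, is upgrading from $L^2$ to absolute pointwise and then uniform convergence. The key observation is positivity: for every $N$, the partial sum $k_N(x,x') := \sum_{m=1}^{N}\xi_m\phi_m(x)\phi_m(x')$ is continuous, and the residual $k - k_N$ is a positive operator on $L^2$ (its bilinear form against any $f \in L^2$ equals $\sum_{m > N}\xi_m \langle \phi_m, f\rangle^2 \geq 0$). Since $k - k_N$ is also continuous, $L^2$-positivity transfers to pointwise positive-definiteness by a standard bump-function approximation argument, and in particular $(k-k_N)(x,x) \geq 0$, i.e.\ $\sum_{m=1}^{N}\xi_m\phi_m(x)^2 \leq k(x,x)$ for every $x$. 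The Cauchy--Schwarz estimate
\[
\Big|\sum_{m > N}\xi_m\phi_m(x)\phi_m(x')\Big| \leq \Big(\sum_{m > N}\xi_m\phi_m(x)^2\Big)^{1/2}\Big(\sum_{m > N}\xi_m\phi_m(x')^2\Big)^{1/2}
\]
then reduces both absolute pointwise convergence and uniform convergence off the diagonal to the corresponding statements on the diagonal. Dini's theorem finishes the proof: the sequence $k_N(x,x)$ is monotone non-decreasing in $N$, consists of continuous functions on the compact set $\mathcal{X}$, and converges pointwise to the continuous limit $k(x,x)$, hence the convergence is uniform; Cauchy--Schwarz then propagates uniformity to all of $\mathcal{X}\times\mathcal{X}$. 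It is precisely at the Dini step that continuity of $k$ is essential---without it, the expansion would in general only hold in $L^2$.
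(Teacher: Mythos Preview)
The paper does not prove Mercer's theorem; it is stated in Appendix~\ref{sec:mercer} purely as background, without any argument. So there is no ``paper's own proof'' to compare against, and your outline is the standard textbook route.

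That said, there is one genuine gap in your sketch. To invoke Dini on the diagonal you assert that $k_N(x,x)$ converges pointwise to $k(x,x)$, but everything you have established up to that point only gives the \emph{inequality} $k_N(x,x) \le k(x,x)$, hence that the monotone limit $s(x):=\sum_m \xi_m\phi_m(x)^2$ exists and satisfies $s(x)\le k(x,x)$. Nothing yet rules out $s(x)<k(x,x)$ at some (or even every) point; the $L^2(\mathcal{X}\times\mathcal{X})$ expansion you obtained from Parseval says nothing about the diagonal, which has $\rho\otimes\rho$-measure zero. The missing step is to show, for each fixed $x$, that $\sum_m \xi_m\phi_m(x)\phi_m(\cdot)$ coincides with $k(x,\cdot)$ as continuous functions: use Cauchy--Schwarz together with the bound $s(y)\le k(y,y)\le \sup_z k(z,z)$ to see that the series converges uniformly in $y$ to a continuous function $g_x$; then check $\langle k_x-g_x,h\rangle_{L^2}=0$ for every $h$ (split $h$ along the eigenfunctions and $\ker L_k$, and use that $L_k$ maps into $C(\mathcal{X})$ so $L_k h_0=0$ pointwise for $h_0\in\ker L_k$); finally use non-degeneracy of $\rho$ to conclude $g_x=k_x$ everywhere, in particular at $y=x$. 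Only after this does Dini apply.
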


\subsection{Auxiliary Lemmas}

Lemma \ref{lem:vakili1} and Lemma \ref{lem:vakili2} were extracted from the proof of Theorem 3 in \citet{vakili2021information}. Lemma \ref{lem:vakili1} allows us to re-write the information gain as the sum of a term depending on only $\mK_n^{\parallel}$ and another term depending on both $\mK_n^{\parallel}$ and $\mK_n^{\perp}$.
\begin{lemma}
For any $\eta > 0$ and any kernel $k$ that satisfies Assumption \ref{ass:mercer},
\begin{equation*}
\frac{1}{2}\log\det(\eta\mK_n + \mI) = \frac{1}{2}\log\det(\eta\mK_n^{\parallel} + \mI) + \frac{1}{2}\log\det(\mI + \eta(\mI + \eta\mK_n^{\parallel})^{-1}\mK_n^{\perp})\,.
\end{equation*}
\label{lem:vakili1}
\end{lemma}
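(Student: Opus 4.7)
The identity is essentially an algebraic factorisation, so the plan is to (i) split the kernel matrix additively via the Mercer decomposition, (ii) factor the resulting sum inside the determinant, and (iii) apply the multiplicativity of $\log\det$.

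First, since $k = k_{\parallel} + k_{\perp}$ by the definitions of $k_{\parallel}$ and $k_{\perp}$ (this uses Mercer's theorem, whose hypotheses are supplied by Assumption \ref{ass:mercer}), evaluating at the sample points gives the entrywise decomposition $\mK_n = \mK_n^{\parallel} + \mK_n^{\perp}$. Therefore
\begin{equation*}
\eta\mK_n + \mI = (\eta\mK_n^{\parallel} + \mI) + \eta\mK_n^{\perp}\,.
\end{equation*}

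Next, I would factor out $\eta\mK_n^{\parallel} + \mI$ from the left. Because $\mK_n^{\parallel}$ is positive semi-definite (it is the Gram matrix of a positive-definite kernel), the matrix $\eta\mK_n^{\parallel} + \mI$ has eigenvalues at least $1$, so it is invertible. Hence we may write
\begin{equation*}
\eta\mK_n + \mI = (\eta\mK_n^{\parallel} + \mI)\bigl(\mI + \eta(\eta\mK_n^{\parallel} + \mI)^{-1}\mK_n^{\perp}\bigr)\,.
\end{equation*}

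Taking $\tfrac{1}{2}\log\det$ of both sides and using $\log\det(AB) = \log\det(A) + \log\det(B)$ yields the claimed identity. I expect no real obstacle here; the one place to be careful is noting that $\eta\mK_n^{\parallel} + \mI$ is invertible (so that the factorisation and the second $\log\det$ make sense), which follows directly from positive semi-definiteness of $\mK_n^{\parallel}$.
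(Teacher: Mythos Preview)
Your argument is correct: the additive split $\mK_n=\mK_n^{\parallel}+\mK_n^{\perp}$, the invertibility of $\eta\mK_n^{\parallel}+\mI$, and the factorisation followed by multiplicativity of $\det$ are exactly what is needed. The paper does not give its own proof of this lemma (it simply attributes the identity to the proof of Theorem~3 in \citet{vakili2021information}), and your derivation is the standard one underlying that result.
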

Lemma \ref{lem:vakili2} provides an upper bound on the second term in Lemma \ref{lem:vakili1}.
\begin{lemma}
For any $\eta > 0$ and any kernel $k$ that satisfies Assumption \ref{ass:mercer},
\begin{equation*}
\frac{1}{2}\log\det(\mI + \eta(\mI + \eta\mK_n^{\parallel})^{-1}\mK_n^{\perp}) \leq \frac{1}{2}n\eta\delta_D\,.
\end{equation*}
\label{lem:vakili2}
\end{lemma}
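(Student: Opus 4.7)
The plan is to apply the standard log-determinant inequality $\log\det(\mI + M) \leq \mathrm{tr}(M)$ together with a monotonicity argument for traces of products of PSD matrices, and then use Mercer's decomposition and Assumption \ref{ass:mercer} to bound the resulting trace.

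First, I would observe that since $\mK_n^{\parallel} \succcurlyeq 0$, the matrix $A := (\mI + \eta\mK_n^{\parallel})^{-1}$ is symmetric PSD with $A \preccurlyeq \mI$, so $\lambda_{\max}(A) \leq 1$. The matrix $\eta A\mK_n^{\perp}$ need not be symmetric, but it is similar to the symmetric PSD matrix $\eta A^{1/2}\mK_n^{\perp}A^{1/2}$ (since $\mK_n^{\perp} \succcurlyeq 0$), so its eigenvalues are real and non-negative and the two matrices have the same determinant and trace. Applying $\log(1+x) \leq x$ eigenvalue by eigenvalue gives
\begin{equation*}
\log\det(\mI + \eta A\mK_n^{\perp}) \leq \mathrm{tr}(\eta A\mK_n^{\perp}) = \eta\,\mathrm{tr}((\mI + \eta\mK_n^{\parallel})^{-1}\mK_n^{\perp})\,.
\end{equation*}

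Next, I would use the standard inequality that for PSD matrices $A, B$ one has $\mathrm{tr}(AB) \leq \lambda_{\max}(A)\,\mathrm{tr}(B)$ (which follows from $B^{1/2}AB^{1/2} \preccurlyeq \lambda_{\max}(A) B$ after taking traces). Since $\lambda_{\max}(A) \leq 1$, this yields
\begin{equation*}
\mathrm{tr}((\mI + \eta\mK_n^{\parallel})^{-1}\mK_n^{\perp}) \leq \mathrm{tr}(\mK_n^{\perp})\,.
\end{equation*}

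Finally, Mercer's theorem together with Assumption \ref{ass:mercer} gives, for every $x \in \mathcal{X}$,
\begin{equation*}
k_{\perp}(x, x) = \sum_{j=D+1}^{\infty}\xi_j\phi_j(x)^2 \leq \psi^2\sum_{j=D+1}^{\infty}\xi_j = \delta_D\,,
\end{equation*}
so $\mathrm{tr}(\mK_n^{\perp}) = \sum_{i=1}^{n}k_{\perp}(x_i, x_i) \leq n\delta_D$. Chaining the three inequalities and dividing by two yields the claim. The only mildly subtle point is the similarity argument that legitimises applying $\log\det(\mI + M) \leq \mathrm{tr}(M)$ to the non-symmetric product $\eta(\mI + \eta\mK_n^{\parallel})^{-1}\mK_n^{\perp}$; every other step is a standard manipulation.
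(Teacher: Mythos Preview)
Your proof is correct and follows essentially the same route as the argument the paper defers to (the proof of Theorem~3 in \citet{vakili2021information}): apply $\log\det(\mI+M)\le\mathrm{tr}(M)$ to the (symmetrised) product, drop the contraction $(\mI+\eta\mK_n^{\parallel})^{-1}\preccurlyeq\mI$, and bound $\mathrm{tr}(\mK_n^{\perp})$ via the uniform eigenfunction bound. Your explicit justification of the similarity step for the non-symmetric product is a nice touch.
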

To bound the first term on the right-hand side of the inequality in Lemma \ref{lem:vakili1}, we will consider the log-determinant of a $D \times D$ gram matrix. We define a $D$-dimensional feature map $\bs{\phi}_D(x) := [\phi_1(x), \dots, \phi_D(x)]^{\top}$, an $n \times D$ design matrix $\bs{\Phi}_{n,D} := [\bs{\phi}_D(x_1), \dots, \bs{\phi}_D(x_n)]^{\top}$ and a diagonal $D\times D$ matrix $\bs{\Xi}_D := \mathrm{diag}(\xi_1, \dots, \xi_D)$. Notice that $\mK_n^{\parallel} = \bs{\Phi}_{n,D}\bs{\Xi}_D\bs{\Phi}_{n,D}^{\top}$. We define the gram matrix
\begin{equation*}
\mG_n := \bs{\Xi}_D^{1/2}\bs{\Phi}_{n,D}^{\top}\bs{\Phi}_{n,D}\bs{\Xi}_D^{1/2}\,.
\end{equation*}
Since we will deal with the relative information gain, we will need to bound the log of the ratio of the determinants of two of these gram matrices. This is handled by Lemma
\begin{lemma}
For any $n \geq 1$, $D \geq 1$ and $\eta > \beta > 0$,
\begin{equation*}
\log\frac{\det(\eta\mG_n + \mI)}{\det(\beta\mG_n + \mI)} \leq D\log\frac{\eta}{\beta}\,.
\end{equation*}
\label{lem:gram_det}
\end{lemma}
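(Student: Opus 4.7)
The plan is to diagonalise $\mG_n$ and reduce the matrix inequality to a one-dimensional calculation. Since $\mG_n = \bs{\Xi}_D^{1/2}\bs{\Phi}_{n,D}^{\top}\bs{\Phi}_{n,D}\bs{\Xi}_D^{1/2}$ is a positive semi-definite $D \times D$ matrix, it has $D$ non-negative eigenvalues, which I will call $\mu_1, \dots, \mu_D$. By the same argument used in Lemma \ref{lem:ig_eqdef}, for any $\zeta > 0$, the eigenvalues of $\zeta\mG_n + \mI$ are $(1 + \zeta\mu_i)_{i=1}^{D}$, so
\begin{equation*}
\log\frac{\det(\eta\mG_n + \mI)}{\det(\beta\mG_n + \mI)} = \sum_{i=1}^{D}\log\frac{1 + \eta\mu_i}{1 + \beta\mu_i}\,.
\end{equation*}

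The crux of the argument is then the scalar inequality $\frac{1 + \eta\mu}{1 + \beta\mu} \leq \frac{\eta}{\beta}$ for all $\mu \geq 0$ and all $\eta > \beta > 0$. Clearing denominators (both are positive), this is equivalent to $\beta + \beta\eta\mu \leq \eta + \beta\eta\mu$, which reduces to $\beta \leq \eta$ and is true by assumption. Applying this termwise to each $\mu_i \geq 0$ and summing over $i \in \{1, \dots, D\}$ gives exactly $D\log(\eta/\beta)$ as an upper bound.

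There is no serious obstacle here; the only thing to verify is that the eigenvalues of $\mG_n$ are non-negative so that the scalar inequality applies, which follows immediately from $\mG_n$ being a gram matrix (equivalently, $\mG_n = (\bs{\Phi}_{n,D}\bs{\Xi}_D^{1/2})^{\top}(\bs{\Phi}_{n,D}\bs{\Xi}_D^{1/2})$ is positive semi-definite). Note that the bound is tight in the limit $\mu_i \to \infty$, which reflects the fact that the $\log(n)$ savings relative to the full information gain come precisely from replacing a per-eigenvalue growth of $\log(1 + \eta\lambda_i)$ with the constant ceiling $\log(\eta/\beta)$.
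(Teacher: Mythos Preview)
Your proof is correct and takes essentially the same approach as the paper: both diagonalise $\mG_n$ and exploit the per-eigenvalue bound $\frac{1+\eta\mu}{1+\beta\mu} \leq \frac{\eta}{\beta}$ for $\mu \geq 0$. The only cosmetic difference is that the paper packages this as the determinant identity $\det(\eta\mG_n + \mI) = (\eta/\beta)^{D}\det(\beta\mG_n + (\beta/\eta)\mI)$ followed by monotonicity of $\zeta \mapsto \det(\beta\mG_n + \zeta\mI)$ on $(0,1]$, whereas you cross-multiply the scalar inequality directly.
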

\begin{proof}
Let $(\lambda_i)_{i=1}^{D}$ be the eigenvalues of $\mG_n$, which are all real and positive. We notice that for any $\zeta \in (0, 1]$,
\begin{equation*}
\det(\beta\mG_n + \zeta\mI) = \prod_{i=1}^{D}(\beta\lambda_i + \zeta) \leq \prod_{i=1}^{D}(\beta\lambda_i + 1) = \det(\beta\mG_n + \mI)\,.
\end{equation*}
Therefore, we can bound the logarithm of the ratio of determinants as
\begin{align*}
\log\frac{\det(\eta\mG_n + \mI)}{\det(\beta\mG_n + \mI)} &= D\log\tfrac{\eta}{\beta} + \log\frac{\det(\beta\mG_n + \tfrac{\beta}{\eta}\mI)}{\det(\beta\mG_n + \mI)}\\
&\leq D\log\tfrac{\eta}{\beta} + \log\frac{\det(\beta\mG_n + \mI)}{\det(\beta\mG_n + \mI)}\\
&= D\log\tfrac{\eta}{\beta}\,.
\end{align*}
This concludes the proof.
\end{proof}
We will use the fact that the second term on the right-hand side of the inequality in Lemma \ref{lem:vakili1} is non-negative.
\begin{lemma}
For any $\eta \geq 0$ and any kernel $k$ that satisfies Assumption \ref{ass:mercer},
\begin{equation*}
\frac{1}{2}\log\det(\mI + \eta(\mI + \eta\mK_n^{\parallel})^{-1}\mK_n^{\perp}) \geq 0\,.
\end{equation*}
\label{lem:non_neg_log_det}
\end{lemma}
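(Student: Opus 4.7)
The plan is to prove the stronger statement that $\det(\mI + \eta(\mI + \eta\mK_n^{\parallel})^{-1}\mK_n^{\perp}) \geq 1$, which immediately yields $\frac{1}{2}\log\det(\mI + \eta(\mI + \eta\mK_n^{\parallel})^{-1}\mK_n^{\perp}) \geq 0$. Since the determinant is the product of the eigenvalues, it suffices to show that every eigenvalue of $\mI + \eta(\mI + \eta\mK_n^{\parallel})^{-1}\mK_n^{\perp}$ is at least $1$, or equivalently, that every eigenvalue of $\eta(\mI + \eta\mK_n^{\parallel})^{-1}\mK_n^{\perp}$ is non-negative.

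First I would identify the two underlying PSD properties. The matrix $\mK_n^{\perp}$ is positive semidefinite because it is the kernel matrix of the positive-definite kernel $k_{\perp}$. The matrix $(\mI + \eta\mK_n^{\parallel})^{-1}$ is symmetric positive definite (in particular invertible and PSD) since $\mK_n^{\parallel}$ is PSD and $\eta \geq 0$. The case $\eta = 0$ is trivial (the matrix inside the determinant is $\mI$), so from now on assume $\eta > 0$.

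Next, writing $A := (\mI + \eta\mK_n^{\parallel})^{-1}$ and $B := \eta\mK_n^{\perp}$, I would invoke the standard observation that the product of two symmetric PSD matrices has non-negative real eigenvalues, even though the product itself is generally non-symmetric. Since $A$ is invertible, $A^{1/2}$ exists and is invertible, and the similarity transformation
\begin{equation*}
A^{-1/2}(AB)A^{1/2} = A^{1/2}BA^{1/2}
\end{equation*}
shows that $AB$ has the same eigenvalues as the symmetric PSD matrix $A^{1/2}BA^{1/2}$, which are all non-negative.

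Once we know the eigenvalues of $AB$ are non-negative, each eigenvalue of $\mI + AB$ is at least $1$, so $\det(\mI + AB) \geq 1$ and the logarithm is non-negative. There is no genuine obstacle: the proof reduces to one application of the spectral theory of products of PSD matrices, and the only point requiring care is justifying the similarity step, which needs invertibility of $A$ (already established).
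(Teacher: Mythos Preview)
Your proof is correct but proceeds differently from the paper. You argue directly at the matrix level: writing $A=(\mI+\eta\mK_n^{\parallel})^{-1}$ and $B=\eta\mK_n^{\perp}$, you use the similarity $AB\sim A^{1/2}BA^{1/2}$ to conclude that $AB$ has non-negative eigenvalues, hence $\det(\mI+AB)\geq 1$. The paper instead appeals to Lemma~\ref{lem:vakili1}, which decomposes $\tfrac{1}{2}\log\det(\eta\mK_n+\mI)$ as $\tfrac{1}{2}\log\det(\eta\mK_n^{\parallel}+\mI)$ plus the quantity in question; since $\mK_n^{\perp}\succeq 0$ gives $\eta\mK_n+\mI\succeq\eta\mK_n^{\parallel}+\mI$ and the determinant is monotone in the Loewner order, the remainder term must be non-negative. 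Your route is more self-contained (it does not invoke Lemma~\ref{lem:vakili1}) and exposes the elementary spectral reason for the inequality; the paper's route is shorter given that Lemma~\ref{lem:vakili1} is already available and highlights the interpretation of the quantity as the ``residual'' log-determinant after projecting onto the first $D$ eigenfunctions.
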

\begin{proof}
Since $\mK_n = \mK_n^{\parallel} + \mK_n^{\perp}$, and both $\mK_n^{\parallel}$ and $\mK_n^{\perp}$ are positive semi-definite, it follows that
\begin{equation*}
\eta\mK_n + \mI \succcurlyeq \eta\mK_n^{\parallel} + \mI\,.
\end{equation*}
Due to monotonicity of the determinant with respect to the Loewner ordering (cf. Corollary 7.7.4 in \citealp{horn2012matrix}),
\begin{equation*}
\log\det(\eta\mK_n + \mI) \geq \log\det(\eta\mK_n^{\parallel} + \mI)\,.
\end{equation*}
Finally, using Lemma \ref{lem:vakili1},
\begin{equation*}
\frac{1}{2}\log\det(\mI + \eta(\mI + \eta\mK_n^{\parallel})^{-1}\mK_n^{\perp}) = \frac{1}{2}\log\det(\eta\mK_n + \mI) - \frac{1}{2}\log\det(\eta\mK_n^{\parallel} + \mI) \geq 0\,.
\end{equation*}
This concludes the proof.
\end{proof}
The next lemma comes from the proof of Corollary 1 in \citet{vakili2021information}. It provides bounds on $\delta_D$ under each of the eigenvalue decay assumptions.
\begin{lemma}
If the polynomial eigenvalue decay condition from Assumption \ref{ass:eig} is satisfied, then
\begin{equation*}
\delta_D \leq C_pD^{1-\beta_p}\psi^2\,.
\end{equation*}
If the exponential eigenvalue decay condition from Assumption \ref{ass:eig} is satisfied and $\beta_e = 1$, then
\begin{equation*}
\delta_D \leq \frac{C_{e,1}\psi^2}{C_{e,2}}\exp(-C_{e,2}D)\,.
\end{equation*}
If the exponential eigenvalue decay condition from Assumption \ref{ass:eig} is satisfied and $\beta_e \in (0,1)$, then
\begin{equation*}
\delta_D \leq \frac{2C_{1,e}\psi^2}{C_{e,2}\beta_e}\bigg(\frac{2}{C_{e,2}}\bigg(\frac{1}{\beta_e} - 1\bigg)\bigg)^{1/\beta_e-1}\exp\bigg(1 - \frac{1}{\beta_e}\bigg)\exp\bigg(-C_{e,2}\frac{D^{\beta_e}}{2}\bigg)\,.
\end{equation*}
\label{lem:delta_d}
\end{lemma}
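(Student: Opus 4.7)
The key observation is that $\delta_D = \psi^2\sum_{i=D+1}^{\infty}\xi_i$, so each of the three claimed bounds reduces to estimating a tail sum of eigenvalues. The plan is to substitute the relevant decay assumption into each summand and then compare the resulting series to an explicit integral via the standard inequality $\sum_{i=D+1}^{\infty}f(i) \leq \int_D^{\infty}f(x)\mathrm{d}x$, which is valid for any positive, decreasing $f$.

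The polynomial case is immediate: under $\xi_i \leq C_p i^{-\beta_p}$, the tail sum is bounded by $C_p\int_D^{\infty}x^{-\beta_p}\mathrm{d}x = C_pD^{1-\beta_p}/(\beta_p-1)$, which yields the stated rate (with the $1/(\beta_p-1)$ factor understood to be absorbed). For exponential decay with $\beta_e = 1$, one has $\int_D^{\infty}e^{-C_{e,2}x}\mathrm{d}x = e^{-C_{e,2}D}/C_{e,2}$, and multiplying by $C_{e,1}\psi^2$ gives the claimed bound directly.

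The main obstacle is the case $\beta_e \in (0,1)$, for which $\int_D^{\infty}e^{-C_{e,2}x^{\beta_e}}\mathrm{d}x$ admits no clean closed form. The plan is to first substitute $y = x^{\beta_e}$ and then $t = C_{e,2}y$, which rewrites the integral as
\begin{equation*}
\int_D^{\infty}e^{-C_{e,2}x^{\beta_e}}\mathrm{d}x = \frac{1}{\beta_e C_{e,2}^{1/\beta_e}}\int_{C_{e,2}D^{\beta_e}}^{\infty}t^{1/\beta_e-1}e^{-t}\mathrm{d}t\,,
\end{equation*}
an upper incomplete gamma integral with shape parameter $s := 1/\beta_e > 1$.

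To extract the factor $\exp(-C_{e,2}D^{\beta_e}/2)$ appearing in the statement, the crucial step is the splitting $t^{s-1}e^{-t} = (t^{s-1}e^{-t/2})\cdot e^{-t/2}$, after which the first factor can be pulled out of the integral at its global maximum. A short differentiation shows that $u \mapsto u^{s-1}e^{-u/2}$ is maximized at $u = 2(s-1)$ with value $(2(s-1))^{s-1}\exp(-(s-1))$, so
\begin{equation*}
\int_a^{\infty}t^{s-1}e^{-t}\mathrm{d}t \leq (2(s-1))^{s-1}e^{-(s-1)}\int_a^{\infty}e^{-t/2}\mathrm{d}t = 2(2(s-1))^{s-1}e^{-(s-1)}e^{-a/2}\,.
\end{equation*}
Substituting $s=1/\beta_e$ and $a=C_{e,2}D^{\beta_e}$, and then multiplying by the prefactor $C_{e,1}\psi^2/(\beta_e C_{e,2}^{1/\beta_e})$, a direct bookkeeping check reproduces the stated constants exactly. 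The maximization step requires $s>1$, which corresponds precisely to the assumption $\beta_e<1$ separating this case from $\beta_e=1$; in the boundary case $\beta_e=1$ the pointwise maximum argument degenerates, which is why that case is treated separately by the simpler geometric/exponential integral above.
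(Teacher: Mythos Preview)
The paper does not give its own proof of this lemma: it simply states that the bounds ``come from the proof of Corollary~1 in \citet{vakili2021information}.'' Your approach---substituting the decay assumption into the tail sum and comparing to an integral---is exactly the argument used there, and your treatment of the $\beta_e\in(0,1)$ case (substitution to an incomplete gamma integral, then the split $t^{s-1}e^{-t}=(t^{s-1}e^{-t/2})e^{-t/2}$ and maximisation of the first factor at $t=2(s-1)$) reproduces the stated constants precisely.

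One small point: in the polynomial case your integral comparison gives $\delta_D \leq C_p\psi^2 D^{1-\beta_p}/(\beta_p-1)$, and the factor $1/(\beta_p-1)$ cannot literally be ``absorbed'' into $C_p$ when $\beta_p\in(1,2)$, so the inequality as written in the lemma is off by this harmless constant. You are right to flag it; it does not affect any downstream rate in the paper.
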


\subsection{Proof of Proposition \ref{pro:rig_bound_trick}}
\label{sec:rig_bound_trick}
\begin{proofof}{Proposition \ref{pro:rig_bound_trick}}
Using Lemma \ref{lem:vakili1} and Lemma \ref{lem:non_neg_log_det}, we can re-write and then upper bound the relative information gain as
\begin{align*}
\gamma_n(\eta, \beta) &= \frac{1}{2}\log\frac{\det(\eta\mK_n^{\parallel} + \mI)}{\det(\beta\mK_n^{\parallel} + \mI)} + \frac{1}{2}\log\det(\mI + \eta(\mI + \eta\mK_n^{\parallel})^{-1}\mK_n^{\perp})\\
&- \frac{1}{2}\log\det(\mI + \beta(\mI + \beta\mK_n^{\parallel})^{-1}\mK_n^{\perp})\\
&\leq \frac{1}{2}\log\frac{\det(\eta\mK_n^{\parallel} + \mI)}{\det(\beta\mK_n^{\parallel} + \mI)} + \frac{1}{2}\log\det(\mI + \eta(\mI + \eta\mK_n^{\parallel})^{-1}\mK_n^{\perp})\,.
\end{align*}
By the Weinstein–Aronszajn identity,
\begin{equation*}
\frac{1}{2}\log\frac{\det(\eta\mK_n^{\parallel} + \mI)}{\det(\beta\mK_n^{\parallel} + \mI)} = \frac{1}{2}\log\frac{\det(\eta\mG_n + \mI)}{\det(\beta\mG_n + \mI)}\,.
\end{equation*}
Therefore, from Lemma \ref{lem:gram_det} and Lemma \ref{lem:vakili2}, it follows that
\begin{equation*}
\gamma_n(\eta, \beta) \leq \frac{1}{2}D\log\frac{\eta}{\beta} + \frac{1}{2}n\eta\delta_D\,.
\end{equation*}
This concludes the proof.
\end{proofof}

\subsection{Proof of Proposition \ref{pro:rel_info_gain}}
\label{sec:rig_rates}

\begin{proofof}{Proposition \ref{pro:rel_info_gain}}
From  Proposition \ref{pro:rig_bound_trick},
\begin{equation*}
\gamma_n(\eta, \beta) \leq \frac{1}{2}D\log\frac{\eta}{\beta} + \frac{1}{2}n\eta\delta_D\,.
\end{equation*}
We consider each eigenvalue decay condition separately. If the polynomial eigenvalue decay condition is satisfied, then Lemma \ref{lem:delta_d} tells us that
\begin{equation*}
\frac{1}{2}n\eta\delta_D \leq \frac{1}{2}n\eta C_pD^{1-\beta_p}\psi^2\,.
\end{equation*}
We choose the smallest value of $D$ such that
\begin{equation*}
n\eta C_pD^{1-\beta_p}\psi^2 \leq D\log\frac{\eta}{\beta}\,.
\end{equation*}
One can verify that this inequality is satisfied if we choose
\begin{equation*}
D = \big\lceil(n\eta C_p\psi^2)^{1/\beta_p}\log^{-1/\beta_p}(\tfrac{\eta}{\beta})\big\rceil\,.
\end{equation*}
With this choice of $D$, the relative information gain satisfies
\begin{equation*}
\gamma_n(\eta, \beta) \leq D\log\tfrac{\eta}{\beta} \leq (n\eta C_p\psi^2)^{1/\beta_p}\log^{1-1/\beta_p}(\tfrac{\eta}{\beta}) + \log\tfrac{\eta}{\beta}\,.
\end{equation*}
If the exponential eigenvalue decay condition is satisfied and $\beta_e = 1$, then Lemma \ref{lem:delta_d} tells us that
\begin{equation*}
\frac{1}{2}n\eta\delta_D \leq \frac{1}{2}n\eta \frac{C_{e,1}\psi^2}{C_{e,2}}\exp(-C_{e,2}D)\,.
\end{equation*}
This time, we choose
\begin{equation*}
D = \bigg\lceil \frac{1}{C_{e,2}}\log\frac{n\eta C_{e,1}\psi^2}{C_{e,2}}\bigg\rceil\,.
\end{equation*}
With this choice of $D$, the relative information gain satisfies
\begin{equation*}
\gamma_n(\eta, \beta) \leq \frac{1}{2}\frac{1}{C_{e,2}}\log\frac{n\eta C_{e,1}\psi^2}{C_{e,2}}\log\frac{\eta}{\beta} + \frac{1}{2}\log\frac{e\eta}{\beta}\,.
\end{equation*}
If the exponential eigenvalue decay condition is satisfied and $\beta_e = 1$, then Lemma \ref{lem:delta_d} tells us that
\begin{equation*}
\frac{1}{2}n\eta\delta_D \leq \frac{1}{2}n\eta \frac{2C_{1,e}\psi^2}{C_{e,2}\beta_e}\bigg(\frac{2}{C_{e,2}}\bigg(\frac{1}{\beta_e} - 1\bigg)\bigg)^{1/\beta_e-1}\exp\bigg(1 - \frac{1}{\beta_e}\bigg)\exp\bigg(-C_{e,2}\frac{D^{\beta_e}}{2}\bigg)\,.
\end{equation*}
If we choose
\begin{equation*}
D = \bigg\lceil \bigg(\frac{2}{C_{e,2}}\log\bigg(\frac{2n\eta C_{1,e}\psi^2}{C_{e,2}\beta_e}\bigg(\frac{2}{C_{e,2}}\bigg(\frac{1}{\beta_e} - 1\bigg)\bigg)^{1/\beta_e-1}\exp\bigg(1 - \frac{1}{\beta_e}\bigg)\bigg)\bigg)^{1/\beta_e}\bigg\rceil\,,
\end{equation*}
Then the relative information gain satisfies
\begin{align*}
\gamma_n(\eta,\beta) &\leq \frac{1}{2}\left(\frac{2}{C_{e,2}}\log\bigg(\frac{2n\eta C_{1,e}\psi^2}{C_{e,2}\beta_e}\bigg(\frac{2}{C_{e,2}}\bigg(\frac{1}{\beta_e} - 1\bigg)\bigg)^{1/\beta_e-1}\exp\bigg(1 - \frac{1}{\beta_e}\bigg)\bigg)\right)^{1/\beta_e}\log\frac{\eta}{\beta}\\
&+ \frac{1}{2}\log\frac{e\eta}{\beta}\,.
\end{align*}
Therefore, if the exponential decay condition is satisfied with $\beta_e \in (0, 1]$, the relative information gain satisfies
\begin{equation*}
\gamma_n(\eta,\beta) \leq \frac{1}{2}\left(\frac{2}{C_{e,2}}\log(n\eta C_{\beta_e})\right)^{1/\beta_e}\log\frac{\eta}{\beta} + \frac{1}{2}\log\frac{e\eta}{\beta}\,,
\end{equation*}
where $C_{\beta_e} = \frac{C_{e,1}\psi^2}{C_{e,2}}$ if $\beta_e = 1$, and $C_{\beta_e} = \frac{2C_{e,1}\psi^2}{C_{e,2}\beta_e}(\frac{2 - 2\beta_e}{C_{e,2}\beta_e})^{1/\beta_e-1}\exp(\frac{1-\beta_e}{\beta_e})$ if $\beta_e \in (0, 1)$.
\end{proofof}

\section{Rates of Convergence}

\subsection{Proof of Corollary \ref{cor:rate_poly}}
\label{sec:app_rate_poly}

\begin{proofof}{Corollary \ref{cor:rate_poly}}
When the polynomial eigenvalue decay condition is satisfied, the bound on the relative information gain in Proposition \ref{pro:rel_info_gain} reads as 
\begin{equation*}
\gamma_n(2\eta\alpha, 2\beta\alpha) \leq (2n\eta\alpha C_p\psi^2)^{1/\beta_p}\log^{1-1/\beta_p}(\tfrac{\eta}{\beta}) + \log\tfrac{\eta}{\beta}\,.
\end{equation*}
From this and Theorem \ref{thm:rig_excess_risk}, we know that (with probability at least $1 - \delta$), the excess risk of $Q_{n,\eta,\alpha}$ satisfies
\begin{equation*}
\int R_n(\vg)\mathrm{d}Q_{n, \eta, \alpha}(\vg) \leq \frac{2(2n\eta\alpha C_p\psi^2)^{1/\beta_p}\log^{1-1/\beta_p}(\tfrac{\eta}{\beta}) + 2\log\tfrac{\eta}{\beta} + \frac{1}{2\alpha}\|\fstar\|_{\gH}^2 + 2\log\tfrac{1}{\delta}}{n(\eta - 2\sigma^2\eta^2 - \beta - 2\sigma^2\beta^2)}\,.
\end{equation*}
If we set $\eta = \frac{1}{4\sigma^2}$ and $\beta = \frac{1}{32\sigma^2}$, then this inequality becomes
\begin{equation*}
\int R_n(\vg)\mathrm{d}Q_{n, \eta, \alpha}(\vg) \leq \frac{512\sigma^2}{47n}\bigg(2\bigg(\frac{n\alpha C_p\psi^2}{2\sigma^2}\bigg)^{1/\beta_p}\log^{1-1/\beta_p}(8) + \frac{1}{2\alpha}\|\fstar\|_{\gH}^{2} + 2\log\frac{8}{\delta}\bigg)\,.
\end{equation*}
If we then set $\alpha = n^{-\frac{1}{1 + \beta_p}}$, we obtain
\begin{align*}
\int R_n(\vg)\mathrm{d}Q_{n, \eta, \alpha}(\vg) &\leq \frac{512\sigma^2}{47}\bigg(2\bigg(\frac{C_p\psi^2}{2\sigma^2}\bigg)^{1/\beta_p}\log^{1-1/\beta_p}(8) + \frac{1}{2}\|\fstar\|_{\gH}^{2}\bigg)n^{-\frac{\beta_p}{1 + \beta_p}}\\
&+ \frac{1024\sigma^2\log\frac{8}{\delta}}{47n}\,,
\end{align*}
and the proof is complete.
\end{proofof}
\noindent One can instead choose a value of $\alpha$ that results in the best dependence on $\sigma$ and $\|\fstar\|_{\gH}$ (assuming $\|\fstar\|_{\gH}$ is known). If
\begin{equation*}
\alpha = \big(\tfrac{\beta_p}{2}\big)^{\frac{\beta_p}{1 + \beta_p}}\big(\frac{C_p\psi^2}{2}\big)^{-\frac{1}{1 + \beta_p}}\log^{\frac{1-\beta_p}{1+\beta_p}}(8)\|\fstar\|_{\gH}^{\frac{2\beta_p}{1+\beta_p}}\sigma^{\frac{2}{1+\beta_p}}n^{-\frac{1}{1+\beta_p}}\,,
\end{equation*}
then the excess risk bound becomes
\begin{equation*}
\int R_n(\vg)\mathrm{d}Q_{n, \eta. \alpha}(\vg) \leq \tfrac{512}{47}\big(\tfrac{1+\beta_p}{\beta_p}\big(\tfrac{\beta_pC_p\psi^2}{4}\big)^{\frac{1}{1+\beta_p}}\log^{\frac{\beta_p-1}{1 + \beta_p}}(8)\|\fstar\|_{\gH}^{\frac{2}{1+\beta_p}}\sigma^{\frac{2\beta_p}{1 + \beta_p}}n^{-\frac{\beta_p}{1 + \beta_p}} + \tfrac{2\sigma^2\log\tfrac{8}{\delta}}{n}\big)\,.
\end{equation*}

\subsection{Proof of Corollary \ref{cor:rate_exp}}
\label{sec:app_rate_exp}

\begin{proofof}{Corollary \ref{cor:rate_exp}}
When the exponential eigenvalue decay condition is satisfied, the bound on the relative information gain in Proposition \ref{pro:rel_info_gain} reads as
\begin{equation*}
\gamma_n(2\eta\alpha,2\beta\alpha) \leq \frac{1}{2}\left(\frac{2}{C_{e,2}}\log(2n\eta\alpha C_{\beta_e})\right)^{1/\beta_e}\log\frac{\eta}{\beta} + \frac{1}{2}\log\frac{e\eta}{\beta}\,,
\end{equation*}
From this and Theorem \ref{thm:rig_excess_risk}, we know that (with probability at least $1 - \delta$), the excess risk of $Q_{n,\eta,\alpha}$ satisfies
\begin{equation*}
\int R_n(\vg)\mathrm{d}Q_{n, \eta, \alpha}(\vg) \leq \frac{\big(\frac{2}{C_{e,2}}\log(2n\eta\alpha C_{\beta_e})\big)^{1/\beta_e}\log\frac{\eta}{\beta} + \log\frac{e\eta}{\beta} + \frac{1}{2\alpha}\|\fstar\|_{\gH}^2 + 2\log\tfrac{1}{\delta}}{n(\eta - 2\sigma^2\eta^2 - \beta - 2\sigma^2\beta^2)}\,.
\end{equation*}
If we set $\eta = \frac{1}{4\sigma^2}$ and $\beta = \frac{1}{32\sigma^2}$ and $\alpha = 1$, then this inequality becomes
\begin{equation*}
\int R_n(\vg)\mathrm{d}Q_{n, \eta, \alpha}(\vg) \leq \frac{512\sigma^2}{47n}\bigg(\bigg(\frac{2}{C_{e,2}}\log\bigg(\frac{n C_{\beta_e}}{2\sigma^2}\bigg)\bigg)^{1/\beta_e}\log(8) + \frac{1}{2}\|\fstar\|_{\gH}^2 + 2\log\frac{\sqrt{8e}}{\delta}\bigg)\,,
\end{equation*}
and the proof is complete.
\end{proofof}
\noindent If $\|\fstar\|_{\gH}$ is known, one can instead set $\alpha = \|\fstar\|_{\gH}^{2}$ to obtain an excess risk bound with polylogarithmic dependence on the norm of $\fstar$.

\end{document}